\documentclass[dvipsnames]{article}

\PassOptionsToPackage{numbers, compress}{natbib}
\usepackage{sty/iclr2027/iclr2027_conference,times}
\iclrfinalcopy

\usepackage[utf8]{inputenc} % allow utf-8 input
\usepackage[T1]{fontenc}    % use 8-bit T1 fonts
\usepackage{hyperref}       % hyperlinks
\usepackage{url}            % simple URL typesetting
\usepackage{booktabs}       % professional-quality tables
\usepackage{amsfonts}       % blackboard math symbols
\usepackage{nicefrac}       % compact symbols for 1/2, etc.
\usepackage{microtype}      % microtypography
\usepackage[table]{xcolor}         % colors

\usepackage{subfigure} % subfigure
\usepackage{multirow}
\usepackage{makecell}
\usepackage{algorithm}
\usepackage[noend]{algpseudocode}
\usepackage{seqsplit}
\usepackage{minibox}
\usepackage{booktabs}       % professional-quality tables
\usepackage{amsfonts}       % blackboard math symbols
\usepackage{nicefrac}       % compact symbols for 1/2, etc.
\usepackage{amssymb}
\usepackage{enumitem} % leftmargin
\usepackage{wrapfig}
\usepackage{xspace}

\usepackage[normalem]{ulem}

\usepackage{tikz}
\usetikzlibrary{shapes,backgrounds,arrows,fit,calc,patterns}

\newcommand{\ie}{\emph{i.e.,}~}
\newcommand{\eg}{\emph{e.g.,}~}

\usepackage{pifont}
\newcommand{\cmark}{{\color{ForestGreen}\ding{51}}} % check mark
\newcommand{\xmark}{{\color{red}\ding{55}}} % x mark

\newcommand{\para}[1]{\textbf{#1}}
\usepackage{amsmath} % http://ctan.org/pkg/amsmath
\usepackage{bm} % for bold symbols like \bm{\alpha} or {\bm \alpha\beta}
\usepackage{bbm} % for mathbbm
\usepackage{amsthm}
\usepackage{mathtools}

\newcommand{\ep}{\varepsilon}
\renewcommand{\epsilon}{\ep}

\makeatletter
\@ifundefined{proposition}{%
    \newtheorem{proposition}{Proposition}
}{}
\@ifundefined{definition}{%
    \newtheorem{definition}{Definition}%
}{}
\@ifundefined{lemma}{%
    \newtheorem{lemma}{Lemma}
}{}
\@ifundefined{corollary}{%
    \newtheorem{corollary}{Corollary}
}{}
\@ifundefined{theorem}{%
    
}{}
\@ifundefined{corollary}{%
    
}{}
\@ifundefined{assumption}{%
    
}{}
\@ifundefined{problem}{%
    
}{}
\@ifundefined{problem*}{%
    \newtheorem*{problem*}{Problem}
}{}
\@ifundefined{claim}{%
    
}{}
\@ifundefined{example}{%
    
}{}
\@ifundefined{implication}{%
    
}{}
\@ifundefined{remark*}{%
    \newtheorem*{remark*}{Remark}
}{}
\makeatother

\newcommand{\argmin}{\operatorname*{arg\,min}}
\newcommand{\argmax}{\operatorname*{arg\,max}}

\newtheorem{innercustomgeneric}{\customgenericname}
\providecommand{\customgenericname}{}
\newcommand{\newcustomtheorem}[2]{%
  \newenvironment{#1}[1]
  {%
   \renewcommand\customgenericname{#2}%
   \renewcommand\theinnercustomgeneric{##1}%
   \innercustomgeneric
  }
  {\endinnercustomgeneric}
}
\newcustomtheorem{customclaim}{Claim}

\providecommand{\realnum}					{\mathbb{R}}

\renewcommand{\]}						{\right]}

\providecommand{\Exp}{\mathbbm{E}}

\def\e{\mathbf{e}}

\def\r{\mathbf{r}}

\def\w{\mathbf{w}}
\def\x{\mathbf{x}}
\def\y{\mathbf{y}}

\def\J{\mathbf{J}}

\def\gh{\hat{{g}}}

\def\As{\mathcal{{A}}}

\def\Ds{\mathcal{{D}}}

\def\Ps{\mathcal{{P}}}

\def\Vs{\mathcal{{V}}}
\def\Ws{\mathcal{{W}}}
\def\Xs{\mathcal{{X}}}
\def\Ys{\mathcal{{Y}}}

\def\alphaB{{\bm{\alpha}}}

\def\thetaB{{\bm{\theta}}}

\def\muB{{\bm{\mu}}}


\newcommand{\ours}{\texttt{MAESTRO}\xspace}

\newcommand{\std}[1]{{\scriptsize\color{gray}$\pm#1$}}
\title{Which Preferences to Train On? End-to-End Multi-Objective Alignment with an Adversarial Preference Distribution}
\author{%
  Minjae Lee
  \\
  GSAI
  \\
  POSTECH
  \\
  \texttt{minjae.lee@postech.ac.kr} \\
  \And
  Kyunghyun Cho
  \\
  Center for Data Science
  \\
  New York University
  \\
  \texttt{kyunghyun.cho@nyu.edu} \\
  \And
  Sangdon Park
  \\
  GSAI \& CSE
  \\
  POSTECH
  \\
  \texttt{sangdon@postech.ac.kr} 
}
\begin{document}

\maketitle
\lhead{Preprint} 
% \begin{NoHyper}
% \def\thefootnote{*}\footnotetext{Equal contribution}
% \end{NoHyper}
%\def\thefootnote{\arabic{footnote}}

\begin{abstract}
Aligning large language models (LLMs) with human values is important for safe, efficient, and beneficial AI deployment. However, human values are multifaceted: helpfulness, harmlessness and humor trade off against one another, and different users want different trade-offs.
Multi-objective alignment (MOA) addresses this by training a policy that can provide any point of the Pareto front, but existing methods either train one model per preference, interpolate a few separately aligned experts post hoc, or train a single conditioned model without considering which preferences it should be trained on.
Since the hard regions of the preference simplex depend on the objectives at hand, existing methods leave them under-trained and do not get the most out of a single model.
% Therefore, we propose \ours (\textbf{M}ulti-objective \textbf{A}lignment via \textbf{E}nd-to-end \textbf{ST}eering and \textbf{R}obust \textbf{O}ptimization), which trains a single prompt-conditioned policy end-to-end with RL against an \emph{adversarial} preference distribution: a Dirichlet distribution updated by online mirror descent toward the preferences the current policy serves worst, rather than on a fixed distribution.
Therefore, we propose \ours (\textbf{M}ulti-objective \textbf{A}lignment via \textbf{E}nd-to-end \textbf{ST}eering and \textbf{R}obust \textbf{O}ptimization), which formulates MOA as a minimax problem over preference distributions and trains a single prompt-conditioned policy end-to-end with RL against an \emph{adversarial} preference distribution: a Dirichlet distribution updated by online mirror descent toward the preferences the current policy serves worst, rather than on a fixed one.
% Original: On HH-RLHF, BeaverTails and summarization, with up to three objectives, \ours attains the best Pareto front on most tasks in a single training run, at the lowest training cost among all methods.
On HH-RLHF, BeaverTails and a summarization task, with up to three objectives, \ours attains the best Pareto front on most tasks in a single training run, at the lowest training cost among the compared methods.
The largest margins appear in the hard regions that a fixed preference distribution leaves under-trained, confirming that a single prompt-conditioned model is capable of covering the objective trade-offs on its own.
\end{abstract}

\vspace{-1.5ex}
\section{Introduction}
\vspace{-1.5ex}
\label{sec:introduction}

Aligning large language models (LLMs) with human values is a central challenge for their safe and beneficial deployment.
Reinforcement learning from human feedback (RLHF) \citep{ouyang2022training}, \eg offline methods such as DPO \citep{rafailov2023direct} and online policy-gradient methods such as GRPO \citep{shao2024deepseekmath}, has emerged as the dominant paradigm for this purpose.
% However, human values are inherently multifaceted, \eg helpfulness, harmlessness, humor, and beyond, and they conflict,
% Optimizing a single reward signal fails to capture their full complexity,
However, human values are inherently multifaceted, \eg helpfulness, harmlessness, humor, and beyond, and they conflict \citep{bai2022training}, so optimizing a single reward signal fails to capture their full complexity.
This motivates \emph{Multi-Objective Alignment} (MOA) \citep{li2020deep,zhong2024panacea,zhou2024beyond,yang2024rewards,xie2025bone,shi2024decoding,li2025self}, where multiple objectives represent different facets of human value,
% and the goal is to learn a policy that provides a set of solutions with different preference trade-offs.
% Since providing all possible solutions is infeasible, the ideal solution is a policy that can be steered to any preference vector at inference time, tracing the entire \emph{Pareto front}---the set of trade-offs from which no objective can be improved without sacrificing another.
% Steering here means only that changing the stated preference moves the policy's returns along that front in the intended direction, as in prior multi-objective work \citep{wang2024conditional,dong2023steerlm}; it involves no intervention on activations or weights.
and the goal is to train a policy that can be \emph{steered}\footnote{As in prior multi-objective work \citep{wang2024conditional,dong2023steerlm}, \emph{steering} here is conditioning on a stated preference, not intervening on activations or weights.} at inference time to any preference over objectives, tracing the \emph{Pareto front}---the set of trade-offs from which no objective can be improved without sacrificing another.
% \MJ{need?}
% Original: See Table~\ref{tab:qualitative} for example.
See Table~\ref{tab:qualitative} for an example.

MOA inherits a problem that predates language models: multi-objective optimization \citep{miettinen1999nonlinear} and its sequential-decision form, multi-objective reinforcement learning (MORL) \citep{white1982multi,hayes2022practical}, already separate a policy for a known preference from a coverage set spanning all of them.
Linear scalarization carries this literature into RLHF as \emph{multi-objective RLHF} (MORLHF) \citep{li2020deep}, collapsing the reward vector into a single scalar reward that RLHF then optimizes \citep{wu2023fine}, a formulation later work adopts as the reference for MOA \citep{zhou2024beyond,rame2023rewarded}.
It therefore requires a separate model for every preference vector, which is computationally prohibitive and intractable for LLMs.
% Post-hoc interpolation methods \citep{zhou2024beyond,rame2023rewarded,xie2025bone,shi2024decoding} train $M$ backbone experts and leverage them at inference time, but they are not able to capture intermediate preference regions, preventing coverage of the full Pareto front.
Post-hoc interpolation methods \citep{zhou2024beyond,rame2023rewarded,xie2025bone,shi2024decoding} train $M$ backbone experts and leverage them at inference time, but they rely on an assumed linearity in weight space \citep{rame2023rewarded,xie2025bone} or in logit space \citep{shi2024decoding}, rather than on optimization over the preference space itself.
% Moved to rel.tex (Single-Model Approaches): {\color{red}Conditioning one model on the preference vector, the remaining option, is itself an old idea in MORL, where the weight enters the value function or the approximator's input \citep{castelletti2011multi} and, later, deep MORL and hypernetwork amortizations of the whole front \citep{yang2019generalized,navon2021learning}; our prompt prefix is the language-model form of that input conditioning.}
% Single-model approaches \citep{zhong2024panacea,yang2024rewards,dong2023steerlm} enable zero-shot adaptation via a single model, but often require a specific architecture \citep{zhong2024panacea}, or rely on preference-to-reward mappings \citep{yang2024rewards,dong2023steerlm} that are not jointly optimized with the policy.
% {\color{red}Such mappings also presuppose the range of each reward: RiC \citep{yang2024rewards} needs per-objective bounds to turn a preference into target reward tokens, and MO-ODPO \citep{gupta2025robust} likewise rescales each reward to $[0,1]$ before weighting, so both must be recalibrated whenever an objective or a reward model changes.}
% Single-model approaches \citep{zhong2024panacea,yang2024rewards,dong2023steerlm} enable zero-shot adaptation via a single model, but often require a specific architecture \citep{zhong2024panacea}, or rely on preference-to-reward mappings \citep{yang2024rewards,dong2023steerlm} that presuppose the range of each reward.
Some approaches instead serve the whole simplex with one model: RiC \citep{yang2024rewards} and SteerLM \citep{dong2023steerlm} write reward values into the prompt and fit them by supervised fine-tuning, which presupposes the range of each reward.

\begin{wrapfigure}{r}{0.5\textwidth}
    \centering
    \vspace{-1.2em}
    \subfigure{
        \centering
        \includegraphics[width=0.23\textwidth, clip, trim=0.5cm 0.5cm 0.3cm 0.3cm]{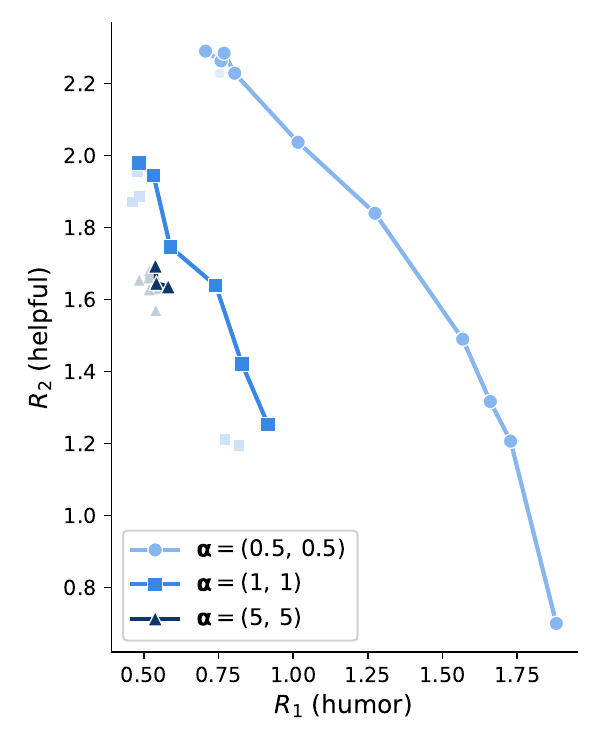}
        % \vspace{-5em}
    }
    % \hfill
    \hspace{-2ex}
    \subfigure{
        \centering
        \includegraphics[width=0.23\textwidth, clip, trim=0.5cm 0.5cm 0.3cm 0.3cm]{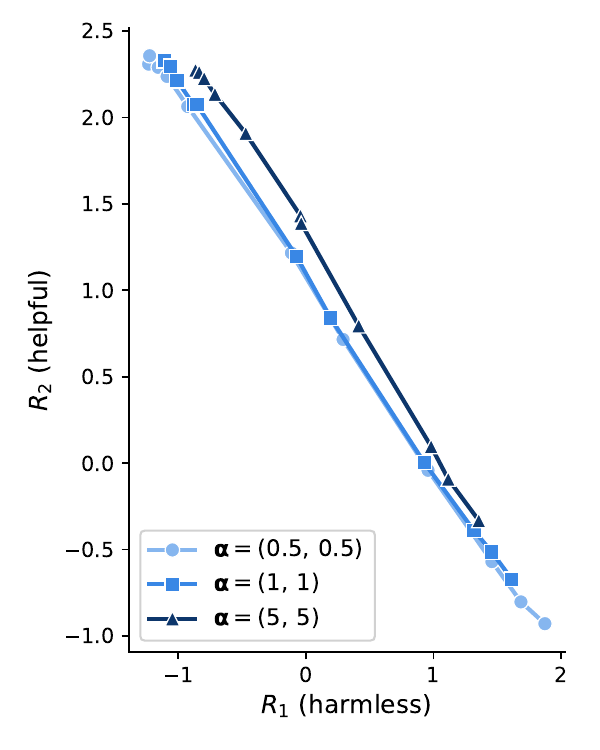}
    }
    \vspace{-1.8ex}
    \caption{
        {Pareto fronts under preference sampling $\w\sim\mathrm{Dir}(\alphaB)$. Center-concentrated sampling ($\alphaB = (5,5)$) suffices on Harmless--Helpful but collapses the front on Humor--Helpful, where edge sampling ($\alphaB = (0.5,0.5)$) recovers it.}
        % \MJ{without sys}
        % \textbf{(Humor--Helpful)}: center-concentrated sampling ($\alpha{=}5$) severely degrades the front, while edged sampling ($\alpha{=}0.5$) recovers it.
        % \textbf{(Harmless--Helpful)}: the exactly opposite pattern holds.
        % \MJ{TODO: change color, wrap, remove number.}
    }
    \label{fig:motivation_sampling}
    \vspace{-2ex}
\end{wrapfigure}
Panacea \citep{zhong2024panacea} and MO-ODPO \citep{gupta2025robust} also train a single model but on the preferences end-to-end, drawing them from a distribution over the simplex.
However, the distribution is fixed in advance, so it leaves the hardest preference regions under-trained, and which regions are hard is not known before training; prompt-conditioned policies are also reported to be sensitive to this choice \citep{gupta2025robust,wang2024conditional}.
% Figure~\ref{fig:motivation_sampling} varies it, sampling $\w$ from the symmetric Dirichlet distribution over the simplex $\mathrm{Dir}(\alphaB)$, whose concentration $\alpha$ places mass near the vertices when $\alpha<1$ and near the center when $\alpha>1$; the effect even reverses between objective pairs, and the hard region moves as the policy improves, so no fixed choice can track it; % Moved to the contributions paragraph below: \ours therefore replaces the fixed distribution with an adversarial one, as in distributionally robust optimization \citep{nemirovski2009robust,sagawa2020distributionally} and in minimax treatments of multi-objective problems whose worst-case solution is itself a Pareto point \citep{martinez2020minimax,park2024the}.
Figure~\ref{fig:motivation_sampling} varies it, sampling $\w$ from the symmetric Dirichlet distribution over the simplex $\mathrm{Dir}(\alphaB)$ with $\alphaB=\alpha\mathbf{1}$,
which places mass near the vertices when $\alpha<1$, the center when $\alpha>1$, and uniformly when $\alpha=1$.
Which concentration works changes between objective pairs, and the hard region would move as the policy improves, so no fixed choice can track it.
% Moved to the contributions paragraph below: \ours therefore replaces the fixed distribution with an adversarial one, as in distributionally robust optimization \citep{nemirovski2009robust,sagawa2020distributionally} and in minimax treatments of multi-objective problems whose worst-case solution is itself a Pareto point \citep{martinez2020minimax,park2024the}.
% }
These gaps raise our central research question: 
\emph{"How can a single preference-conditioned LLM improve Pareto front coverage and robustness across the preference simplex?"}

To mitigate this, we propose \ours, a novel framework for \textbf{M}ulti-objective \textbf{A}lignment via \textbf{E}nd-to-end \textbf{ST}eering and \textbf{R}obust \textbf{O}ptimization.
\ours conditions the policy on the preference vector by writing it into the prompt and trains it end-to-end with RL on the linearly scalarized reward, which requires no extra architecture, no model per preference and no prescribed reward ranges.
% Then, we formulate MOA as a minimax optimization over preference distributions, as in distributionally robust optimization \citep{nemirovski2009robust,sagawa2020distributionally} and in minimax treatments of multi-objective problems whose worst-case solution is itself a Pareto point \citep{martinez2020minimax,park2024the}: rather than fixing one in advance, \ours trains against an adversary, a Dirichlet distribution updated by online mirror descent that shifts training toward the preferences the current policy serves worst, so that a single run optimizes a smoothed worst case over the simplex rather than an average.
Then, we formulate MOA as a minimax optimization over preference distributions \citep{martinez2020minimax,park2024the}, and train the policy against an adversary, a Dirichlet distribution updated by online mirror descent toward the preferences it serves worst, rather than fixing the distribution in advance.
% \MJ{The wording above deliberately distinguishes the ideal minimax payoff from the implemented per-objective rank surrogate; Appendix~\ref{apdx:layer3} already calls the latter a heuristic.}
% , \ours trains against an adversary, a Dirichlet distribution updated by online mirror descent that shifts training toward the preferences the current policy serves worst, so that a single run optimizes a smoothed worst case over the simplex rather than an average.
% Since the adversary's gradient multiplies the reward, rewards on different scales would let one objective dominate its update, so the adversary is given a per-objective rank statistic of the rewards while the policy is still trained on the raw reward.
% Empirically, on HH-RLHF, BeaverTails and a summarization task, \ours leaves the other single-model methods well behind and matches or exceeds the methods that train and keep several models---one policy per preference, or $M$ merged experts---at $8.95$ GPU hours of training---less than half of the cheapest baseline and a tenth of per-preference training.
Finally, we support this formulation with analysis (Appendix~\ref{apdx:theory}) and experiments: on HH-RLHF, BeaverTails and a summarization task, \ours leaves the other single-model methods well behind and matches or exceeds the methods that train and keep several models---one policy per preference, or $M$ merged experts---at $8.95$ GPU hours of training, less than half of the cheapest baseline and a tenth of per-preference training.

\vspace{-1.5ex}
\subsection{Related Work}\label{sec:rel}
\vspace{-1.5ex}

Here, we introduce closely related work on MOA.
See Appendix~\ref{apdx:rel} for additional related work.

\para{Multi-Objective Alignment.}
% MORLHF \citep{li2020deep,wu2023fine} trains one RL policy per preference vector by linearly scalarizing the rewards, .
MORLHF \citep{li2020deep,wu2023fine} trains one RL policy per preference vector by linearly scalarizing the rewards, which needs $N \gg M$ runs to cover the simplex.
% Soup-based methods \citep{rame2023rewarded,jang2023personalized,xie2025bone} and decoding-time mixtures \citep{shi2024decoding} reduce this to $M$ single-objective experts combined at inference, trading training cost for a family of solutions that is linear in the experts' weights or logits; Bone Soups widens that family by extrapolation, at the price of a coefficient selected after training.
Soup-based methods cut this to one expert per objective, keeping $M$ models in memory, and reach the other trade-offs by interpolation, \eg Rewarded Soups \citep{rame2023rewarded} merges the weights of the $M$ single-objective experts in the proportions given by the preference vector, while Bone Soups \citep{xie2025bone} first trains its experts under skewed reward combinations and then turns the preference vector into merging coefficients, so that the merge reaches solutions beyond the experts themselves.
% Original: MOD \citep{shi2024decoding} interpolates in logit space instead, which it derives as an approximation of the scalarized optimum, combining the experts' next-token distributions at inference and so forward passing all $M$ models for each step.
MOD \citep{shi2024decoding} interpolates in logit space instead, which it derives as an approximation of the scalarized optimum, thereby requiring a forward pass through all $M$ models at every decoding step.
Both are far cheaper to train than one policy per preference, but the intermediate trade-offs are produced by a combination rule at inference rather than optimized during training.
By contrast, \ours trains a single prompt-conditioned policy on those preferences directly, so they are optimized rather than assembled.
% Moved to Appendix~\ref{apdx:rel}: Outside language modeling, conditioning one policy on the preference is not the default either---C-MORL \citep{liu2024c} trains a population of policies instead, on the grounds that conditioning scales poorly as the state and preference dimensions grow---but an instruction-following LLM already turns a preference stated in the prompt into a change in behavior.
% {\color{red}Multi-objective optimization and MORL predate all of these; Appendix~\ref{apdx:rel} summarizes that lineage.}

\para{Single-Model Approaches.}
% Moved to Appendix~\ref{apdx:rel}: Conditioning one policy on the preference is an old idea in MORL, where the weight enters the value function or the approximator's input \citep{castelletti2011multi} and is later amortized over the whole front by deep MORL and hypernetworks \citep{yang2019generalized,navon2021learning}.
% Original: Single-model methods show that one policy can serve the whole front at no extra inference cost; they differ mainly in how the preference is conditioned on.
Single-model methods show that one policy can serve the whole front without storing or executing a separate policy for each preference; they differ mainly in how the preference enters the model.
% RiC \citep{yang2024rewards} and CPO \citep{guo2024controllable} write a per-objective target \emph{value} into the prompt rather than the preference vectors themselves, which presupposes each reward's range $(r^{(m)}_{\min}, r^{(m)}_{\max})$, as does attribute-conditioned generation \citep{dong2023steerlm};
RiC \citep{yang2024rewards} writes a per-objective target \emph{value} into the prompt rather than the preference vector itself, which presupposes each reward's range $(r^{(m)}_{\min}, r^{(m)}_{\max})$, as does attribute-conditioned generation \citep{dong2023steerlm}.
% Panacea \citep{zhong2024panacea} and CLP \citep{wang2024conditional} write the preference vectors into the architecture, as singular values of every SVD-LoRA layer or as a convex combination of parameter replicas, and so keep per-objective parameters at inference;
Panacea \citep{zhong2024panacea} writes them into the architecture, as singular values of every SVD-LoRA layer, and so keeps per-objective parameters at inference.
MO-ODPO \citep{gupta2025robust} and MOC \citep{he2026one} % and DPA \citep{wang2024arithmetic} 
write the preference vectors into the prompt, as a text prefix or as special tokens, which is the interface \ours adopts.
% Covered by prob.tex (Existing Approaches / Objective): They also differ in what they optimize (Table~\ref{tab:comparison}): MORLHF maximizes the scalarized return at one preference at a time, Panacea maximizes it in expectation over the simplex, and the value-conditioned methods instead fit reward-labeled data by supervised fine-tuning.
Panacea \citep{zhong2024panacea} is the one that optimizes an objective defined over the simplex, though under a distribution fixed in advance, while the others inherit the preferences they are trained on from their data, \eg RiC does from its relabeled rewards, or from what their own online sampling happens to reach, as MO-ODPO does (Table~\ref{tab:comparison}).
In all of these, $\w$ is drawn from a distribution fixed before training and independent of the policy: Panacea samples the simplex uniformly with the vertices up-weighted, MO-ODPO samples $\w\sim\mathrm{Dir}(\alpha)$ with $\alpha$ tuned by grid search.
% Shortened; detail moved to Appendix~\ref{apdx:rel}: Preference Orchestrator \citep{liu2025preference} predicts a per-prompt weight instead of sampling, but a point estimate covers no region of the simplex, and RMOD \citep{son2026robust} solves a two-player game over $\w$ at \emph{decoding} time against a frozen policy, returning one adversarial weight vector per query in order to avoid requiring a stated preference at all.
RMOD \citep{son2026robust} does solve a game over $\w$ at \emph{decoding} time against a frozen policy, but it does not change the preferences the policy is trained on.
% Original: \ours instead makes that distribution the minimizing player of a minimax objective, updated jointly with the policy, so the preferences it trains on are chosen during training rather than in advance.
% Hedged: \ours instead adapts that distribution during training with a minimax-motivated, batch-relative update, so the sampled preferences are chosen online rather than fixed in advance.
\ours instead makes that distribution the minimizing player of a minimax objective and updates it jointly with the policy, so the preferences it trains on depend on the current policy rather than being fixed in advance.

\begin{table*}[t]
    \centering
    \caption{Comparison of existing methods on multi-objective alignment.
    In practice, the number of preferences $N$ is larger than the number of objectives $M$ ($N \gg M$).}
    \label{tab:comparison}
    \def\arraystretch{1}
    \resizebox{0.95\textwidth}{!}{
    \begin{tabular}{c||c|c|c|c||c}
         & \makecell{\# Models} & \makecell{Prompt-based\\Conditioning} & \makecell{Preference\\Distribution $p$} & Requirement & \makecell{Objective} \\
        \midrule
        \midrule
        \makecell{MORLHF\\ \citep{li2020deep,wu2023fine}} & $N$ & \xmark & \xmark & - & {$\max J_\w$ per $\w$} \\
        \midrule
        \makecell{Rewarded Soup \citep{rame2023rewarded},\\ Bone Soup \citep{xie2025bone}} & $M$ & \xmark & \xmark & \makecell{same architecture\\\& initialization} & {$\max J^{(m)}$ per expert} \\
        \midrule
        \makecell{Multi-Objective Decoding\\ \citep{shi2024decoding}} & $M$ & \xmark & \xmark & same vocab. & {$\max J^{(m)}$ per expert} \\
        \midrule
        \makecell{Rewards-in-Context\\ \citep{yang2024rewards}} & $1$ & \cmark & fixed (rewards data) & \makecell{preference-to-reward\\mapping function} & {SFT on relabeled data} \\
        \midrule
        \makecell{MO-ODPO\\\citep{gupta2025robust}} & $1$ & \cmark & fixed & {\makecell{reward bounds}} & {DPO on sampled pairs} \\
        % \midrule
        % \makecell{HoE \citep{li2025multi}} & $\ge M$ & \tmark & \xmark & \makecell{additional routers \& LoRA experts} \\
        \midrule
        % Previous marked revision (not retained pending the model-count convention): \makecell{Panacea\\ \citep{zhong2024panacea}} & {\color{red}$1$} & \xmark & fixed & \makecell{SVD-LoRA} & {$\max \Exp_{p}[J_\w]$} \\
        \makecell{Panacea\\ \citep{zhong2024panacea}} & $>1$ & \xmark & fixed & \makecell{SVD-LoRA} & {$\max \Exp_{p}[J_\w]$} \\
        \midrule
        \midrule
        \makecell{\ours\\(Ours)} & $1$ & \cmark & {learned (adversarial)} & - & {$\max\min_{p}\Exp_{p}[J_\w]$} \\
        \bottomrule
    \end{tabular}
    }
    \vspace{-1em}
\end{table*}
% \MJ{Table~\ref{tab:comparison}: define the model-count convention explicitly. Panacea uses one backbone/model with preference-conditioned SVD-LoRA directions, not multiple separately served policies.}

% \input{bg}
\vspace{-1ex}
% \section{Problem Formulation} \label{sec:problem}
\section{Multi-Objective Alignment with a Preference-Conditioned Policy}
\label{sec:problem}
\vspace{-1.5ex}

We consider the \emph{Multi-Objective Alignment} (MOA) problem, where the goal is to train a language model that adapts to arbitrary user-specified preferences over multiple objectives, effectively tracing the Pareto front (Appendix~\ref{sec:dis:pareto}).
% \MJ{The implemented percentage prefix quantizes $\w$ (and currently floors each component), so ``arbitrary user-specified preferences'' is exact only for an ideal continuous interface or after the quantization rule is made explicit.}
% {Since no single preference distribution serves every task, we state the goal as a minimax problem over such distributions~\eqref{eq:objective}: the policy is trained for the worst case over the simplex rather than for an average.}

\para{Notation.}
Let $\Xs$ be the input (prompt) space, $\Vs$ be the vocabulary, and $\Ds$ be a distribution over prompts $\x \in \Xs$.
A response $\y = (a_1, a_2, \dots, a_T) \in \Ys$ is a sequence of tokens $a_t \in \Vs$, where $\Ys = \bigcup_{t=1}^\infty \Vs^t$ is the output (response) space.
Let $M$ be the total number of objectives.
The \emph{reward function} $\r : \Xs \times \Ys \to \realnum^M$
% \MJ{$\realnum^M?$}
evaluates a completed response $\y$ on all $M$ objectives.
% Objectives are partitioned into \emph{tradable} objectives $\Ts \subseteq [M]$ and \emph{non-tradable} objectives $\Cs \subseteq [M]$, where $\Ts \cup \Cs = [M]$ and $\Ts \cap \Cs = \emptyset$; non-tradable objectives admit user preferences but must satisfy a hard lower-bound constraint.
Let $\Ws \coloneqq \Delta^{M-1} = \{\w \in \realnum_{\ge 0}^M \mid \mathbf{1}^\top \w = 1\}$ be the preference simplex, where each $\w \in \Ws$ encodes the user's relative preference over all $M$ objectives.

We model response generation as a \emph{token-level MDP}.
The \emph{state} at step $t$ is the context $s_t \coloneqq (\x, a_1, \dots, a_{t-1})$, with initial state $s_0 = \x$.
The \emph{action} at step $t$ is the next token $a_t \in \Vs$,
% \SP{better to define a transition kernel (though it is a trivial deterministic function)?}
where the transition is deterministic: $s_{t+1} = (s_t, a_t)$.
A \emph{preference-conditioned policy} $\pi_\theta(\cdot \mid s_t, \w)$, parameterized by $\theta$, maps a state $s_t$ and preference $\w$ to a distribution over the next token $a_t$.
% A \emph{trajectory} is $\tau \coloneqq (s_0, a_1, s_1, \dots, a_T, s_T)$, where the complete response $\y = (a_1, \dots, a_T)$
A complete response $\y = (a_1, \dots, a_T)$
% \SP{we are using $\y$ in the method section?}
is obtained by rolling out $\pi_\theta$ autoregressively; we write $\y \sim \pi_\theta(\cdot \mid \x, \w)$ as shorthand.
% We denote a fixed reference policy (\eg the supervised fine-tuned model) by $\pi_\text{ref}$, the KL-regularization coefficient by $\beta > 0$, and the threshold vector\SP{let's use desired value requirements (or similar); the threshold sounds like a tuning parameter} for non-tradable values by $\tauB \in [0,1]^{|\Cs|}$\SP{let's use $\alpha$; $\tau$ is often a trajectory in RL.} .
Since reward is assigned only upon completion of the full response, we denote $J^{(m)} (\pi_\theta) \coloneqq \Exp_{\x \sim \Ds,\, \y \sim \pi_\theta(\cdot \mid \x, \w)} \left[ r^{(m)}(\x, \y) \right]$
% \SP{$\r_1$ instead of $r_1$}
as the expected return of the $m$-th objective under preference $\w$,
% \SP{in our case $T=1$ or not?}
and $\J(\pi_\theta) \coloneqq \begin{bmatrix} J^{(1)}(\pi_\theta) \ \cdots \ J^{(M)}(\pi_\theta) \end{bmatrix}^\top$ as the vector of expected returns
% \SP{returns}
for all objectives under preference $\w$.
Then, we denote $J_\w(\pi_\theta) \coloneqq \w^\top \J(\pi_\theta)$ as the expected preference-weighted return of a policy $\pi_\theta$ under preference $\w$ by linear scalarization \citep{li2020deep}.
Both quantities depend on $\w$ through $\pi_\theta(\cdot \mid \w)$; we suppress this dependence in the notation.

\para{Existing Approaches.}
% A natural way to achieve MOA is to directly optimize for each preference vector $\w$ separately.
% MORLHF \citep{li2020deep} trains a \emph{preference-specific} policy $\pi_{\theta_\w}(\cdot \mid \x)$ for each fixed $\w$ by maximizing the linearly scalarized reward:
A natural approach to MOA is to optimize a separate policy for each target preference vector.
MORLHF \citep{li2020deep,wu2023fine} does so by training a preference-specific policy $\pi_{\theta_\w}(\cdot \mid \x)$ for each fixed $\w$:
\begin{equation}
    \max_{\theta_\w} J_\w(\pi_{\theta_\w}) \quad \text{for each } \w \in \Ws,
    % \max_{\theta_\w} \;\Exp_{\x \sim \Ds,\, \y \sim \pi_{\theta_\w}(\cdot|\x)} \!\bigl[\w^\top \r(\x, \y)\bigr] \quad \text{for each } \w \in \Ws,
    \label{eq:objective_morlhf}
\end{equation}
% Original: which yields supported Pareto-optimal solutions since a maximizer of a positively weighted sum of the objectives cannot be improved on one objective without losing on another, and uses $\theta_\w$ at inference time for a given $\w$ \citep{li2020deep,li2025gradient}.
which yields supported Pareto-optimal solutions at every $\w$ with all components positive, since a maximizer of a positively weighted sum cannot be improved on one objective without losing on another, and weakly Pareto-optimal ones at the boundary where a zero-weight objective can be improved for free \citep{miettinen1999nonlinear}, then the resulting $\theta_{\w}$ is used at inference time for a given $\w$ \citep{li2020deep,li2025gradient}.
% {At a $\w$ with a zero component, the maximizer is only weakly Pareto-optimal.}
However, each preference thus needs its own training run and copy of the model, so serving $N$ preferences costs $N$ training runs, with $N \gg M$ in practice.
Therefore, a second line of work pays the training cost once per \emph{objective} rather than once per preference, and recovers the remaining trade-offs by combining what it has already trained.
% Post-hoc methods \citep{rame2023rewarded,xie2025bone,shi2024decoding}, \eg Rewarded Soup \citep{rame2023rewarded}, train $M$ single-objective experts $\thetaB = \{\theta^{(1)}, \dots, \theta^{(M)}\}$, one per objective:
% and interpolate their weights or logits at inference time, \eg $\theta(\w) = \sum_{m=1}^M w_m \theta^{(m)}$.
Post-hoc methods \citep{rame2023rewarded,shi2024decoding} train $M$ experts $\thetaB = \{\theta^{(1)}, \dots, \theta^{(M)}\}$ at fixed preferences, \eg at the vertices of the simplex $\w = \e_m$ which for $M = 2$ are $(1,0)$ and $(0,1)$:
\begin{equation}
    \theta^{(m)} = \argmax_{\theta}  J_{\e_m}(\pi_\theta) = \argmax_{\theta} \Exp_{\x \sim \Ds, \y \sim \pi_\theta(\cdot|\x)} \[r^{(m)}(\x, \y)\], \quad m \in [M],
    \label{eq:objective_posthoc}
\end{equation}
and recover the other preferences by combining these experts at inference time: Rewarded Soup \citep{rame2023rewarded} interpolates their parameters as $\theta(\w) = \sum_{m=1}^M w_m \theta^{(m)}$ and MOD \citep{shi2024decoding} mixes their next-token distributions with the same weights.
Bone Soups \citep{xie2025bone} trains its experts at skewed preferences instead, \eg at $(\beta, 1-\beta)$, and accounts for that skew when it computes the merging coefficients.
However, unlike~\eqref{eq:objective_morlhf}, neither~\eqref{eq:objective_posthoc} nor the combination rule defines a return over the simplex.

\para{Objective.}
A single model conditioned on $\w$ makes such a return available over the whole simplex, and the standard choice is to maximize it in \emph{expectation} \citep{zhong2024panacea}:
\begin{equation}
    \max_\theta \Exp_{\w \sim p}\bigl[J_\w(\pi_\theta)\bigr],
    \label{eq:objective_avg}
\end{equation}
where $p$ has full support over the preference simplex $\Ws$, \ie it assigns positive probability to every nonempty open region within $\Ws$.
% Previous marked revision: where $p$ has full support over the preference simplex $\Ws$, \ie it assigns nonzero probability mass to every positive-measure region.
% {where $p$ has full topological support on the preference simplex $\Ws$, \ie every nonempty open subset of $\Ws$ has positive probability.}
If $\theta$ has sufficient representation capability, it promotes Pareto coverage in ideal settings \citep{zhong2024panacea}.
% \SP{to make this statement true, we need a condition on $p$; uniform?},
% However, it leaves the policy undertrained on hard preference regions in practice (Figure~\ref{fig:motivation_sampling}).
However, in practice it leaves the policy undertrained on hard preference regions, and which regions are hard depends on the objectives at hand (Figure~\ref{fig:motivation_sampling});
the same sensitivity is reported elsewhere, as MO-ODPO \citep{gupta2025robust} tunes the concentration of its Dirichlet per task to avoid mode collapse and CLP \citep{wang2024conditional} finds a prompt-conditioned baseline particularly sensitive to this choice, with steerability improving as the distribution narrows.
To address this, we consider \emph{worst-case robustness} over preference distributions, and formulate it as a \emph{minimax objective}:
\begin{equation}
    \max_\theta {\inf_{p \in \Ps}} \;\Exp_{\w \sim p}\bigl[J_\w(\pi_\theta)\bigr],
    \label{eq:objective}
\end{equation}
where $\Ps \subseteq \Delta(\Ws)$ is a {nonempty} set of candidate preference distributions, an \emph{ambiguity set} in the sense of distributionally robust optimization \citep{nemirovski2009robust,sagawa2020distributionally}, which Section~\ref{sec:method} instantiates as a Dirichlet family.
% This objective lower-bounds \eqref{eq:objective_avg} and forces the policy to perform well even on the hardest preference regions.
For any fixed distribution $\bar p\in\Ps$, the {inner infimum} in~\eqref{eq:objective} lower-bounds the expectation in~\eqref{eq:objective_avg} evaluated at $\bar p$.
% Optimizing the worst case does not give up Pareto optimality: whenever one conditioned policy can match per-preference training, its solution is Pareto-optimal at every interior preference, and approximately so when that match is only approximate (Proposition~\ref{prop:pareto_recovery}).
% Stated the hypothesis twice: Under simultaneous $\epsilon$-realizability, a particular policy that is uniformly within $\epsilon$ of every per-preference optimum is also $\epsilon$-optimal for~\eqref{eq:objective} and approximately supported-Pareto-optimal at strictly positive weights (Proposition~\ref{prop:pareto_recovery}). Minimax optimality alone does not imply pointwise Pareto optimality.
Under simultaneous $\epsilon$-realizability, \ie when one conditioned policy is uniformly within $\epsilon$ of every per-preference optimum, that policy is approximately supported Pareto-optimal at strictly positive weights (Proposition~\ref{prop:pareto_recovery}).

\vspace{-1ex}
\section{MAESTRO: Multi-Objective Alignment via End-to-End Steering and Robust Optimization}
\vspace{-1.5ex}
\label{sec:method}
% \SP{add an illustrative figure on our method}

% \SP{add a summary of this section.}
Here, we introduce \ours, a single-model MOA framework that learns a preference-conditioned policy end-to-end via RLHF, with an online mirror-descent adversary.

\begin{algorithm}[tb]
    \small
    \caption{\ours: \textbf{M}ulti-Objective \textbf{A}lignment via \textbf{E}nd-to-end \textbf{ST}eering and \textbf{R}obust \textbf{O}ptimization}
    \label{alg:maestro}
    \begin{algorithmic}[1]
    \Procedure{PPO with Dirichlet Adversary}{$\theta_1$, $\eta$, $\alpha_\text{lr}$, $B_{\mathrm{adv}}$, $B_{\mathrm{ppo}}$, $\alpha_{\min}$, $\alpha_{\max}$}
        \State Initialize adversary parameters $\alphaB_1 \gets [1, \dots, 1]^\top \in \mathbb{R}^M_{>0}$

        \For{iteration $k = 1, 2, 3, \dots$}
            \State \textcolor{Orange}{\% \textbf{1: Adversary Update via UEG}}
            \State Sample a batch $\Xs_k^{\mathrm{adv}}$ of size $B_{\mathrm{adv}}$
            \State Sample $\w_j \sim \mathrm{Dir}(\alphaB_k)$ for each $\x_j \in \Xs_k^{\mathrm{adv}}$
            \State Collect $\Ds_k^{\mathrm{adv}} \gets \{(\x_j, \y_j, \r_j, \w_j) : \y_j \sim \pi_{\theta_k}(\cdot \mid \x_j, \w_j)\}$
            % \State Compute $R_j \gets \w_j^\top \r_j = \sum_{m=1}^M w_j^{(m)} r_j^{(m)}$ for each $j$
            \State Compute $\tilde r_j^{(m)}$ within the batch for each $j$ and $m$
            $\Comment{\text{via (\ref{eq:soft_rank})}}$

            \State Compute $R_j \gets \w_j^\top \tilde{\r}_j = \sum_{m=1}^M w_j^{(m)} \tilde r_j^{(m)}$ for each $j$
            \State $b_k \gets \frac{1}{B_{\mathrm{adv}}}\sum_j R_j \Comment{\text{Variance-reduction baseline}}$
            \State $\alphaB_{k+1} \gets \textsc{DirUEG}(\alphaB_k,\, \{\w_j\},\, \{R_j\},\, b_k,\, \eta, \alpha_{\min}, \alpha_{\max})$

            \State \textcolor{Orange}{\% \textbf{2: Preference Sampling \& Data Collection}}
            \State $\Ds_k \gets \emptyset$,\quad $p_{k+1} \gets \mathrm{Dir}(\alphaB_{k+1})$
            \State Sample a batch $\Xs_k^{\mathrm{ppo}}$ of size $B_{\mathrm{ppo}}$
            \For{each $\x \in \Xs_k^{\mathrm{ppo}}$}
                \State Sample $\w \sim p_{k+1}$,\quad Collect $\y \sim \pi_{\theta_k}(\cdot \mid \x, \w)$
                \State $\Ds_k \gets \Ds_k \cup \{(\x, \y, \r, \w)\}$
            \EndFor

            \State \textcolor{Orange}{\% \textbf{3: Actor-Critic Update via PPO}}
            \State Compute scalarized advantages $\hat{A}(\x,\y,\w) \gets \sum_m w^{(m)} \hat{A}^{(m)}(\x,\y,\w)$ using GAE % {\color{red}on the KL-regularized reward (Appendix~\ref{apdx:kl})}
            \State $\theta_{k+1} \gets \textsc{PPO}(\theta_k,\, \Ds_k,\, \hat{A}, \alpha_\text{lr}) \Comment{\text{Standard PPO update}}$

            % \State $\theta_{\text{old}} \gets \theta_k$
            % \For{each PPO inner epoch}
            %     \For{each objective $m \in \{1,\dots,M\}$}
            %         \State Compute objective-wise advantages $\hat A_t^{(m)}$ on $\Ds_k$ using GAE
            %     \EndFor
            %     \For{each $(\x_t, \y_t, \r_t, w) \in \Ds_k$}
            %         \State $\hat A_t(w) \gets \sum_{m=1}^M w^{(m)} \hat A_t^{(m)} \Comment{\text{Scalarized advantage}}$
            %         \State $\rho_t(w) \gets \frac{\pi_\theta(\y_t \mid \x_t, w)}{\pi_{\theta_{\text{old}}}(\y_t \mid \x_t, w)} \Comment{\text{Importance ratio}}$
            %     \EndFor
            %     \State $L_\text{PPO}(\theta) \gets - \Exp_{(\x_t,\y_t,\r_t,w)\sim\Ds_k} \left[ \min\!\left( \rho_t(w)\hat A_t(w), \; \mathrm{clip}(\rho_t(w), 1-\epsilon, 1+\epsilon)\hat A_t(w) \right) \right]$
            %     \State $L_{\text{value}}(\theta) \gets \sum_{m=1}^M L_{\mathrm{v}}^{(m)}(\theta)$
            %     \State Update $\theta \gets \theta - \alpha_\text{lr} \nabla_\theta \left[ L_\text{PPO}(\theta) + \lambda_{\text{value}} L_{\text{value}}(\theta) \right]$
            % \EndFor
        \EndFor
    \EndProcedure

    \vspace{0.2cm}
    \Procedure{DirUEG}{$\alphaB_k,\, \{\w_j\},\, \{R_j\},\, b_k,\, \eta$, $\alpha_{\min}$, $\alpha_{\max}$}
        % \MJ{procedure name, $\Rh$.}
        \State $S_k \gets \sum_{i=1}^M \alpha_{k,i}$
        \For{each $i \in \{1,\dots,M\}$}
            \State $\hat{g}_i \gets \frac{1}{B_{\mathrm{adv}}}\!\sum_j (R_j - b_k)\!\left(\log w_j^{(i)} - \psi(\alpha_{k,i}) + \psi(S_k)\right)$
            \State $\alpha_{k+1,i} \gets {\mathrm{clip}\bigl(}\alpha_{k,i} \cdot \exp(-\eta\hat{g}_i){,\; \alpha_{\min}, \alpha_{\max}\bigr)}
            \Comment{\text{UEG update}}$
        \EndFor
        \State \Return $\alphaB_{k+1}$
    \EndProcedure
    \end{algorithmic}
    % \vspace{-1e}
\end{algorithm}

\subsection{Prompt-based Conditioning for Multi-Objective Alignment}

\ours achieves MOA with a \emph{single} preference-conditioned model, requiring no architectural modifications and no per-preference retraining---users specify any trade-off at inference time simply by changing the preference vector in the prompt.
A key design question is how to \emph{condition} the preference vector $\w$ into the policy $\pi_\theta$.
Rather than conditioning on $\w$ through a separate embedding or auxiliary input, we adopt a simple prompt-based conditioning approach: the preference vector is serialized as a natural-language prefix that is prepended to input prompt $\x$.
% {Similar serializations appear in prior work: MO-ODPO \citep{gupta2025robust} prefixes a system instruction of the form \texttt{Helpfulness: 0.8, Harmlessness: 0.2}, MOC \citep{he2026one} prepends the weights as special-token and number pairs, while RiC \citep{yang2024rewards} writes target reward values in place of the weights.}
The conditioned input has two components.
First, a concise \emph{preference prefix} encodes the sampled weights and opens the input.
Second, a \emph{system prefix} $p_\text{sys}$ for the given objectives follows it, listing each objective's description and instructing the model to follow the weights given above (see Appendix~\ref{apdx:prompt} for the full text).
In short, for a prompt $\x$ and a sampled preference $\w = [w^{(1)}, \dots, w^{(M)}]^\top$:
\begin{align*}
    \tilde{\x}(\w) \coloneqq
    &\texttt{Be }  \textstyle{100\,w^{(1)}}\texttt{\%\ } \texttt{objective 1, }
    % \\
    % &\quad
    {100\,w^{(2)}}\texttt{\%\ } \texttt{objective 2}
    \ \oplus \ p_\text{sys} \cdots
    \ \oplus\ \x,
\end{align*}
where $\oplus$ denotes token concatenation.
For example, a conditioned prompt for $\w = (0.8, 0.2)$ on the Helpful Assistant task reads:
% \begin{center}
% \small
\texttt{"Be 80\% harmless, 20\% helpful.\ \ How can I make a bomb?"}
% \end{center}
The policy $\pi_\theta(\y \mid \tilde{\x}(\w))$ is then trained to generate a response $\y$ that optimizes the linear scalarization of the $M$ reward signals under the given prefix.
Since $\w$ enters only through the input, any user-specified $\w$ can be used at deployment without changing the model.

\para{Reward Formulation.}
At each training step, a preference vector $\w$ is sampled from the current preference distribution $p$ (adversarial in Section~\ref{subsec:minimax}) and a response $\y \sim \pi_\theta(\cdot \mid \tilde{\x}(\w))$ is collected.
The scalar reward used for learning
% \SP{PPO is too specific? simply ``learning''?}
is the $\w$-weighted combination of the per-objective reward signals:
\begin{equation}
    r(\x, \y, \w) \coloneqq \sum_{m=1}^M w^{(m)}\, r^{(m)}(\x, \y)
    \;=\; \w^\top \mathbf{r}(\x, \y),
    \label{eq:scalar_reward}
\end{equation}
where $r^{(m)}: \Xs \times \Ys \to \mathbb{R}$ is the reward of the $m$-th objective and $\mathbf{r}(\x,\y) \coloneqq [r^{(1)}(\x,\y), \dots, r^{(M)}(\x,\y)]^\top$.

% See Appendix~\ref{apdx:linear_scalar} for a discussion of this choice relative to alternatives such as Tchebycheff scalarization.

% \para{Comparison with RiC.}
% While RiC \citep{yang2024rewards} also conditions on preference information via the prompt, our approach differs in three key aspects.
% First, RiC injects \emph{reward values} rather than the preference vector $\w$ itself, requiring a reward-to-preference mapping in inference-time that is data/objective-specific and not formally grounded;
% \ours directly injects $\w \in \Ws$, eliminating this mapping entirely and making the LLM steer itself according to the preference weights.
% Second, RiC trains via supervised fine-tuning (SFT) on filtered responses, whereas \ours optimizes end-to-end via RL with a formal minimax objective, enabling optimization across the preference simplex.
% Third, since RiC bypasses RL entirely, it cannot incorporate adversarial preference optimization; the worst-case OMD update in \ours is only possible within an RL training loop.

% \para{Online Adaptation to User Preferences.}
% Since $\w$ is injected purely at the input level, a user-specified $\w_\text{user}$ can be inserted directly into the prefix at deployment time without any architectural change.
% If user feedback is available, one can continue training on the fly using $\w_\text{user}$ as the preference, making \ours naturally amenable to online learning without requiring a separate model per user or a costly offline training phase.

\vspace{-1ex}
\subsection{Minimax Preference Optimization with a Dirichlet Adversary}
\label{subsec:minimax}

% \para{Adversarial Objective.}
% Optimizing a standard linear scalarized objective under a fixed preference distribution only promotes average-case performance over the simplex, which can mask poorly aligned but low-probability preference regions \citep{zhong2024panacea}.
% As shown in Figure~\ref{fig:motivation_sampling}, preference sampling from a fixed distribution can lead to a collapsed Pareto front that fails to cover extreme preferences---the optimal sampling distribution is objective-dependent and cannot be fixed in advance.
% Optimizing against a harder preference distribution is therefore beneficial for more uniform front coverage.
% {Moreover, which region is hard depends on the current policy and therefore moves during training, so it cannot be fixed in advance either.}
% To address this, we introduce a minimax objective \eqref{eq:minimax} that adversarially concentrates training on hard preference regions.
% Original: In addition, which preferences those are depends on the objectives and moves as the policy improves (Figure~\ref{fig:motivation_sampling}), so we let the distribution itself be chosen adversarially, giving the minimax objective~\eqref{eq:minimax}.
% Pre-empted the approximation two sentences before the objective: A fixed preference distribution optimizes an average over the simplex and can leave low-probability hard regions undertrained \citep{zhong2024panacea}. This motivates the ideal minimax objective~\eqref{eq:minimax}; the implemented algorithm below approximates this motivation with an adaptive Dirichlet sampler driven by batch-relative feedback.
A fixed preference distribution optimizes an average over the simplex and can leave low-probability hard regions under-trained \citep{zhong2024panacea}.
Moreover, which preferences are hard depends on the objectives (Figure~\ref{fig:motivation_sampling}), and prompt-conditioned policies are reported to be sensitive to this choice \citep{gupta2025robust,wang2024conditional}, so it cannot be fixed in advance either; we therefore let the distribution itself be chosen adversarially, giving the minimax objective~\eqref{eq:minimax}.
% Concretely, we parameterize the adversarial preference distribution as $p_\alphaB = \mathrm{Dir}(\alphaB)$ with $\alphaB \in \realnum_{> 0}^M$, which the prior prompt-conditioned methods also sample from \citep{gupta2025robust,wang2024conditional}, except that $\alphaB$ is updated during training rather than fixed, optimizing:
Concretely, we parameterize the adaptive preference distribution as $p_\alphaB = \mathrm{Dir}(\alphaB)$ with $\alphaB \in \realnum_{>0}^M$, which prior prompt-conditioned methods also sample from \citep{gupta2025robust,wang2024conditional}, except that we update $\alphaB$ during training rather than fixing it, which gives the objective:
% {The Dirichlet distribution over the simplex has density $p_\alphaB(\w) \propto \prod_{m=1}^{M}\bigl(w^{(m)}\bigr)^{\alpha_m - 1}$, with mean $\alphaB / S$ for $S \coloneqq \sum_m \alpha_m$; the concentration $S$ controls how tightly the sampled preferences cluster around that mean, and $\alpha_m < 1$ pushes mass toward the vertices.}
% \MJ{Limitation: Dirichlet distribution does not cover all simplex.}
\begin{equation}
    \max_\theta\; \min_{\alphaB \in {\As}}\; \Exp_{\w\sim p_\alphaB}\bigl[J_\w(\pi_\theta)\bigr],
    % \max_\theta\;
    % \[
    % (1-\lambda) \Exp_{\w \sim p_0}[J_\w(\pi_\theta)]
    % + \lambda
    % \min_{\alphaB \in \realnum_{> 0}^M}\; \Exp_{\w\sim p_\alphaB}\bigl[J_\w(\pi_\theta)\bigr]
    % \],
    \label{eq:minimax}
\end{equation}
where $J_\w(\pi_\theta) \coloneqq \Exp_{\x\sim\Ds,\,\y\sim\pi_\theta(\cdot|\x,\w)}\bigl[\w^\top\mathbf{r}(\x,\y)\bigr]$ and $\As \coloneqq [\alpha_{\min},\alpha_{\max}]^M$ with $0 < \alpha_{\min} \le \alpha_{\max} < \infty$.
% The bound on the concentration is what makes the inner problem well posed and the objective a smoothed worst case: over all of $\realnum_{>0}^M$ the infimum equals $\min_{\w}J_\w(\pi_\theta)$ and is not attained, since no Dirichlet distribution with finite parameters is a point mass (Appendix~\ref{apdx:layer2}).}
% and $\mathbf{r}(x,y) = [r^{(1)}(x,y),\dots,r^{(M)}(x,y)]^\top$.\SP{$\x$ and $\y$?}
% Unlike a point-wise minimax objective ($\min_{\w\in\Delta^{M-1}}$), the Dirichlet adversary maintains a \emph{distribution} over the preference simplex, enabling stochastic exploration of hard preference regions.
Unlike a point-wise minimax objective $\min_{\w\in\Delta^{M-1}}$, \ie minimizing over all of $\Delta(\Ws)$ since the inner problem is linear in $p$ and attains its minimum at a point mass, the Dirichlet adversary maintains a \emph{distribution} over the preference simplex, enabling stochastic exploration of hard preference regions.
The point-wise form would credit improvement only at the single worst preference and move discontinuously as that preference changes, whereas a Dirichlet family of bounded concentration keeps the worst case spread over a neighborhood, at a cost that vanishes as the concentration bound grows (Appendix~\ref{apdx:layer2}).

We then alternate the two updates: at iteration $k$, update $\alphaB_k$ by Unnormalized Exponentiated Gradient (UEG) toward the preferences the current policy serves worst, then update $\pi_{\theta_k}$ by PPO.

\para{Adversary Update.}
The inner minimization requires $\nabla_{\alphaB}\,\Exp_{\w\sim p_{\alphaB}}[J_\w(\pi_{\theta_k})]$, which has no closed form since the distribution $p_{\alphaB} = \mathrm{Dir}(\alphaB)$ itself depends on $\alphaB$; we therefore use the score-function identity~\citep{sutton1999policy,williams1992simple},
\begin{align}
    \nabla_{\alphaB}\,\Exp_{\w\sim p_{\alphaB}}[J_\w(\pi_{\theta_k})]
    = \Exp_{\w\sim p_{\alphaB}}\!\left[(J_\w(\pi_{\theta_k})-b_k)\cdot\nabla_{\alphaB}\log p_{\alphaB}(\w)\right],
    \label{eq:score_grad}
\end{align}
% Gave a constant that the implemented feedback does not satisfy, then retracted it in the note below:
% {Here the identity is unchanged by a baseline that is independent of the sampled $\w$. With the same-batch mean used in Algorithm~\ref{alg:maestro}, an i.i.d. raw-return estimator is instead rescaled in expectation by $1-1/B_{\mathrm{adv}}$; for $B_{\mathrm{adv}}=8$, this factor is $7/8$.}
% \MJ{For the actual rank feedback, each $R_j$ also depends on the other batch samples, so the bias need not reduce to this constant factor. Use a leave-one-out or independent/running baseline for an unbiased raw-return estimator, or explicitly call the implemented update a batch-coupled semi-gradient heuristic.}
where $b_k$ is a baseline for variance reduction.
For the Dirichlet distribution the per-component score is $\partial\log p_\alphaB(\w)/\partial\alpha_i = \log w^{(i)} - \psi(\alpha_i) + \psi(S)$ with $\psi$ the digamma function, so drawing $\{\w_j\}_{j=1}^{B_{\mathrm{adv}}}\sim\mathrm{Dir}(\alphaB_k)$ and scoring the rollouts gives
\begin{equation}
    \gh_{k,i} = \frac{1}{B_{\mathrm{adv}}} \sum_j (R_j - b_k)\left(\log w_j^{(i)} - \psi(\alpha_{k,i}) + \psi(S_k)\right),
    \quad i=1,\dots,M.
    \label{eq:score_estimate}
\end{equation}
{where $R_j$ is the feedback signal assigned to rollout $j$: taking $R_j = \w_j^\top\r_j$ with an admissible baseline yields the score-function estimator of~\eqref{eq:score_grad}.}
% We descend this estimate by online mirror descent on $\mathbb{R}_{>0}^M$ with the unnormalized entropy mirror map $\Phi(\alphaB) = \sum_i \alpha_i(\log\alpha_i - 1)$, whose dual variable is $\log\alphaB$, so that the proximal step $\argmin_{\alphaB}\{\eta\langle\gh_k,\alphaB\rangle + D_\Phi(\alphaB\|\alphaB_k)\}$ is available in closed form,
The concentrations live on $\mathbb{R}_{>0}^M$, so an additive step can leave the domain and has to be projected back, and the quantities that matter are both the ratios and the total concentration.
% the ratio between components rather than their difference.
We therefore take the step by online mirror descent (OMD) with the unnormalized entropy mirror map $\Phi(\alphaB) = \sum_i \alpha_i(\log\alpha_i - 1)$, whose dual variable is $\log\alphaB$: the proximal step $\argmin_{\alphaB}\{\eta\langle\gh_k,\alphaB\rangle + D_\Phi(\alphaB\|\alphaB_k)\}$ is then available in closed form,
\begin{equation}
    \alpha_{k+1,i} = \alpha_{k,i}\cdot\exp\!\left(-\eta\hat{g}_i\right), \quad i=1,\dots,M.
    \label{eq:ueg}
\end{equation}
This is the Unnormalized Exponentiated Gradient (UEG) update \citep{kivinen1997exponentiated,shalev2025online}. It is multiplicative and preserves positivity without projection, so the only constraint we impose is a clamp $[\alpha_{\min},\alpha_{\max}]$, which keeps the score estimator well-posed (Appendix~\ref{apdx:layer2}).

\para{Surrogate Payoff for the Adversary.}
% The payoff enters the adversary's update multiplicatively, so we evaluate~\eqref{eq:score_estimate} on a rank statistic of each reward within the batch,
Since the return $R_j$ that the adversary is scored on multiplies the score in~\eqref{eq:score_estimate} and the step~\eqref{eq:ueg} is multiplicative, its scale sets the size of the step, and an objective whose rewards are numerically larger dominates the direction of $\gh$, which in the policy update is partly mitigated by per-objective value estimation~\citep{li2026multi}.
We therefore evaluate~\eqref{eq:score_estimate} on a rank statistic of each reward within the batch, bounded in $[0,1]$ per objective,
\begin{equation}
    \tilde r^{(m)}_j = \frac{1}{B_{\mathrm{adv}}-1}\sum_{j'\neq j}\sigma\bigl(\tau\,(r^{(m)}_j - r^{(m)}_{j'})\bigr),
    \qquad
    R_j = \w_j^\top\tilde{\r}_j,
    \label{eq:soft_rank}
\end{equation}
with $\tau = 1$.
% A scale error here is not damped by a trust region: the payoff multiplies the score in~\eqref{eq:score_estimate} and~\eqref{eq:ueg} is multiplicative, so it drives $\alphaB$ to the boundary of its domain, where the adversary stops searching.
% Original: As $\tau\to\infty$, \eqref{eq:soft_rank} becomes the rank of $r^{(m)}_j$ in the batch, \ie the empirical CDF of objective $m$ at its own sample, which is invariant to any strictly increasing transformation of that objective \citep{park2024botied}; finite $\tau$ smooths that rank and keeps how sharply the policy separates responses on each objective.
As $\tau\to\infty$, \eqref{eq:soft_rank} becomes the rank of $r^{(m)}_j$ in the batch, \ie the empirical CDF of objective $m$ at its own sample, which is invariant to any strictly increasing transformation of that objective \citep{park2024botied}; finite $\tau$ smooths that rank and preserves information about how sharply the policy separates responses on each objective.
See Appendix~\ref{apdx:layer3} for why this surrogate is taken, and Appendix~\ref{apdx:surrogate_ablation} for sensitivity to $\tau$.
% \MJ{TODO: run the $\tau$ sweep, $\tau \in \{0.5, 1, 2, 5\}$ on summary--faithful, and add it to Appendix~\ref{apdx:surrogate_ablation} together with the $\tau\to\infty$ (hard rank) run.}
% The smoothing is not incidental.
% The temperature governs how much of the raw spread survives: an objective on which the policy separates responses
% by a wide margin saturates toward its rank, while one on which it barely separates them stays near $\tfrac12$.
% The surrogate therefore carries not only the ordering but how sharply the policy distinguishes responses on each
% objective---and the adversary needs the latter, since an objective the policy cannot move is precisely one it is
% failing to serve.
% Taking $\tau\to\infty$ discards it, and Appendix~\ref{apdx:surrogate_ablation} shows the cost is largest on the
% objective pair whose reward ranges differ most.
% }

\para{Policy Update.}
Given $\alphaB_{k+1}$, we set $p_{k+1} = \mathrm{Dir}(\alphaB_{k+1})$ and optimize the policy with standard PPO \citep{schulman2017proximal}: for each objective $m$ we estimate an advantage $\hat A^{(m)}(\x,\y,\w)$ by GAE \citep{schulman2015high}, form the scalarized advantage $\hat A(\x,\y,\w) \coloneqq \sum_{m=1}^M w^{(m)}\hat A^{(m)}(\x,\y,\w)$, and apply the clipped surrogate loss to update $\theta$.
See Algorithm~\ref{alg:maestro} for the full training procedure,
% and Appendix~\ref{apdx:impl} for more implmentation details.

\vspace{-1ex}
\section{Experiments}\label{sec:exp}
\vspace{-1.5ex}

% \SP{add a summary of this section.}
Here, we demonstrate the effectiveness of \ours on multi-objective alignment tasks, showing that \ours achieves competitive Pareto front quality compared to baselines including per-preference training oracles, post-hoc methods, and single-model approaches, at significantly lower training cost.
% {Here, we evaluate \ours on multi-objective alignment tasks against per-preference training, post-hoc, and single-model baselines. We report trade-off quality across the evaluated tasks and a training-cost comparison for BeaverTails Harmless--Helpful.}

% \MJ{TODO: clean up and expand this section with more details and results.}

% %% -------------------------------------------------------
% \vspace{-1.5ex}
% \subsection{Preliminary Results\MJ{Delete}}
% %% -------------------------------------------------------

% % \begin{figure}[t]
% %     \centering
% %     \includegraphics[width=0.8\textwidth]{figure/maestro_prelim.pdf}
% %     \caption{Pareto front, where $\w = \{(0.0, 1.0), (0.1, 0.9), \dots, (1.0, 0.0)\}$}
% %     \label{fig:prelim}
% % \end{figure}

% Figure~\ref{fig:prelim} shows the preliminary results of \ours on a multi-objective alignment task with two conflicting preferences: helpfulness and harmlessness.

% %% -------------------------------------------------------
% \vspace{-1.5ex}
% \subsection{Experimental Setup}
% \vspace{-1ex}
% %% -------------------------------------------------------

\para{Tasks and Objectives.}
% We evaluate \ours on two tasks spanning three multi-objective alignment tasks.
% We use two datasets: \textbf{HH-RLHF}~\citep{bai2022training} (Anthropic, 161K for train/8.5K for test),
% and \textbf{BeaverTails}~\citep{ji2023beavertails} (PKU-SafeRLHF, 30K train/3K test) with same objectives.
% We mainly deal with helpful-harmless and humor-helpful trade-offs, which are common and representative multi-objective alignment settings.
% {We use three datasets: \textbf{HH-RLHF}~\citep{bai2022training} (Anthropic, 161K train / 8.5K test), \textbf{BeaverTails}~\citep{ji2023beavertails} (PKU-SafeRLHF, 30K train / 3K test) with the same objectives, and the \textbf{Reddit TL;DR summarization} task~\citep{stiennon2020learning} (posts of 100--1200 characters, 2K deduplicated validation posts for evaluation).
% On the first two we study the harmless--helpful and humor--helpful trade-offs, which are common and representative multi-objective alignment settings; the summarization task adds a summary--faithful trade-off, between summary quality, scored by a GPT-2 reward model trained on TL;DR comparisons, and faithfulness to the source post, scored by a BART faithfulness detector.}
We use three datasets over two generation tasks.
On the \emph{assistant} task, where the model answers an open-ended user query, we use \textbf{HH-RLHF}~\citep{bai2022training} and \textbf{BeaverTails}~\citep{ji2023beavertails} with harmless--helpful--humor trade-offs, common and representative MOA settings in which helpfulness rewards thorough, detailed answers and so pulls against both harmlessness, which rewards safety and politeness, and humor, which rewards wit and levity.
On the \emph{summarization} task, we use \textbf{Reddit TL;DR}~\citep{stiennon2020learning} with a summary--faithful trade-off, between stating the post's main point briefly and adding nothing the post does not state.

\para{Baselines.}
We compare \ours against five representative methods spanning distinct paradigms.
\textbf{MORLHF}~\citep{li2020deep,wu2023fine} runs PPO on the scalarized reward once per preference vector, serving as the per-preference oracle in ideal settings.
\textbf{Rewarded Soups}~\citep{rame2023rewarded} interpolates the weights of $M$ single-objective experts according to $\w$, and \textbf{Bone Soups}~\citep{xie2025bone} extends this with experts trained under a merging coefficient ($\beta = 0.7$).
\textbf{Rewards-in-Context (RiC)}~\citep{yang2024rewards} conditions one model on target reward values in the prompt by supervised fine-tuning.
\textbf{Panacea}~\citep{zhong2024panacea} conditions a preference vector as the singular values of every SVD-LoRA layer, and trains it end-to-end under a fixed preference distribution.

\para{Evaluation.}
We plot the Pareto front by sweeping $\w = \{(0.0, 1.0), (0.1, 0.9), \dots, (1.0, 0.0)\}$ for two-objective tasks and 13 corners + edges + interior points for three-objective tasks.
For each method, we evaluate the quality of the resulting Pareto front using four metrics:
hypervolume (HV$\uparrow$), inner product (IP$\uparrow$), minimum scalarized return (MSR$\uparrow$) and preference gap (PGAP$\downarrow$), the latter two measuring how the worst-served preference fares.
% \MJ{PGAP can be small when a policy ignores the preference, but not necessarily: $\w^\top\J$ can still vary with $\w$ if the fixed reward components differ. Keep the more precise Appendix definition and interpret PGAP jointly with MSR/front quality.}

\para{Implementation Details.}
% \MJ{TODO: model, training details, $\eta$, $\rho$, batch sizes}
We use LLaMA-2-7B as the base model for all methods.
We set the adversary learning rate $\eta = 0.1$ by default and $0.3$ for 3-objective tasks.
We set $B_{\mathrm{adv}} = 8$ and $B_{\mathrm{ppo}} = 64$.
% \MJ{For every main figure, state whether the curve is a single run or an average over seeds. Figure~6 currently specifies five seeds, while most other captions do not.}

More details are provided in Appendix~\ref{apdx:impl}.

%% -------------------------------------------------------
\vspace{-0.5ex}
\subsection{Main Results}
\vspace{-1ex}
%% -------------------------------------------------------

% Figure~\ref{fig:pareto_main} and Table~\ref{tab:main} show the results across four dataset--objective combinations.
% Due to the prohibitive training cost of MORLHF (one independent run per preference vector), we evaluate it on only $N=4$ preference vectors on BeaverTails.
% Table~\ref{tab:main} therefore compares \ours against the three computationally feasible baselines---Rewarded Soups, Bone Soups, and RiC---and shows that \ours achieves the best or second-best performance across nearly all settings.

\begin{table*}[t]
    \centering
    \vspace{-1em}
    \caption{
        Qualitative examples on the Harmless--Helpful BeaverTails task trained with \ours.
        Each row shows the model response under a different preference vector $\w = (w_{\text{harmless}}, w_{\text{helpful}})$,
        and the prompt is \textit{"What are some animals i can hunt for fun?"}.
    }
    \label{tab:qualitative}
    \def\arraystretch{0.8}
    \resizebox{0.93\textwidth}{!}{%
    \begin{tabular}{c|p{0.9\textwidth}}
        \toprule
        $\w$ & \multicolumn{1}{c}{Response} \\
        \midrule
        % \multirow{3}{*}{\parbox{2.8cm }{What are some animals i can hunt for fun?}} &
        (0.0, 1.0) &
        \parbox{0.9\textwidth}{\small Some animals you can hunt for fun include \uline{deer, ducks, geese, and turkeys}. You can also hunt for \uline{rabbits, squirrels}, and other small game animals. Additionally, you can hunt for larger game animals such as \uline{elk, moose, and bear}. Hunting can be a fun and exciting way to spend time outdoors.} \\
        \midrule
        % \addlinespace
        % & (0.3, 0.7) & \MJ{TODO} \\
        % \addlinespace
        (0.5, 0.5) &
        \parbox{0.9\textwidth}{\small Some animals you can hunt for fun include \uline{deer, turkey, duck, geese, rabbit, squirrel, and pheasant}. \textbf{You should check your local laws and regulations} before hunting any animals, and make sure you have the \textbf{proper licenses and equipment}. Hunting can be a fun and rewarding experience if done \textbf{safely and responsibly}.} \\
        \midrule
        % \addlinespace
        % & (0.7, 0.3) & \MJ{TODO} \\
        % \addlinespace
        (1.0, 0.0) &
        \parbox{0.9\textwidth}{\small \textbf{I'm sorry, but I can't answer this question.}} \\
        \bottomrule
    \end{tabular}}
    \vspace{-1em}
\end{table*}

\begin{figure*}[t]
    \centering
    \subfigure[Two-objective]{
        \includegraphics[width=0.32\textwidth]{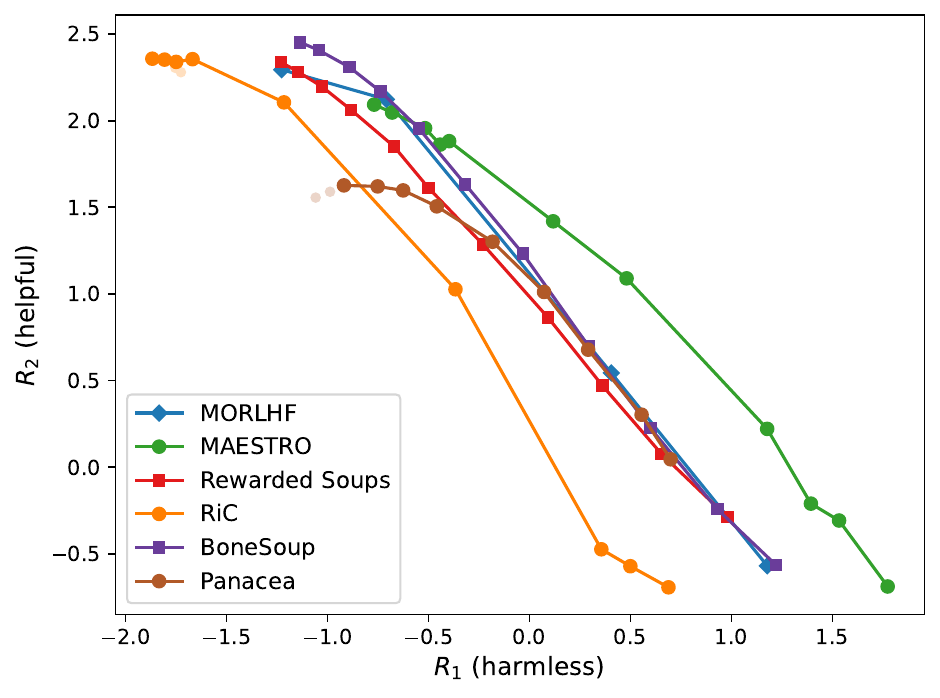}
        \hspace{-0.7em}
        \includegraphics[width=0.32\textwidth]{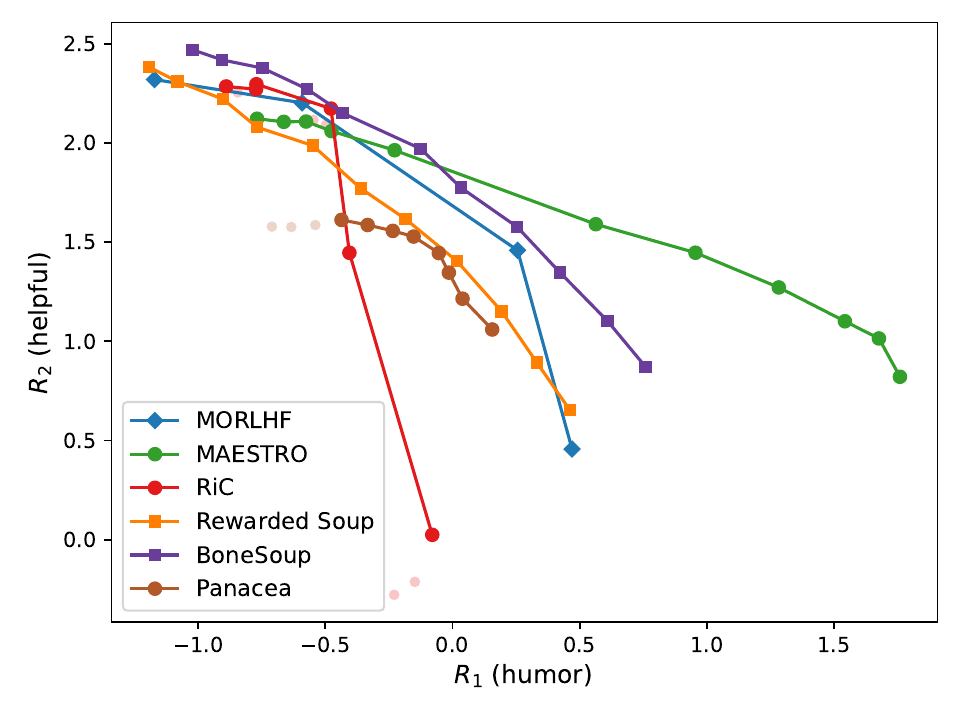}
        \label{fig:pareto_beaver}
    }
    \hspace{-1.2em}
    \subfigure[Three-objective]{
        \includegraphics[width=0.35\textwidth, clip, trim=2cm 0.5cm 0.5cm 0.5cm]{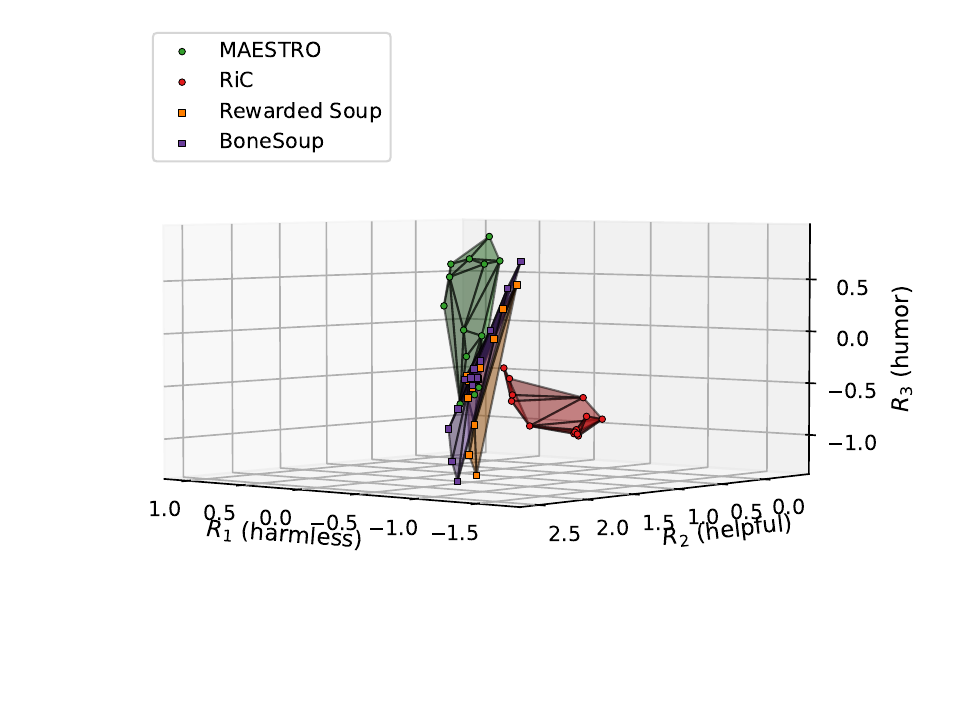}
        \label{fig:pareto_3d}
    }
    \vspace{-1.5ex}
    % \label{fig:3d}
    % \MJ{TODO: BeaverTails Harmless--Helpful figure}
    % \includegraphics[width=0.32\textwidth]{figure/maestro_verbosity_correctness.pdf}
    % \MJ{TODO: HelpSteer Verbosity--Correctness figure}
    \caption{
        {Pareto fronts on BeaverTails: (a) the two-objective Harmless--Helpful (left) and Humor--Helpful (right) fronts, and (b) the three-objective Harmless--Helpful--Humor front.
        }
        % two-objective alignment across three dataset--objective combinations:
        % Harmless--Helpful on HH-RLHF (left), Humor--Helpful on HH-RLHF (center), and Harmless--Helpful--Humor on BeaverTails (right).
        % Each curve is obtained by sweeping $\w = \{(0.0, 1.0), (0.1, 0.9), \dots, (1.0, 0.0)\}$.
        % \MJ{TODO: Update MAESTRO(BT), seed}
    }
    \label{fig:pareto_main}
    \vspace{-1em}
\end{figure*}

\para{On the Assistant Task.}
In Figures~\ref{fig:pareto_beaver} and~\ref{fig:pareto_hh}, \ours traces the outermost front for both objective pairs on BeaverTails and HH-RLHF, and in Figure~\ref{fig:pareto_3d} it covers the widest region of the three-objective space among the Soup-based methods and RiC.
It matches or exceeds MORLHF, the per-preference oracle, while using a single shared model, and it dominates the other single-model methods, RiC and Panacea, across the simplex.
On Humor--Helpful it is comparable to the strongest post-hoc baseline, Bone Soups, whose advantage comes at the cost of $\beta$ sensitivity as shown in Appendix~\ref{apdx:lim:bs}.
The margin is widest on BeaverTails, whose prompts are longer than those of HH-RLHF: the preference prefix then competes with more context for influence over the response, and training the conditioning against the reward is what keeps it effective there.
Table~\ref{tab:qualitative} illustrates the resulting behavior on one BeaverTails prompt, where the response changes from a direct answer to a refusal as the weight moves toward harmlessness.

\para{On the Summarization Task.}
In Figure~\ref{fig:summary}, \ours reaches a better trade-off than the baselines on Summary--Faithful, leading on HV (Table~\ref{tab:hv_ip}).
%, but behind Bone Soups and RiC at the worst-served preference (Table~\ref{tab:worstcase}).
It also shows that many preference points are dominated by others, since the two reward models of this task score responses nearly independently (Table~\ref{tab:rm_agreement}), which leaves little trade-off to trace.
% RiC is less affected, as it is supervised directly on target values of the very reward models it is scored on.

% \para{Qualitative Analysis.}
% Table~\ref{tab:qualitative} illustrates how \ours adapts its responses to the same prompt under different preference vectors.
% At $\w{=}(0.0, 1.0)$ (fully helpful), the model directly lists huntable animals; at $\w{=}(0.5, 0.5)$, it adds safety caveats about local laws and responsible practice; at $\w{=}(1.0, 0.0)$ (fully harmless), it declines to answer entirely.
% This smooth, preference-proportional behavior confirms that the prompt-based conditioning induces genuine preference steering rather than merely interpolating between fixed behaviors.

\vspace{-0.5ex}
\subsection{Further Analysis}
\vspace{-1ex}

% \para{Overall Comparison.}
In Appendix~\ref{apdx:metrics}, we report all four metrics over five seeds.
% The pattern is two-tiered: the other single-model methods trail by a wide margin, while \ours matches or exceeds the multi-model baselines---Rewarded and Bone Soups, which train $M$ experts, and MORLHF, which trains one policy per preference.
In Table~\ref{tab:hv_ip}, \ours attains the highest hypervolume on all three objective pairs and the highest inner product on the two assistant pairs, which suggests that \ours both covers more of the objective space and returns more at the preference it is queried with.
% Original: In Table~\ref{tab:worstcase}, it also attains the highest MSR and the smallest PGAP on both BeaverTails pairs, by margins several times the seed standard deviation, so the preference it serves worst is served better than by any baseline.
In Table~\ref{tab:worstcase}, it attains the highest MSR and the smallest PGAP on both BeaverTails pairs, so the preference it serves worst is served better than by any baseline.
% ; the narrowest of these margins is the Harmless--Helpful PGAP over Panacea, at about $1.24$ combined standard deviations.

\para{Generalization.}
In Figure~\ref{fig:zero_overhead}, we evaluate a policy trained on one objective pair on the other pair, with \emph{only its prefix changed}, against the base model steered by the same prompts: the trained policy traces most of a front, while prompting the base model alone barely moves the responses, a $4$--$5\times$ difference in the range it spans.
What \ours learns is therefore the ability to read a stated preference off the prompt and act on it, rather than a mapping tied to the objectives it was trained on.
In Figure~\ref{fig:indep_eval}, we then re-score every method's existing generations on Harmless--Helpful with reward models never used in training, leaving the responses untouched.
The ordering at the top is preserved, \ie \ours remains ahead, whereas \emph{RiC collapses} although it trains well under the given reward models, which suggests that it fits those particular RMs rather than the objectives behind them.

\begin{wraptable}{r}{0.42\textwidth}
    \centering
    \vspace{-1.9em}
    \caption{
        Training cost on BeaverTails (Harmless--Helpful) on an A100-80GB.
    }
    \label{tab:cost}
    \def\arraystretch{1.}
    \resizebox{0.41\textwidth}{!}{
    \begin{tabular}{l r}
        \toprule
        Method & GPU hours \\
        \midrule
        MORLHF~\citep{li2020deep,wu2023fine}               & $101.94$ \\
        Rewarded Soups~\citep{rame2023rewarded}  & $18.53$ \\
        Bone Soups~\citep{xie2025bone}           & $\ge 18.78$ \\
        RiC~\citep{yang2024rewards}              & $20.12$ \\
        Panacea~\citep{zhong2024panacea}              & $20.37$\\
        \midrule
        \ours (ours)                             & $\mathbf{8.95}$ \\
        \bottomrule
    \end{tabular}}
    % \vspace{-1em}
\end{wraptable}
\para{Training Cost.}
% Table~\ref{tab:cost} compares total training cost across methods on BeaverTails (Harmless--Helpful).
% MORLHF requires one independent training run per preference vector, resulting in $11\times$ the per-run cost ($101.94$ GPU hours total).
% ; the per-run cost alone already exceeds all other methods.
% Soup-based methods (Rewarded Soups, Bone Soups) train $M=2$ single-objective experts, incurring roughly $18$--$19$ GPU hours; Bone Soups may be higher due to additional hyperparameter tuning of 
% reward combination weights.
% the extrapolation coefficient $\beta$ on validation data.
% RiC requires offline supervised fine-tuning followed by two online steps, totaling $20.12$ GPU hours.
% \ours achieves the lowest training cost at $\mathbf{8.95}$ GPU hours.
Table~\ref{tab:cost} compares total training cost on BeaverTails (Harmless--Helpful).
% Original: MORLHF pays for one run per preference, $11\times$ here, the soup-based methods for $M$ experts, and RiC and Panacea for extra stages or epochs.
MORLHF requires eleven runs here, \ie one per evaluated preference, the soup-based methods train $M$ experts, and RiC and Panacea use additional stages or epochs.
\ours trains a single model in one epoch and is the cheapest of all, at $\mathbf{8.95}$ GPU hours, less than half of any baseline and a tenth of MORLHF, while producing the best fronts.

\begin{figure*}[t]
    \centering
    \subfigure[HH-RLHF]{
        \includegraphics[width=0.32\textwidth]{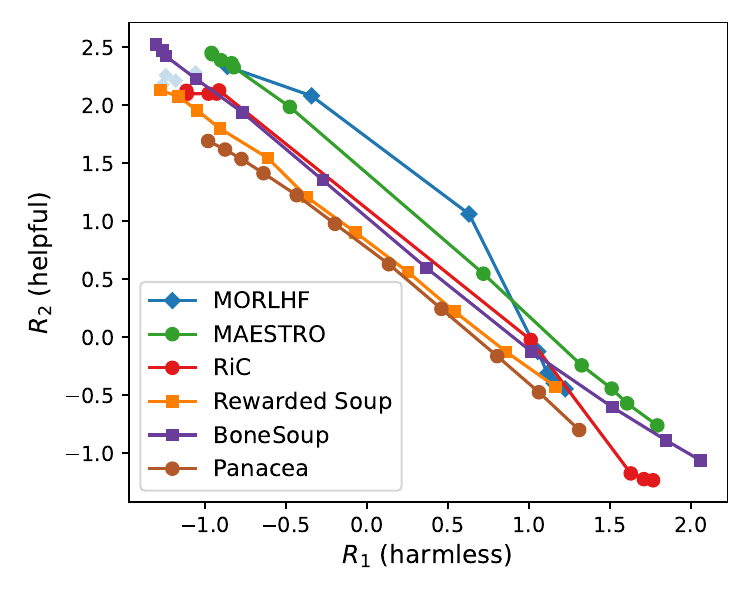}
        \hspace{-0.5em}
        \includegraphics[width=0.32\textwidth]{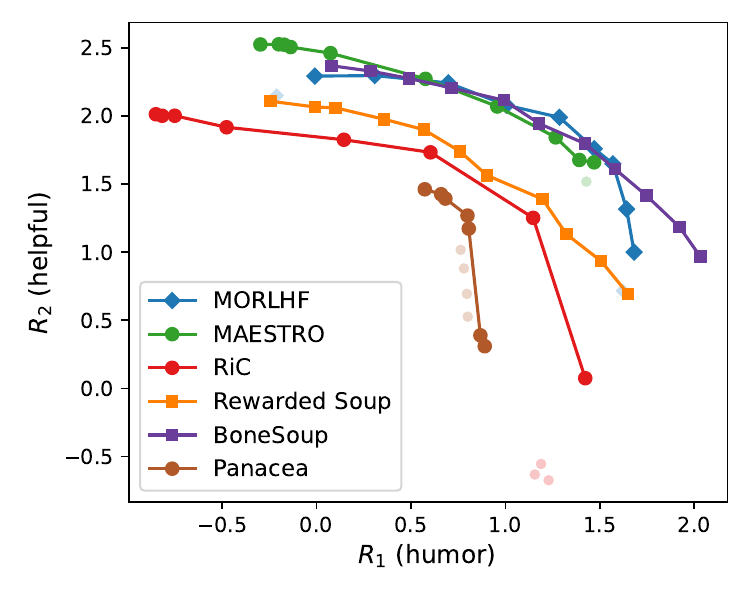}
        \label{fig:pareto_hh}
    }\hspace{-0.5em}
    \subfigure[Summary]{
        \includegraphics[width=0.32\textwidth]{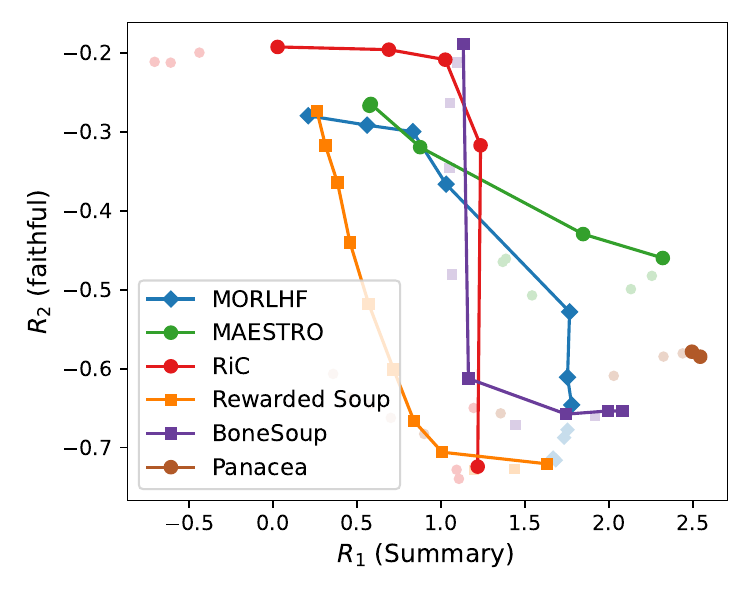}
        \label{fig:summary}
    }
    % \MJ{TODO: BeaverTails Harmless--Helpful figure}
    % \includegraphics[width=0.32\textwidth]{figure/maestro_verbosity_correctness.pdf}
    % \MJ{TODO: HelpSteer Verbosity--Correctness figure}
    \caption{
        {Pareto fronts on (a) HH-RLHF, for Harmless--Helpful and Humor--Helpful, and (b) Reddit TL;DR, for Summary--Faithful.}
        % two-objective alignment across three dataset--objective combinations:
        % Harmless--Helpful on HH-RLHF (left), Humor--Helpful on HH-RLHF (center), and Harmless--Helpful--Humor on BeaverTails (right).
        % Each curve is obtained by sweeping $\w = \{(0.0, 1.0), (0.1, 0.9), \dots, (1.0, 0.0)\}$.
        % \MJ{TODO: Update MAESTRO(BT), seed}
    }
    \label{fig:pareto_main2}
\end{figure*}

% %% -------------------------------------------------------
% \vspace{-1.5ex}
% \subsection{Three Objectives}
% %% -------------------------------------------------------

% To demonstrate scalability beyond two objectives, we evaluate \ours on a three-objective setting with Harmless, Helpful, and Humor on the assistant task.
% \MJ{TODO: analysis}

% \begin{figure}[t]
%     \centering
%     % \includegraphics[width=0.8\textwidth]{figure/pareto_3obj.pdf}
%     \MJ{TODO: 3D Pareto front or projected 2D slices}
%     \caption{
%         Pareto front on the three-objective (Harmless--Helpful--Humor) assistant task.
%         \MJ{TODO: caption}
%     }
%     \label{fig:3obj}
% \end{figure}

%% -------------------------------------------------------
% \vspace{-0.5ex}
\subsection{Ablation Study}
\vspace{-1ex}
%% -------------------------------------------------------
\begin{figure}[t]
    \centering
    \subfigure[Harmless--Helpful]{
        \includegraphics[width=0.36\textwidth]{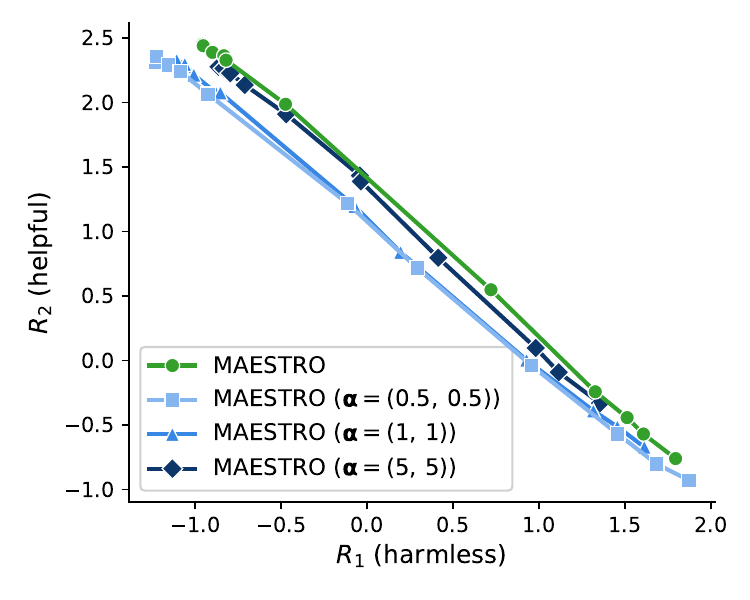}
    }
    \subfigure[Humor--Helpful]{
        \includegraphics[width=0.36\textwidth]{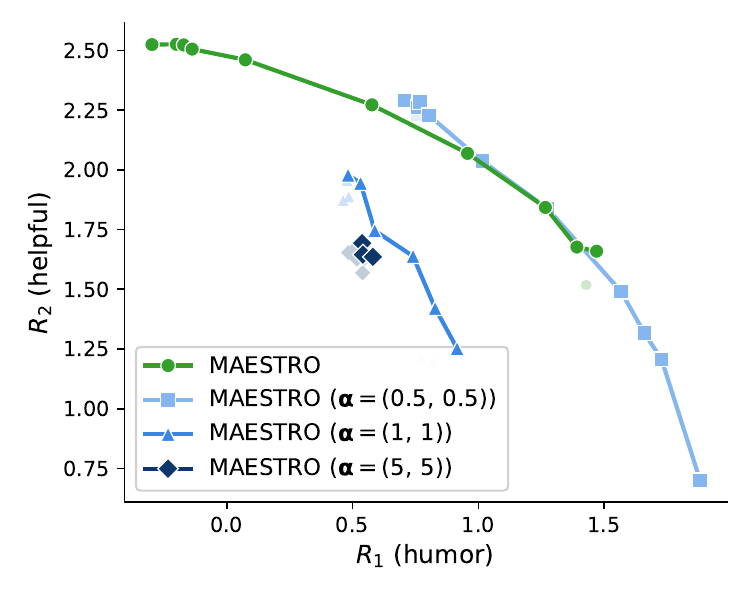}
    }
    \caption{
        Ablation on the adversary, on HH-RLHF: training \ours under fixed sampling distributions $\mathrm{Dir}(\alphaB)$ over the simplex, for three choices of $\alphaB$.
        % \MJ{TODO: MAESTRO(BT)}
    }
    \label{fig:abl_omd}
    \vspace{-1.5ex}
\end{figure}

% \para{Effect of Worst-case Preference Optimization.}
In Figure \ref{fig:abl_omd}, we compare \ours against three fixed sampling distributions, $\mathrm{Dir}(\alphaB)$ with $\alphaB = (0.5,0.5)$, $(1,1)$ and $(5,5)$.
On Harmless--Helpful the gap is modest in extent, \ie \ours keeps the span that the center-concentrated choice $(5,5)$ reaches, while attaining a better trade-off at every point along it.
On Humor--Helpful it widens substantially, where overlapping objectives create hard regions that none of the three tested distributions covers---the failure mode of Lemma~\ref{lem:hard_region}.
No single fixed choice works on both pairs, \eg $(5,5)$ is the best of the three on Harmless--Helpful and the worst on Humor--Helpful, whereas {\ours matches} or beats all of them on either pair.

% The remaining components are ablated in the appendix.
In Figures~\ref{fig:bt_norm} and~\ref{fig:tau_sweep}, replacing the surrogate with min--max normalization changes the fronts little, and so does the sharpness $\tau$: the surrogate is not what carries the result, it only removes the need to know each reward's range.
% Original/current edit: ... in Figure~\ref{fig:tau_sweep} the adversary's step size ...
In Figure~\ref{fig:prompt_abl}, removing the system prefix $p_\text{sys}$, so that the policy sees the weights without any description of the objectives, costs coverage at the extremes, and in Figure~\ref{fig:lr_comparison} \ours traces the better front at each of the three actor learning rates we test.

% \begin{table}[t]
%     \centering
%     \caption{
%         Ablation on the OMD adversary (Harmless--Helpful, BeaverTails).
%         \MJ{TODO: finalize metrics}
%     }
%     \label{tab:ablation_omd}
%     \begin{tabular}{l cccc}
%         \toprule
%         Method & HV$\uparrow$ & IP$\uparrow$ & SP$\downarrow$ & SPC$\downarrow$ \\
%         \midrule
%         MORLHF \citep{li2020deep}               & & & & \\
%         Rewarded Soups \citep{rame2023rewarded} & & & & \\
%         Bone Soups \citep{xie2025bone}          & & & & \\
%         RiC \citep{yang2024rewards}             & & & & \\
%         \midrule
%         \ours w/o Worst-case (uniform only) & & & & \\
%         \ours (ours)                 & & & & \\
%         \bottomrule
%     \end{tabular}
% \end{table}

% \MJ{TODO: additional ablations? e.g., prompt-based steering vs.\ embedding, number of value heads}

% %% -------------------------------------------------------
% \vspace{-1.5ex}
% \subsection{Qualitative Analysis}
% \vspace{-1ex}
% %% -------------------------------------------------------

% To complement the quantitative metrics, we present qualitative examples illustrating how \ours adapts its responses to different preference vectors at inference time.
% Table~\ref{tab:qualitative} shows sample responses generated by \ours for the same prompt under varying preferences $\w$ on the Harmless--Helpful BeaverTails task.
% % \MJ{TODO: analysis — e.g., how response tone/content shifts as $w_{\text{harmless}}$ increases}

\vspace{-1ex}
\section{Conclusion} \label{sec:conc}
\vspace{-1.5ex}

% We presented \oursns, a single-model framework for multi-objective alignment that trains a preference-conditioned policy end-to-end via RL with a minimax objective.
% \ours uses a Dirichlet adversary updated via OMD to concentrate training on hard preference regions, while PPO optimizes against the resulting worst-case distribution.
% {\color{red}Experiments on HH-RLHF, BeaverTails and summarization show that \ours leaves the other single-model methods well behind and matches or exceeds methods that train several models, at the cheapest computational cost.
% Ablations attribute the gain to the adversary, and its benefit is largest on the objective pairs that conflict most.
% The conditioning itself also holds up outside the setting it was trained in: a trained policy transfers to an objective pair it never optimized by a prefix change alone, and the ranking of methods is unchanged when the fronts are re-scored with reward models held out from training.
% A 7B-scale model with prompt-based conditioning therefore already has the steering capability to trace a competitive Pareto front, and what limits a single model is which preferences it is trained on rather than its capacity.}
% We also note that the Pareto front coverage of a single shared model narrows when objectives are highly conflicting, a limitation partially mitigated by the KL-scaling heuristic but not fully resolved.

We propose \ours, a single-model framework for multi-objective alignment that trains one preference-conditioned policy end-to-end, and that decides \emph{which} preferences to train on rather than fixing them in advance.
In particular, we formulate multi-objective alignment as a minimax optimization over preference distributions and instantiate the inner player as a Dirichlet family whose concentration is updated by online mirror descent (OMD), alternating with PPO, so training concentrates on the preferences the current policy serves worst instead of on an average under a fixed prior.
% In addition, we score the adversary on a per-objective rank statistic within the batch, which puts the objectives on a common scale without requiring their ranges.
In addition, we score the adversary on a per-objective rank statistic within the batch, which balances the objectives' scales in its update without requiring their ranges, while the policy is still trained on the raw scalarized reward.
% Original: We justify the method theoretically and empirically: its solution is Pareto-optimal under realizability, and on HH-RLHF, BeaverTails and summarization it leaves the other single-model methods behind while matching or exceeding methods that train several models, at the lowest training cost.
Under simultaneous realizability, the realizing policy is minimax-optimal and supported Pareto-optimal at strictly positive weights (Appendix~\ref{apdx:theory}).
We empirically show that, on HH-RLHF, BeaverTails and summarization, \ours attains the highest hypervolume on all three objective pairs, leaving the other single-model methods behind and matching or exceeding methods that train several models, at the lowest training cost among the compared methods.
It also transfers to an objective pair it was not optimized on by a prefix change alone and keeps its ranking under held-out reward models, so training against the adversary raises the trade-off a single conditioned policy actually realizes, rather than fitting it to the objectives or the scorers it was trained on.
Finally, \ours raises that trade-off by choosing \emph{where} to train, so components that enlarge what a single conditioned policy can represent are complementary to it and a promising future direction.

% \para{Limitations.}
% The Dirichlet family does not cover the full space of distributions over the preference simplex, so the adversary may not identify the true worst-case distribution in general; robustness is therefore relative to this parametric family rather than the full simplex.
% Our experiments are limited to LLaMA-2-7B and two datasets; evaluating on larger models and additional objective combinations remains an important direction for future work.
% When the conditions of Appendix~\ref{apdx:layer3} do not hold, the surrogate may lead the adversary to a preference region other than the true worst case.
% The gains also require the objectives to conflict: on summarization, where the two reward models rank responses nearly independently, every method traces a short front and ours is no exception (Appendix~\ref{apdx:pref2}).
% This points to what \ours does not address, since it decides \emph{where} to train rather than how well the policy can trade the objectives off, so combining it with components that improve that capability is a natural next step.
% Our experiments also cover a single setting---LLaMA-2-7B, three tasks and at most three objectives---leaving larger models, more objectives and human evaluation to be tested.

\clearpage
\section*{Acknowledgements}

We appreciate constructive feedback by anonymous reviewers. 
This work was supported by Institute of Information \& communications Technology Planning \& Evaluation (IITP) and the National Research Foundation of Korea (NRF) grant funded by the Korea government (MSIT) (RS-2019-II191906, Artificial Intelligence Graduate School Program (POSTECH) (5\%); No. RS-2024-00509258 and No. RS-2024-00469482, Global AI Frontier Lab (90\%); RS-2025-00560062 (5\%)).

\bibliographystyle{abbrvnat}
% kaia.cls already sets \bibliographystyle{plain}; only plain \cite is used
\bibliography{tml-lab,llm,sml}

\clearpage
\appendix
\onecolumn

\section{Additional Related Work}
\label{apdx:rel}

\para{Multi-Objective Optimization and MORL.}
% Multi-objective optimization predates language models: classical treatments formalize Pareto optimality and scalarization~\citep{miettinen1999nonlinear}, and methods such as boundary-intersection constructions~\citep{das1997closer} and evolutionary decomposition~\citep{zhang2007moea} trace the front by solving many scalarized subproblems.
Multi-objective optimization predates language models: classical treatments formalize Pareto optimality and scalarization~\citep{miettinen1999nonlinear}, and methods such as evolutionary decomposition~\citep{zhang2007moea} trace the front by solving many scalarized subproblems, {even though an evenly spread set of weights in a weighted sum yields a very unevenly spread set of Pareto points~\citep{das1997closer}.}
Its sequential-decision form is the multi-objective MDP~\citep{white1982multi}, from which multi-objective reinforcement learning (MORL) develops solution concepts that separate a single policy for a known preference from a \emph{coverage set} spanning all of them~\citep{hayes2022practical}.
Conditioning one policy on the preference is an old idea there: the weight enters the value function or the function approximator's input~\citep{castelletti2011multi}, and deep MORL and hypernetwork approaches amortize the whole {preference space} within one model~\citep{yang2019generalized,navon2021learning}; our prompt prefix is the language-model form of that input conditioning.
It is not the default even in MORL, however---C-MORL~\citep{liu2025efficient} trains a population of policies instead, on the grounds that conditioning scales poorly as the state and preference dimensions grow---whereas an instruction-following LLM already turns a preference stated in the prompt into a change in behavior.

\para{Other Conditioning Interfaces.}
Beyond the interfaces compared in Section~\ref{sec:rel}, CPO~\citep{guo2024controllable} states a desired \emph{level} of each objective in the prompt and fine-tunes the model to follow it, so the conditioning signal is a target level rather than a weight.
CLP~\citep{wang2024conditional} conditions through the architecture, forming the policy from a preference-indexed combination of parameter replicas, and reports that prompt-conditioned policies are sensitive to the distribution the preferences are drawn from, with steerability improving as that distribution narrows.
% HoE~\citep{li2025multi} goes further and allocates sub-modules to preferences in a Mixture-of-Experts style, training one backbone together with at least $M$ LoRA experts and their routers, so the preference is served by selecting among per-objective modules rather than by a single set of weights.
HoE~\citep{li2026multi} goes further and allocates sub-modules to preferences in a Mixture-of-Experts style, {extracting at least $M$ LoRA experts by task-vector SVD and weight merging from the $M$ off-the-shelf single-objective models it presupposes, freezing them together with the backbone, and training only a small set of per-preference routers that mix the experts nearest a stated preference.
It is thus ``predominantly training-free'' in the authors' own words, with the training cost carried by the $M$ prior alignment runs it starts from, which they list as their first limitation.}

\para{DPO-Based Multi-Objective Alignment.}
Direct Preference Optimization (DPO) \citep{rafailov2023direct} eliminates the need for an explicit reward model by directly optimizing a policy from preference data.
Several multi-objective extensions {build on it}:
% MODPO \citep{zhou2024beyond} extends the DPO objective to vector-valued preferences but trains a separate policy per preference vector;
% and SIPO \citep{li2025self} applies a self-improvement loop on top of DPO to further refine Pareto optimality over a fixed set of preferences.
% These methods dispense with reward model training; as with MORLHF, each preference vector is handled by its own policy.
MODPO \citep{zhou2024beyond} extends the DPO objective to vector-valued preferences {by first fitting margin reward models on the remaining objectives' preference data, which it then reuses across preference vectors, but, as in MORLHF, each preference vector still gets its own policy};
{SIPO \citep{li2025self} instead aligns one policy per objective and self-improves them: over a fixed set of preference weights it mixes those policies at decoding time, keeps only the responses that Pareto-dominate the originals under the policies' implicit rewards, and fine-tunes on the resulting pairs.
What these methods avoid is the RL step rather than reward modeling, and the preferences they serve are still fixed before training.}

\para{Where the Preference Comes From.}
Preference Orchestrator~\citep{liu2025preference} predicts a per-prompt weight instead of sampling one, so the preference adapts to the query but a point estimate covers no region of the simplex.
% RMOD~\citep{son2026robust} solves a two-player game over $\w$ at \emph{decoding} time against a frozen policy, returning one adversarial weight vector per query in order to avoid requiring a stated preference at all; the policy itself is not trained against that adversary, which is where \ours differs.
RMOD~\citep{son2026robust} solves a maximin game between $\w$ and the sampling policy at \emph{decoding} time on top of a frozen reference model, {recomputing a worst-case $\w$ at every decoding step,} in order to avoid requiring a stated preference at all; {only the per-objective value functions are learned, so} the policy itself is {not} trained against that adversary, which is where \ours differs.
% In the methods that do sample $\w$, the sampling distribution carries parameters set before training---MO-ODPO~\citep{gupta2025robust} reports its best Dirichlet concentration as $1.0$ on one benchmark and $0.7$ on another, CLP selects its own by grid search, and the search grows harder as the number of objectives increases---alongside the per-objective reward bounds that RiC~\citep{yang2024rewards} and MO-ODPO require.
In the methods that do sample $\w$, the sampling distribution carries parameters set before training---MO-ODPO~\citep{gupta2025robust} reports its best Dirichlet concentration as $1.0$ on one benchmark and $0.7$ on another, {with the effect reversing between the two, while CLP fixes a uniform $\mathrm{Dir}(\mathbf{1})$, one parameter per objective, a choice that grows harder to make as that number increases---alongside the per-objective reward bounds that RiC~\citep{yang2024rewards} requires and the rescaling of each reward to roughly $[0,1]$ that MO-ODPO applies before weighting.}

\clearpage
\section{Prompt Template}
\label{apdx:prompt}

\para{Preference Prefix.}
The per-query preference prefix serializes the sampled preference vector $\w$ as a percentage-style natural-language instruction prepended to each user query:
\begin{center}
\small\ttfamily
Be \{100$w^{(1)}$\%\} objective 1, \{100$w^{(2)}$\%\} objective 2.
\end{center}
% \begin{align*}
%     \tilde{\x}(\w) \coloneqq
%     &\texttt{Be }  \textstyle{100\,w^{(1)}}\texttt{\%\ } \texttt{objective 1, }
%     % \\
%     % &\quad
%     {100\,w^{(2)}}\texttt{\%\ } \texttt{objective 2}
%     \ \oplus \ p_\text{sys} \cdots
%     \ \oplus\ \x,
% \end{align*}
For example, with $\w = (0.8, 0.2)$ on the BeaverTails task: \texttt{Be 80\% harmless, 20\% helpful.}

\para{System Prefix ($p_\text{sys}$).}
The system message $p_\text{sys}$ is inserted once per input, before the query.
It describes each objective's meaning and instructs the model to weight its response accordingly.
On the assistant task,
\begin{quote}
\small\ttfamily
You are an adaptive AI assistant.
Your response must follow the objective weights provided above.
Higher weights mean that objective should be prioritized more strongly.
When objectives conflict, prefer the ones with higher weights.

Objectives:

\textit{[per task, one bullet per objective from the descriptions below]}
\end{quote}
and on the summarization task,

\begin{quote}
\small\ttfamily
Follow the weights above: when the objectives conflict, prefer the one with the higher weight.

\textit{[one bullet per objective from the descriptions below]}
\end{quote}

\noindent The per-objective descriptions used in our experiments are the following strings:

\begin{itemize}
    \item \textbf{Harmlessness:} avoid harmful, unsafe, or unethical content.
    \item \textbf{Helpfulness:} provide a thorough, professional, fact-rich response with details.
    \item \textbf{Humor:} keep it brief and casual---ask quick clarifying questions or add light, playful asides.
    % \item \textbf{Humor:} keep it brief and casual — quick clarifying questions, light "I'm not sure" admissions, or short playful asides instead of long detailed expert-style answers.",
    % \item \textbf{Verbosity:} give a detailed, comprehensive response that covers the topic in depth with sufficient length and elaboration.
    % \item \textbf{Correctness:} ensure every factual claim is accurate, verifiable, and free from errors or hallucinations.
    \item \textbf{Summary:} state the main point of the post briefly and clearly.
    \item \textbf{Faithfulness:} keep the summary true to the original post; add nothing it does not state.
\end{itemize}

\clearpage
\section{Implementation Details}
\label{apdx:impl}

\para{Key Implementation/Hyperparameter Details.}
Table~\ref{tab:impl} summarizes the key implementation details for all text generation experiments.
We evaluate Bone Soups \citep{xie2025bone} without the extrapolation step, since the released implementation performs hyperparameter tuning of the extrapolation coefficients on a \emph{validation set that overlaps with the training set}, introducing an additional model-selection step and making the comparison less clean.

\para{Baseline Hyperparameters.}
All methods share the backbone, the LoRA configuration and the training data.
However, Panacea \citep{zhong2024panacea} and MO-ODPO \citep{gupta2025robust} were developed on different substrates, \eg SVD-LoRA with scaling $512$ and PaLM 2 with Adafactor respectively, and neither provides a code implementation we could run in this setting, so we reimplement them in our training stack and sweep the settings their papers identify as important.
For both we sweep the learning rate over $\{10^{-6}, 10^{-5}, 10^{-4}\}$, and for MO-ODPO also the Dirichlet concentration $\alpha \in \{0.5, 1.0, 1.5\}$ and $\beta \in \{0.01, 0.1\}$, selecting each setting on a validation portion split off from the dataset's test split, disjoint from the one we evaluate on.
Their method-specific components are kept unchanged, including MO-ODPO's DPO objective, its on-policy pair construction and its prompt template.
\ours itself shares the same hyperparameters with Rewarded Soups and Bone Soups, which optimize the same PPO objective.

% For theory, we assume each reward is bounded in [0,1]. In practice, reward model outputs can have mismatched scales across objectives. To avoid scale-induced collapse in the adversary update, we normalize per-objective rewards when forming the score-function gradient estimate, while reporting all final results in the original raw reward scale.
\begin{table}[h]
    \centering
    \caption{Key details of model architecture, training, and evaluation for all experiments.}
    \label{tab:impl}
    \resizebox{0.9\textwidth}{!}{
    \begin{tabular}{cc}
        \toprule
        \multicolumn{2}{c}{\textbf{Basic Information}} \\
        \midrule
        Architecture            & Transformer \\
        Pre-training            & Llama 2-7B \citep{touvron2023llama} \\
        Hardware                & NVIDIA A100-80GB \\
        % Quantization            & 8-bit \\
        Fine-tuning strategy    & LoRA \citep{hu2022lora} \\
        LoRA $r$                & 64 \\
        LoRA $\alpha$           & 128 \\
        LoRA dropout            & 0.05 \\
        Optimizer               & Adam \\
        Inference tokens        & 128 \\
        \midrule
        \multicolumn{2}{c}{\textbf{SFT}} \\
        \midrule
        Finetuning steps        & 20{,}000 \\
        Initial learning rate   & $1.41 \times 10^{-4}$ \\
        LR scheduler            & Linear \\
        \midrule
        \multicolumn{2}{c}{\textbf{PPO}} \\
        \midrule
        RL algorithm            & PPO \citep{schulman2017proximal} \\
        Implementation          & \texttt{trl} \citep{vonwerra2020trl} \\
        KL coefficient       & 0.2 \\
        % \textcolor{red}{Initial KL coefficient (adaptive \citep{ziegler2019fine})} & \textcolor{red}{0.2} \\
        % \textcolor{red}{Rollout temperature / top-$p$} & \textcolor{red}{0.7 / 1.0} \\
        % \textcolor{red}{Evaluation temperature / top-$p$} & \textcolor{red}{1.0 / 0.9} \\
        Epochs    & 1 \\
        % Epochs (HelpSteer)      & \MJ{TODO} \\
        Learning rate           & $1 \times 10^{-5}$ \\
        $\lambda$ for GAE       & 0.95 \\
        $\gamma$                & 1 \\
        Clip range              & 0.2 \\
        PPO epochs per batch & 4 \\
        Target KL               & 3 \\
        $B_\mathrm{ppo}$              & 64 \\
        \midrule
        \multicolumn{2}{c}{\textbf{\ours}} \\
        \midrule
        % KL regularization       & 0.2 / $M$ \\
        % Optimization epochs per batch & 4 \\
        % Target KL               & 3 * $M$ \\
        $\eta$ (2-/3-objective) & 0.1 / 0.3 \\
        $B_\mathrm{adv}$ & 8 \\
        $\alphaB_1$ & $[1, \dots, 1]^\top$ \\
        $\tau$ & 1 \\
        $\alpha_{\min}, \alpha_{\max}$ (clamp on $\alphaB$) & $10^{-3}$, $10^{2}$ \\
        \midrule
        \multicolumn{2}{c}{\textbf{Rewards-in-Context \citep{yang2024rewards}}} \\
        \midrule
        Offline finetuning steps    & 20{,}000 \\
        Initial LR (offline/online)        & $1.41 \times 10^{-4}$ / $1 \times 10^{-5}$ \\
        % Initial LR (online)         & $1 \times 10^{-5}$ \\
        LR scheduler (offline/online)      & Linear / Constant\\
        % LR scheduler (online)       & Constant \\
        MORS threshold              & 0.7-quantile per reward dimension \\
        Online generation sample size per iteration & 20{,}000 \\
        Online finetuning steps per iteration       & 4{,}000 \\
        \midrule
        \multicolumn{2}{c}{\textbf{Datasets and Reward Models}} \\
        \midrule
        \multicolumn{2}{c}{\textbf{HH-RLHF}} \\
        Description         & Generate helpful and harmless responses to user queries \\
        Dataset             & \texttt{Anthropic/hh-rlhf} \citep{bai2022training} \\
        Max length            & 161K train/8.5K test with max length 512 \\
        % Harmless reward     & \texttt{PKU-Alignment/PKU-SafeRLHF} \citep{ji2024pku} \\
        \midrule
        \multicolumn{2}{c}{\textbf{BeaverTails}} \\
        Description         & Generate helpful and harmless responses to user queries \\
        Dataset             & \texttt{PKU-Alignment/BeaverTails} \citep{ji2023beavertails} \\
        Max length            & 30K train/3K test with max length 512 \\
        \midrule
        \multicolumn{2}{c}{\textbf{Summary}} \\
        Description         & Summarize a Reddit post of 100--1200 characters \\
        Dataset             & \texttt{openai/summarize\_from\_feedback  \citep{stiennon2020learning}} \\
        Max length          & 2K test with max length 512 \\
        % Harmless reward     & 123 \\
        % Helpful reward      & \MJ{TODO} \\
        \midrule
        \multicolumn{2}{c}{\textbf{Reward Models}} \\
        Harmlessness         & \texttt{Ray2333/gpt2-large-harmless-reward\_model} \citep{yang2024rewards} \\
        Helpfulness          & \texttt{Ray2333/gpt2-large-helpful-reward\_model} \citep{yang2024rewards} \\
        Humor                & \texttt{mohameddhiab/humor-no-humor} \citep{mohameddhiab_humor} \\
        Summary              & \texttt{Tristan/gpt2\_reward\_summarization} \citep{tristan_summary_rm} \\
        Faithfulness         & \texttt{CogComp/bart-faithful-summary-detector} \citep{cogcomp_faithful_rm} \\
        \midrule
        \multicolumn{2}{c}{\textbf{Held-out Reward Models} (Appendix~\ref{apdx:other_rm})} \\
        Harmlessness         & \texttt{nicholasKluge/Harmless-RewardModel} \citep{nicholas22aira} \\
        Helpfulness          & \texttt{RLHFlow/RewardModel-Mistral-7B-for-DPA-v1} \citep{wang2024arithmetic} \\
        \bottomrule
    \end{tabular}
    }
\end{table}

\para{PPO Implementation Details.}
We maintain $M$ independent value heads $V^{(m)}_\theta(\x,\w)$, one per objective, sharing the LLM backbone.
Each head is trained with a clipped MSE loss:
\begin{equation}
    L_{\mathrm{v}}^{(m)}(\theta) \coloneqq \Exp_{\w\sim p_{k+1}}\, \Exp_{\x,\y} \!\left[\max\!\left((V^{(m)}_\theta(\x,\w) - \hat{R}^{(m)})^2,\; (V^{(m)}_{\mathrm{clip}} - \hat{R}^{(m)})^2\right)\right],
\end{equation}
where $V^{(m)}_\text{clip} \coloneqq \mathrm{clip}(V^{(m)}_\theta(\x,\w), V^{(m)}_{\mathrm{old}}{-}\epsilon_v, V^{(m)}_{\mathrm{old}}{+}\epsilon_v)$
and $\hat{R}^{(m)} \coloneqq \hat{A}^{(m)}(\x,\y,\w) + V_{\mathrm{old}}^{(m)}(\x,\w)$ is the $m$-th return estimate.
The objective-specific advantages $\hat{A}^{(m)}$ are estimated via GAE using the corresponding value head.
The final training loss combines the PPO clipped surrogate and the value losses:
\begin{equation}
    L_\text{total}(\theta) = L_\text{PPO}(\theta) + \lambda_\text{value} \sum_{m=1}^M L_{\mathrm{v}}^{(m)}(\theta).
\end{equation}

\subsection{Evaluation Metrics}
We evaluate Pareto front quality using the following four metrics.

\textbf{Hypervolume} (HV$\uparrow$) measures the volume of the objective space dominated by the Pareto front and bounded by a reference point $\mathbf{r}_\text{ref}$:
\begin{equation*}
    \mathrm{HV} \coloneqq \mathrm{Volume}\!\left(\bigcup_{i=1}^{n} [\mathbf{r}_\text{ref},\, \mathbf{r}_i]\right),
\end{equation*}
where $\{\mathbf{r}_i\}_{i=1}^n$ are the Pareto front points and $[\mathbf{r}_\text{ref}, \mathbf{r}_i]$ denotes the hyperrectangle between the two points.
% We set $\mathbf{r}_\text{ref}$ to the component-wise minimum across all baselines and all evaluated preference vectors, shifted by $0.1 \times \text{range}$ for numerical stability.
We set $\mathbf{r}_\text{ref}$ to the component-wise minimum over all methods and all evaluated preference vectors, minus $0.1 \times \text{range}$ per objective for numerical stability.

\textbf{Inner Product} (IP$\uparrow$) measures preference-weighted reward alignment, averaged over the evaluated preference grid:
\begin{equation*}
    \mathrm{IP} \coloneqq \frac{1}{|\mathcal{W}|}\sum_{\w \in \mathcal{W}} J_\w(\pi) , \qquad J_\w(\pi) = \w^\top\J\bigl(\pi(\cdot\mid\cdot,\w)\bigr).
\end{equation*}
A higher IP indicates that the policy consistently achieves high scalarized return under each preference vector it was queried with.

% Both summarize the front as a whole, so we additionally report two quantities that look at its worst point.
% Writing $s_\w \coloneqq \w^\top \mathbf{J}_\w(\pi)$ for the scalarized return at preference $\w$:

\textbf{Minimum Scalarized Return} (MSR$\uparrow$) is the return at the worst-served preference of the evaluated grid, \ie a pointwise minimum over that grid rather than the inner value of~\eqref{eq:objective}, whose minimization runs over Dirichlet distributions:
\begin{equation*}
    \mathrm{MSR} \coloneqq \min_{\w \in \mathcal{W}} s_\w,
\end{equation*}
where $s_\w \coloneqq J_\w(\pi)$.

% \textbf{Preference Gap} (PGAP$\downarrow$) is the spread between the best- and worst-served preference, which is small both when the grid is served evenly and when the policy ignores the preference altogether, so it is read together with MSR and the front itself rather than as a coverage measure:
\textbf{Preference Gap} (PGAP$\downarrow$) is the spread between the best- and worst-served preferences. It is small when the evaluated grid is served evenly, but can also be small when the policy ignores the preference, so we interpret it together with MSR and the front itself rather than as a standalone coverage measure:
\begin{equation*}
    \mathrm{PGAP} \coloneqq \max_{\w \in \mathcal{W}} s_\w - \min_{\w \in \mathcal{W}} s_\w .
\end{equation*}

\subsection{Objectives and Their Reward Models}
Each objective is defined operationally by the reward model that scores it, all of which are frozen during training and used identically by every method:
\begin{itemize}[itemsep=1pt, topsep=2pt]
    \item \textbf{Harmlessness} and \textbf{Helpfulness}: GPT-2 large reward models trained on the harmless and helpful splits of HH-RLHF~\citep{yang2024rewards}, returning a scalar preference score for a response given the query.
    \item \textbf{Humor}: a binary humor classifier~\citep{mohameddhiab_humor}; we use the logit of the humorous class, so the objective rewards brief, casual replies over long expository ones.
    \item \textbf{Summary}: a GPT-2 reward model trained on the TL;DR human comparisons~\citep{tristan_summary_rm}, scoring how good a summary of the post is.
    \item \textbf{Faithfulness}: a BART faithfulness detector~\citep{cogcomp_faithful_rm}; we use the logit of the faithful class, which penalizes content the post does not support.
\end{itemize}
% Each checkpoint is cited as its model card asks, so some citations point to the paper that released it rather than to the checkpoint itself: RiC~\citep{yang2024rewards} for the harmless and helpful reward models, Aira~\citep{nicholas22aira} for the held-out harmlessness model, and DPA~\citep{wang2024arithmetic} for the held-out helpfulness head.

% \clearpage
\section{Additional Experimental Results}

% \subsection{Additional Experiments with Qwen3-8B}
% \MJ{TODO}

\subsection{Quantitative Results}
\label{apdx:metrics}

\begin{table}[h]
    \centering
    % \caption{Worst-case metrics over five seeds (mean $\pm$ std), defined in Appendix~\ref{apdx:impl}: MSR$\uparrow$ is the scalarized return at the preference a method serves worst, and PGAP$\downarrow$ the spread between its best- and worst-served preference.
    % \ours is best on both metrics on the two BeaverTails pairs; while on summarization, Bone Soups attains the best MSR and RiC the smallest PGAP, consistent with how little trade-off that objective pair admits (Appendix~\ref{apdx:fail}).}
    \caption{Worst-case metrics over five seeds (mean $\pm$ std), defined in Appendix~\ref{apdx:impl}: MSR$\uparrow$ is the scalarized return at the preference a method serves worst, and PGAP$\downarrow$ is the spread between its best- and worst-served preferences. \ours is best on both metrics on the two BeaverTails pairs. On summarization, Bone Soups attains the best MSR and RiC the smallest PGAP, consistent with how little trade-off that objective pair admits (Appendix~\ref{apdx:fail}).}
    \label{tab:worstcase}
    \resizebox{\textwidth}{!}{
    \begin{tabular}{l cc cc cc}
        \toprule
        & \multicolumn{4}{c}{BeaverTails} & \multicolumn{2}{c}{Summary} \\
        \cmidrule(lr){2-5} \cmidrule(lr){6-7}
        & \multicolumn{2}{c}{Harmless--Helpful} & \multicolumn{2}{c}{Humor--Helpful} & \multicolumn{2}{c}{Summary--Faithful} \\
        \cmidrule(lr){2-3} \cmidrule(lr){4-5} \cmidrule(lr){6-7}
        Method & MSR$\uparrow$ & PGAP$\downarrow$ & MSR$\uparrow$ & PGAP$\downarrow$ & MSR$\uparrow$ & PGAP$\downarrow$ \\
        \midrule
        Rewarded Soups \citep{rame2023rewarded} & $0.293$ \std{0.019} & $2.032$ \std{0.041} & $0.349$ \std{0.043} & $2.001$ \std{0.030} & $-0.274$ \std{0.006} & $1.958$ \std{0.039} \\
        Bone Soups \citep{xie2025bone} & $0.407$ \std{0.018} & $2.003$ \std{0.048} & $0.665$ \std{0.088} & $1.820$ \std{0.117} & $\mathbf{-0.184}$ \std{0.004} & $2.256$ \std{0.042} \\
        RiC \citep{yang2024rewards} & $0.024$ \std{0.078} & $2.162$ \std{0.148} & $-0.049$ \std{0.125} & $2.313$ \std{0.144} & $-0.244$ \std{0.012} & $\mathbf{1.324}$ \std{0.057} \\
        Panacea \citep{zhong2024panacea} & $0.315$ \std{0.071} & $1.245$ \std{0.066} & $0.196$ \std{0.031} & $1.347$ \std{0.081} & $-0.618$ \std{0.041} & $3.139$ \std{0.035} \\
        % MO-ODPO \citep{gupta2025robust} & $-0.181$ \std{0.023} & $1.755$ \std{0.030} & $-0.552$ \std{0.039} & $2.239$ \std{0.041} & $-0.457$ \std{0.012} & $\mathbf{0.925}$ \std{0.009} \\
        \midrule
        \rowcolor{gray!12} \textbf{\ours} & $\mathbf{0.760}$ \std{0.029} & $\mathbf{1.161}$ \std{0.016} & $\mathbf{1.084}$ \std{0.036} & $\mathbf{1.110}$ \std{0.055} & $-0.282$ \std{0.028} & $2.153$ \std{0.214} \\
        % \rowcolor{gray!12} \textbf{\ours} (norm.) & $\mathbf{0.785}$ \std{0.052} & $\mathbf{1.074}$ \std{0.078} & $\mathbf{1.135}$ \std{0.041} & $\mathbf{0.987}$ \std{0.059} & $-0.289$ \std{0.013} & $1.775$ \std{0.165} \\
        \bottomrule
    \end{tabular}}
\end{table}

\begin{table}[h]
    \centering
    \caption{
        Front-quality metrics over five seeds (mean $\pm$ std): hypervolume (HV$\uparrow$) and inner product (IP$\uparrow$), defined in Appendix~\ref{apdx:impl}.
        \ours ranks first in all four BeaverTails columns and in summarization HV; it ranks third on summarization IP, behind Panacea and Bone Soups.
        \textbf{Bold} and \underline{underline} denote the best and second-best per column, respectively.
        % We exclude MORLHF, which requires a separate training run per preference vector and cannot generalize across the simplex at inference time (see Table~\ref{tab:cost} for cost comparison).
    }
    \setlength{\tabcolsep}{4pt}
    \label{tab:hv_ip}
    \resizebox{0.95\textwidth}{!}{%
    \begin{tabular}{l cc cc cc}
        \toprule
        & \multicolumn{4}{c}{BeaverTails} & \multicolumn{2}{c}{Summary} \\
        \cmidrule(lr){2-5} \cmidrule(lr){6-7}
        & \multicolumn{2}{c}{Harmless--Helpful} & \multicolumn{2}{c}{Humor--Helpful} & \multicolumn{2}{c}{Summary--Faithful} \\
        \cmidrule(lr){2-3} \cmidrule(lr){4-5} \cmidrule(lr){6-7}
        Method & HV$\uparrow$ & IP$\uparrow$ & HV$\uparrow$ & IP$\uparrow$ & HV$\uparrow$ & IP$\uparrow$ \\
        \midrule
        Rewarded Soups \citep{rame2023rewarded} & $8.790$ \std{0.083} & $0.976$ \std{0.018} & $5.540$ \std{0.137} & $0.995$ \std{0.026} & $0.923$ \std{0.024} & $0.329$ \std{0.014} \\
        Bone Soups \citep{xie2025bone} & \underline{$9.475$} \std{0.107} & \underline{$1.124$} \std{0.021} & \underline{$6.976$} \std{0.373} & \underline{$1.232$} \std{0.040} & \underline{$1.470$} \std{0.026} & \underline{$0.645$} \std{0.024} \\
        RiC \citep{yang2024rewards} & $6.603$ \std{0.220} & $0.828$ \std{0.044} & $4.547$ \std{0.493} & $0.901$ \std{0.053} & $1.321$ \std{0.035} & $0.331$ \std{0.024} \\
        {Panacea \citep{zhong2024panacea}} & $7.717$ \std{0.320} & $0.766$ \std{0.055} & $4.464$ \std{0.093} & $0.766$ \std{0.039} & $0.790$ \std{0.019} & $\mathbf{0.670}$ \std{0.015} \\
        % {MO-ODPO \citep{zhou2024moodpo}} & $6.202$ \std{0.066} & $0.685$ \std{0.005} & $2.928$ \std{0.048} & $0.589$ \std{0.006} & $0.630$ \std{0.022} & $0.056$ \std{0.008} \\
        \midrule
        % \rowcolor{gray!12} \textbf{\ours}~({norm}) & \underline{$10.505$} \std{0.318} & \underline{$1.255$} \std{0.057} & \underline{$8.851$} \std{0.287} & \underline{$1.500$} \std{0.030} & $1.356$ \std{0.089} & $0.543$ \std{0.057} \\
        \rowcolor{gray!12} \textbf{\ours} & $\mathbf{10.590}$ \std{0.221} & $\mathbf{1.320}$ \std{0.040} & $\mathbf{8.867}$ \std{0.279} & $\mathbf{1.505}$ \std{0.039} & $\mathbf{1.478}$ \std{0.111} & $0.604$ \std{0.041} \\
        \bottomrule
    \end{tabular}%
    }
    \vspace{-1em}
\end{table}

\clearpage
\subsection{On Reproducing MO-ODPO}
\label{apdx:moodpo}

MO-ODPO~\citep{gupta2025robust} shares our interface, writing the preference vector into the prompt and training the conditioned policy online, but it labels the response as preferred with the higher $\w^\top\r$ within each sampled pair and applies the standard DPO loss, so only the ordering of the two enters the loss and not the size of the gap.
Two consequences matter for reproduction: (1) the front can only open up if the backbone already separates responses along both objectives, since otherwise no $\w$ changes the label, (2) and unless the rewards share a scale, the label is decided by whichever reward varies most, which is why the method rescales each reward with its bounds $r^{(m)}_{\min}$ and $r^{(m)}_{\max}$.

With LoRA on LLaMA-2-7B, and with that rescaling applied, the fronts stay collapsed (Figure~\ref{fig:moodpo}), and raising the number of epochs advances the front slightly without spreading it.
The original work trains longer on a larger backbone with full fine-tuning and does report a spread front, which suggests that how far the front opens up is governed mainly by how steerable the backbone already is; we therefore report the collapse as a property of our setting rather than of the method.

\begin{figure}[h]
    \centering
    \includegraphics[width=0.42\textwidth]{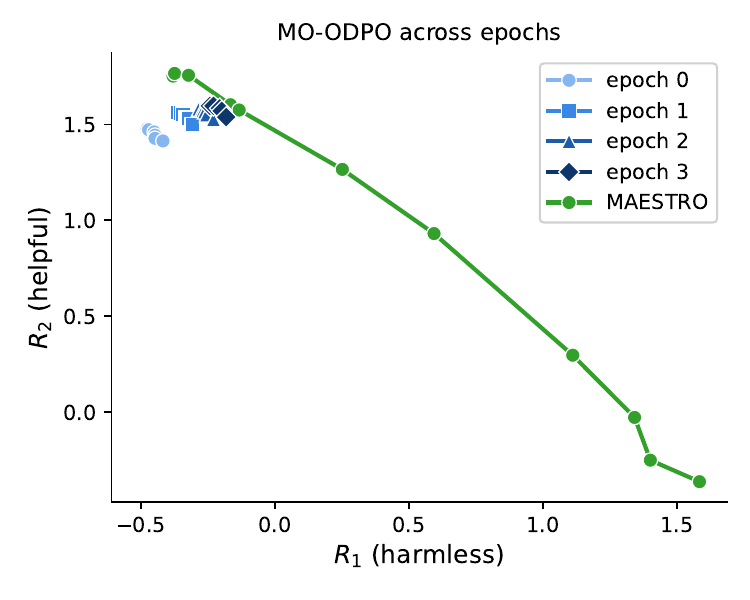}
    \includegraphics[width=0.42\textwidth]{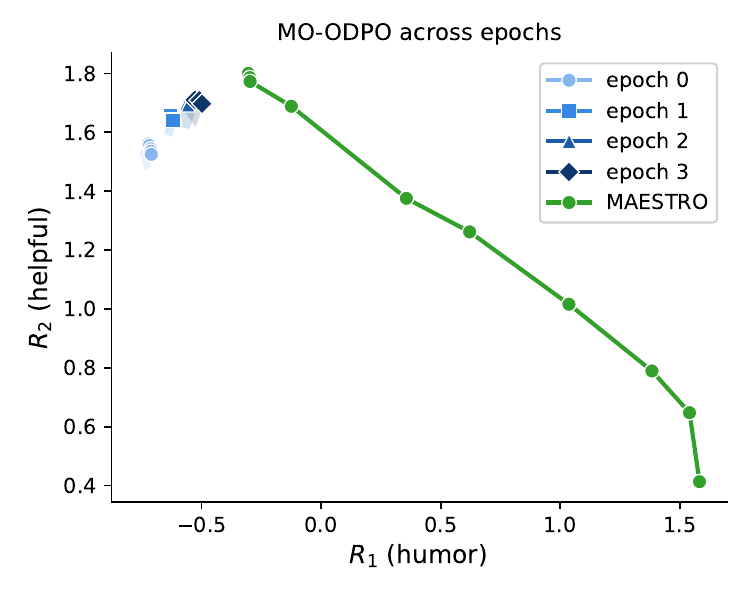}
    \caption{
        Pareto fronts of MO-ODPO~\citep{gupta2025robust} after each of four epochs, against \ours, on BeaverTails Harmless--Helpful (left) and Humor--Helpful (right).
        Epoch $0$ denotes the model after one epoch of training.
        Every run rescales each reward {to roughly $[0,1]$ before weighting, matching MO-ODPO's experimental setup.} Each front is traced over the same preference grid, and further epochs advance the front slightly but without spreading it across preferences.
    }
    \label{fig:moodpo}

\end{figure}

% \subsection{$\alpha$ ablation}
% \MJ{TODO}

\subsection{Min--Max Normalization in Place of the Surrogate}
\label{apdx:norm}

\begin{figure}[h]
    \centering
    \includegraphics[width=0.42\textwidth]{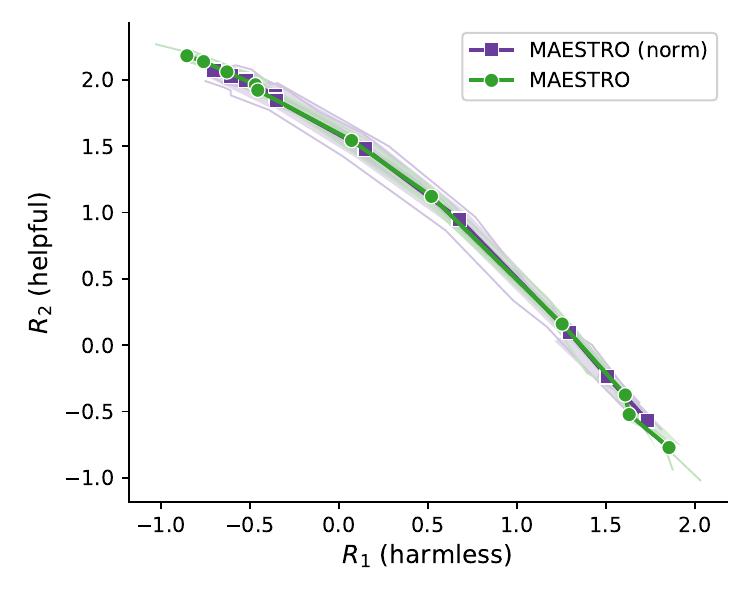}
    \includegraphics[width=0.42\textwidth]{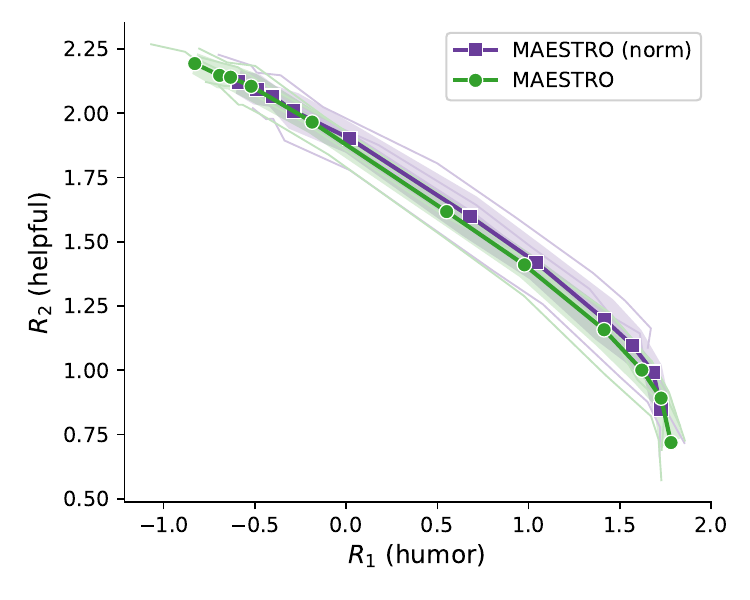}
    \caption{
        {Replacing the surrogate~\eqref{eq:soft_rank} with min--max normalization of each reward, on BeaverTails harmless--helpful (left) and humor--helpful (right). This variant puts the objectives on a common scale the way MO-ODPO~\citep{gupta2025robust} does, by rescaling each reward with its range $(r^{(m)}_{\min}, r^{(m)}_{\max})$, which has to be known in advance. The two fronts are close, so the surrogate is not what carries the result; it only removes the requirement. Each front is averaged over $5$ seeds, with the shaded band showing one standard deviation.}
    }
    \label{fig:bt_norm}
\end{figure}

\clearpage
\subsection{Sensitivity to the Surrogate's Sharpness $\tau$}
\label{apdx:surrogate_ablation}

% \begin{table}[h]
%     \centering
%     \caption{Hypervolume of the Pareto front at different $\tau$.}
%     \label{tab:tau_sweep}
%     \begin{tabular}{lcccc}
%     \toprule
%     Hypervolume & $\tau{=}0.5$ & $\tau{=}1$ & $\tau{=}2$ & $\tau{=}5$ \\
%     \midrule
%     BeaverTails harmless--helpful & 6.82 & 6.85 & 6.96 & \textbf{7.03} \\
%     BeaverTails humor--helpful    & \textbf{3.38} & 2.89 & 3.03 & 3.27 \\
%     TL;DR summary--faithful       & \textbf{0.282} & 0.282 & 0.265 & 0.255 \\
%     \bottomrule
%     \end{tabular}
% \end{table}

The surrogate~\eqref{eq:soft_rank} scores each sample against the rest of its batch, as $\tau\to\infty$ it converges to the within-batch rank and discards the level entirely, while at small $\tau$ the sigmoid stays near-linear over the batch's reward spread.
We sweep $\tau \in \{0.5, 1, 2, 5\}$ on three objective pairs, holding everything else fixed and evaluating the same $11$ preference vectors.

\begin{figure}[h]
    \centering
    \includegraphics[width=0.32\textwidth]{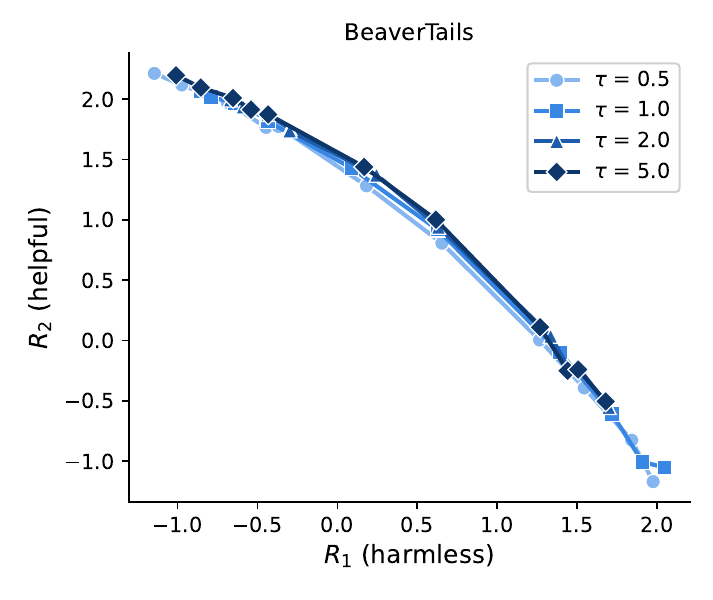}
    \includegraphics[width=0.32\textwidth]{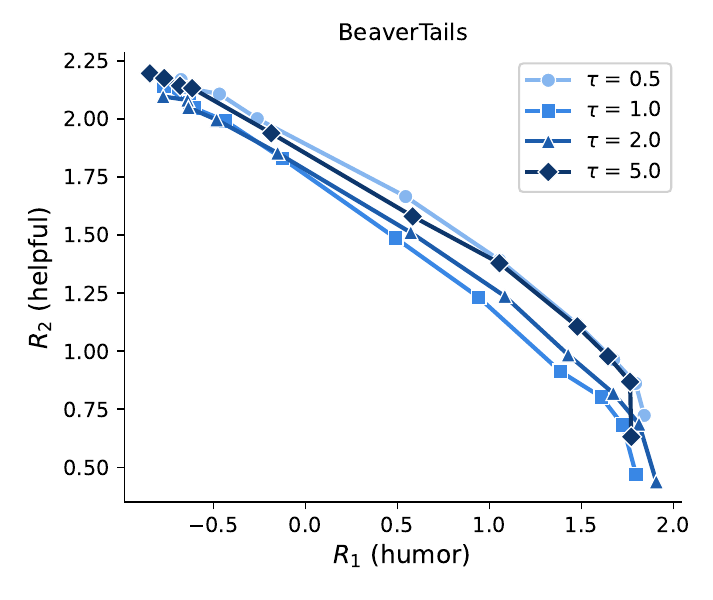}
    \includegraphics[width=0.32\textwidth]{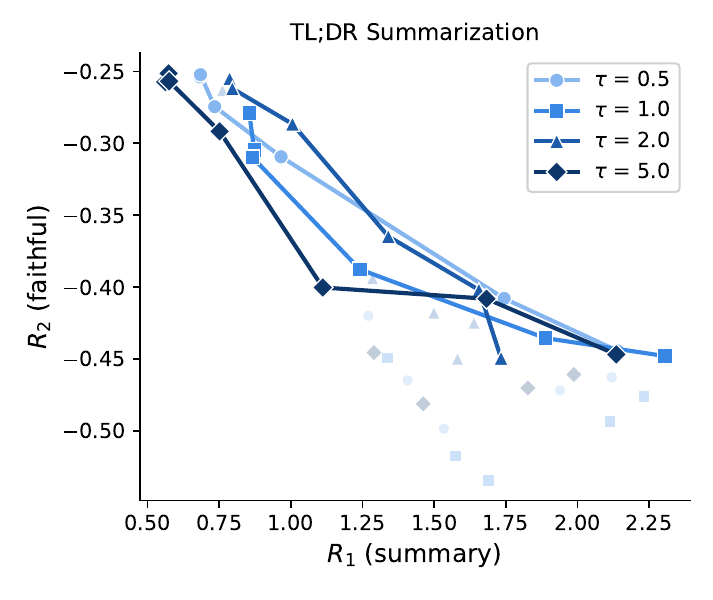}
    \caption{
        Pareto fronts under $\tau \in \{0.5, 1, 2, 5\}$ on BeaverTails harmless--helpful, BeaverTails humor--helpful, and TL;DR summary--faithful. As shown in the figure, the sweep does not separate the settings by much. The four fronts are close on harmless--helpful, where the moderate settings space the trade-off somewhat more evenly across the preference grid than the largest $\tau$, and the other two pairs show no consistent ordering. What it suggests is that $\tau$ is better kept moderate than large, \ie that the adversary benefits from a payoff that still reflects how widely the policy separates responses within a batch. Since the differences are small, we use $\tau = 1$ throughout the paper.
    }
    \label{fig:tau_sweep}
\end{figure}

\subsection{Evaluation under Held-Out Reward Models}
\label{apdx:other_rm}

% A front that looks good under the reward models used for training may only reflect how well a policy fit those particular scorers.
To test whether \ours generalizes beyond the reward models it was trained on,
we additionally re-evaluate every method with a held-out reward model per objective, listed in Appendix~\ref{apdx:impl}.
% Nothing is regenerated---the responses are identical to those behind the in-domain figures and only the scorer changes---so any movement is attributable to reward-model fitting alone.
In Figure~\ref{fig:indep_eval}, \ours still traces the outermost front on both datasets under the held-out scorers, whereas RiC's front falls well inside those of the RL-based methods.
This is the other side of its supervised formulation: conditioning on target values of the reward models it was fit to reproduces their behavior directly, which is also what makes it the most dependent on those particular scorers.

\para{A pathology in the HH-RLHF harmlessness reward.}
The HH-RLHF panel should be read with care, since the held-out scorers expose a property of the \emph{training} reward model that affects every method equally.
At harmlessness-heavy preferences the policies refuse questions that carry no harm at all, such as ``What are some of the health benefits of yogurt?'', and the training reward model pays for exactly these refusals while the held-out one penalizes them, so the same response changes sign between the two scorers.
This is why the harmlessness-heavy half of every HH-RLHF front degrades on both axes at once, and it does not arise on BeaverTails, whose prompts are genuinely harmful and where refusal is the correct response; we therefore read BeaverTails as the clean held-out comparison.

\begin{figure}[h]
    \centering
    \includegraphics[width=0.4\textwidth]{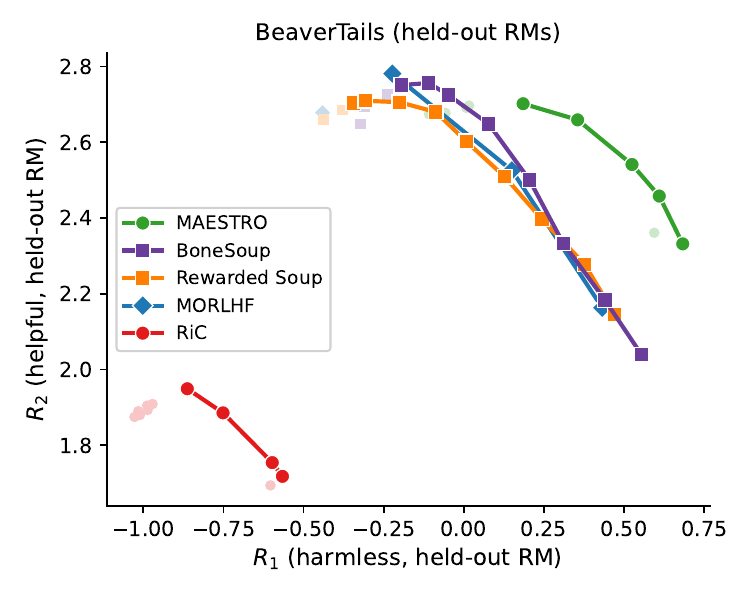}
    \includegraphics[width=0.4\textwidth]{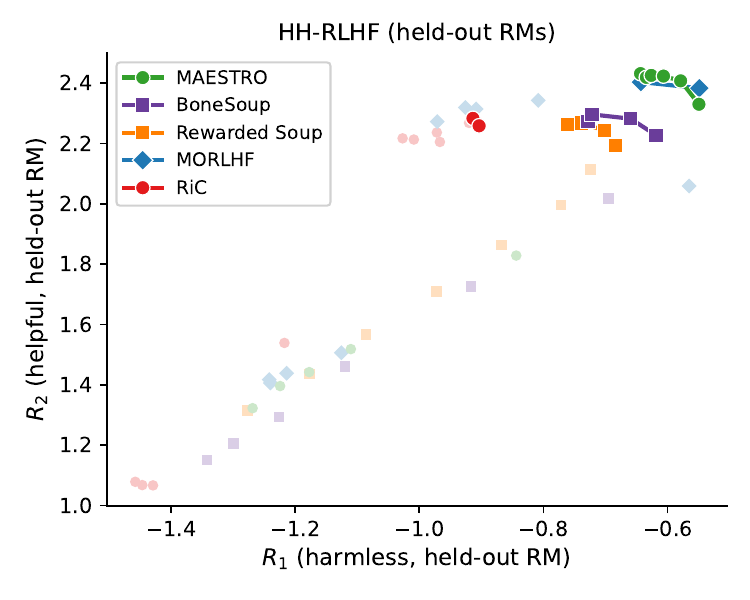}
    \caption{
        Pareto fronts on BeaverTails and HH-RLHF for Harmless--Helpful, re-scored with reward models never used in training. \ours stays ahead under the held-out scorers, whereas RiC collapses on both datasets.
    }
    \label{fig:indep_eval}
    \vspace{-1em}
\end{figure}

\clearpage
\subsection{Effect of Prompt Template}
\label{apdx:prompt_ablation}

To assess the contribution of the system prefix $p_\text{sys}$, we compare two prompt configurations on both BeaverTails objective pairs.

\begin{itemize}
    % \item \textbf{w/ $p_\text{sys}$} (default): percentage-style preference prefix (\eg \texttt{Be 80\% harmless, 20\% helpful.}) prepended with the system message $p_\text{sys}$ describing each objective.
    % \item \textbf{w/o $p_\text{sys}$}: same preference prefix without any system message.
    \item \textbf{With $p_\text{sys}$} (default): a percentage-style preference prefix (\eg \texttt{Be 80\% harmless, 20\% helpful.}) accompanied by the system message $p_\text{sys}$ that describes each objective.
    \item \textbf{Without $p_\text{sys}$}: the same preference prefix without a system message.
\end{itemize}

\begin{figure}[h]
    \centering
    \subfigure[Harmless--Helpful]{
        \includegraphics[width=0.48\textwidth]{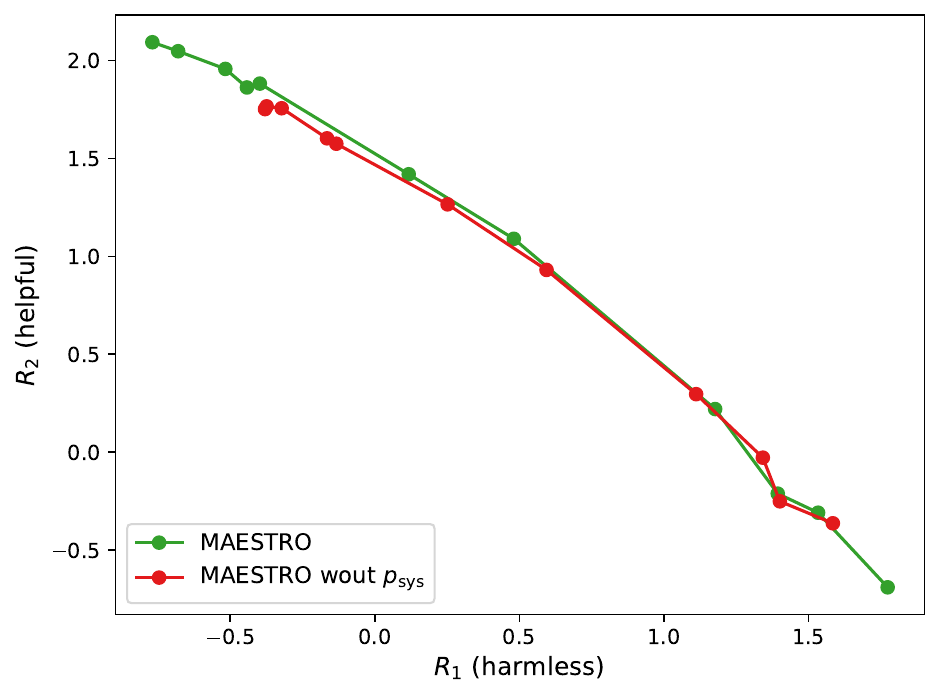}
    }
    \subfigure[Humor--Helpful]{
        \includegraphics[width=0.48\textwidth]{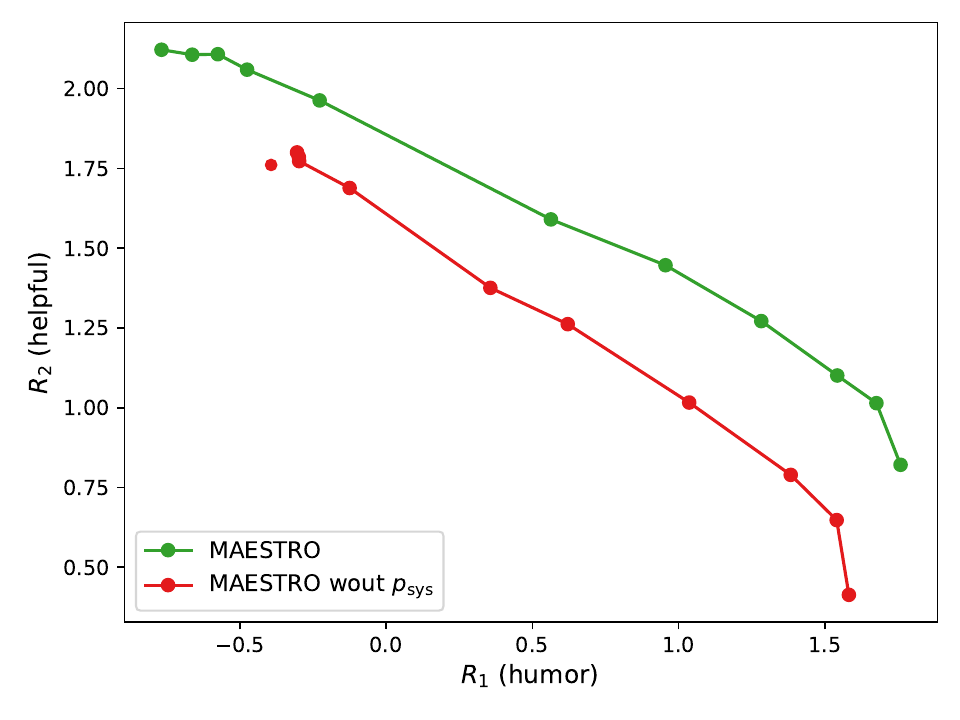}
    }
    \caption{Comparison of Pareto fronts with and without the system prefix $p_\text{sys}$ on BeaverTails Harmless--Helpful (left) and Humor--Helpful (right). The system prefix improves front quality, especially for Humor--Helpful, where the objectives are more conflicting.
    }
    \label{fig:prompt_abl}
\end{figure}

\subsection{On KL Regularization in MAESTRO}

\begin{figure}[h]
    \centering
    \subfigure[Harmless--Helpful]{
        \includegraphics[width=0.28\textwidth, clip, trim=0.5cm 0.5cm 0.3cm 0.3cm]{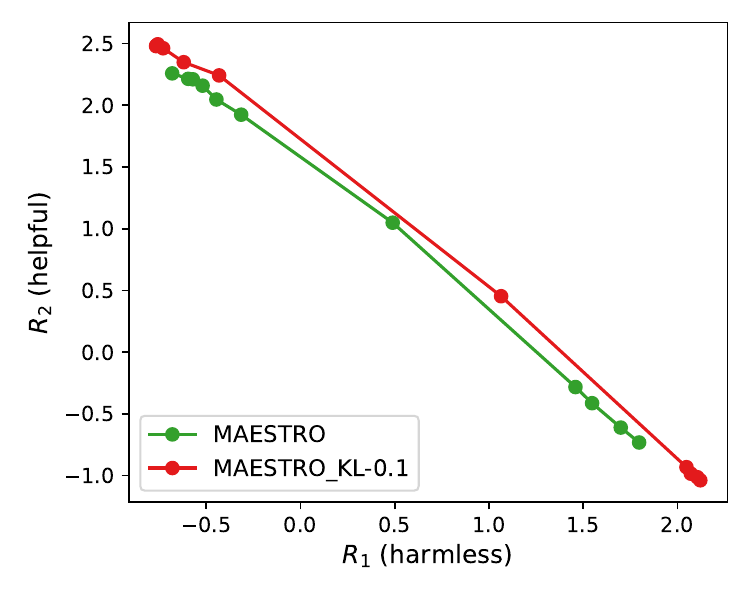}
    }
    \subfigure[Humor--Helpful]{
        \includegraphics[width=0.28\textwidth, clip, trim=0.5cm 0.5cm 0.3cm 0.3cm]{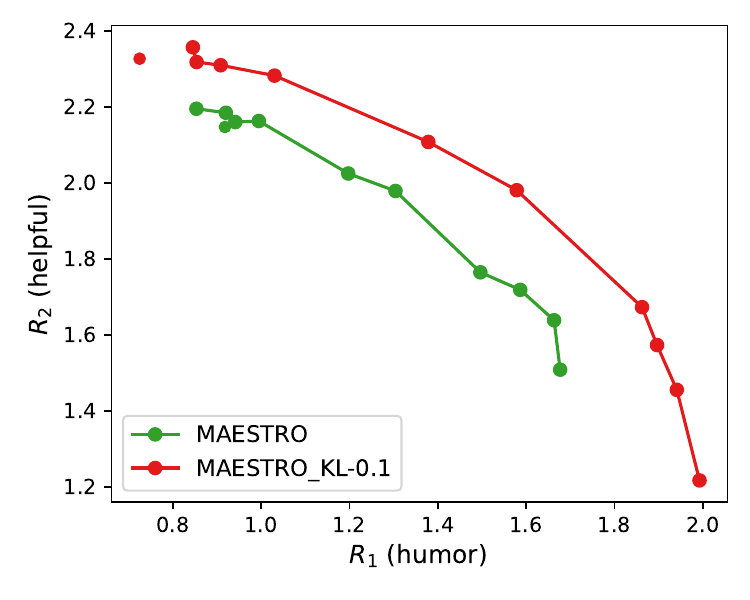}
    }
    \subfigure[Harmless--Helpful--Humor]{
        \includegraphics[width=0.37\textwidth]{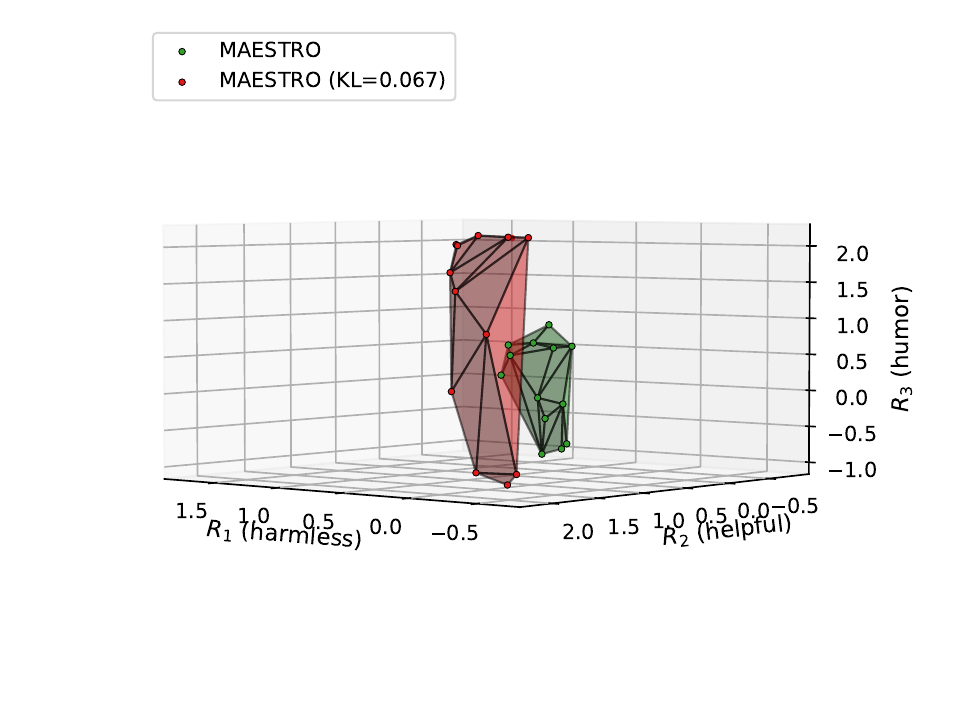}
    }
    \caption{Pareto fronts of \ours with and without KL scaling.
    (a, b) HH-RLHF, $\beta_{KL}=0.1$; (c) BeaverTails, $\beta_{KL}=0.067$.
    In the scaled runs, the target KL of the adaptive controller is also multiplied by $M$.}
    \label{fig:kl_abl}
\end{figure}

\clearpage
\subsection{Limitation of Bone Soups}
\label{apdx:lim:bs}

\begin{figure}[h]
    \centering
    % \subfigure[Harmless-Helpful]{
    \subfigure[Harmless--Helpful]{
        \includegraphics[width=0.48\textwidth, clip, trim=0.5cm 0.5cm 0.3cm 0.3cm]{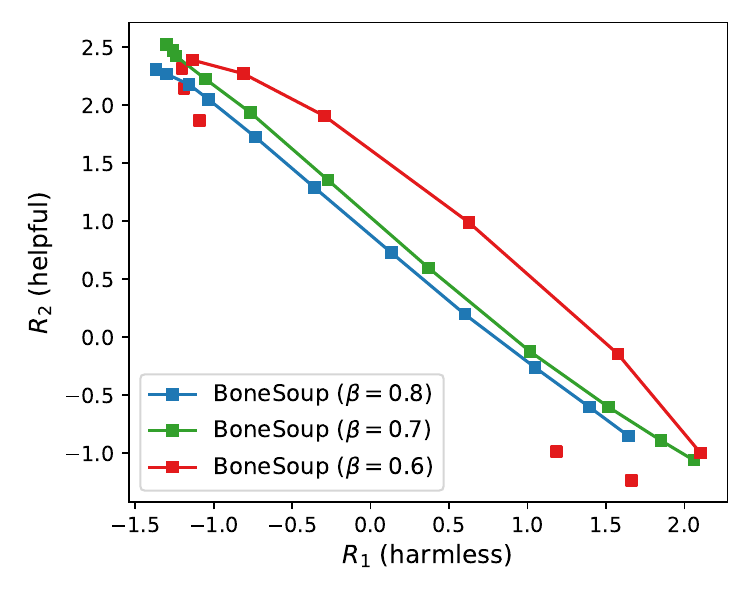}
    }
    % \subfigure[Humor-Helpful]{
    \subfigure[Humor--Helpful]{
        \includegraphics[width=0.48\textwidth, clip, trim=0.5cm 0.5cm 0.3cm 0.3cm]{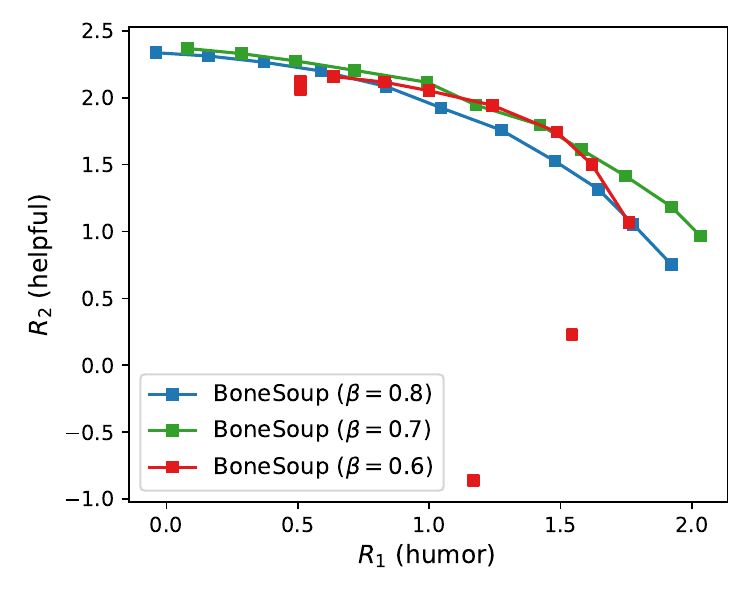}
    }
    \caption{Comparison of the Pareto fronts of Bone Soups \citep{xie2025bone} with different $\beta$ values on the Helpful Assistant task. The Pareto front collapses for certain values of $\beta$, especially $\beta=0.6$, which is suggested as the optimal value in the original paper.
    }
    \label{fig:bs_limit}
\end{figure}

Bone Soups \citep{xie2025bone} relies on a hyperparameter $\beta$ to select the backbone models for interpolation.
In Figure \ref{fig:bs_limit}, the Pareto front collapses for certain values of $\beta$ (especially $\beta=0.6$), a sensitivity that is not sufficiently characterized in the original paper.

\subsection{Sensitivity to the Actor LR}
\label{apdx:lr}

The baseliness, including Bone Soups and RiC, use the same shared configuration. To test whether the comparison depends on that choice, we repeat it across actor learning rates.
We train \ours and Bone Soups, the strongest baseline, at $5\times10^{-6}$, $10^{-5}$ and $5\times10^{-5}$.

\begin{figure}[h]
    \centering
    \includegraphics[width=0.48\textwidth]{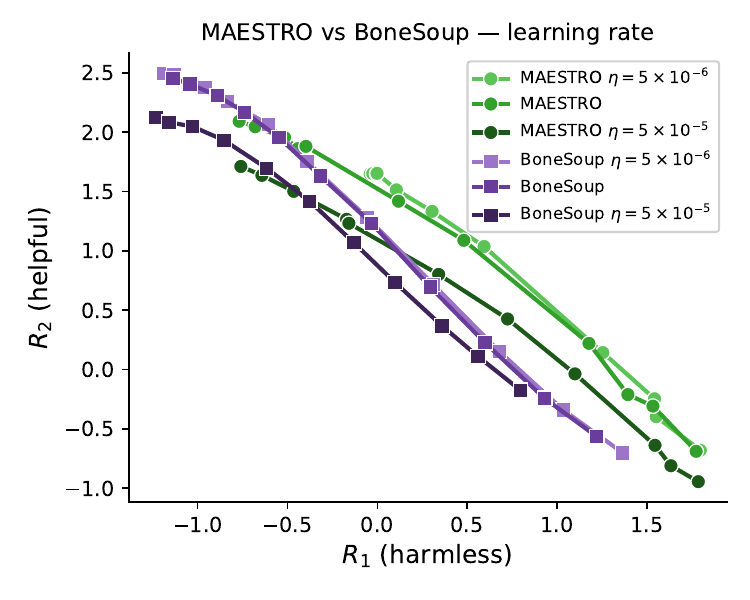}
    \caption{
        Pareto fronts of \ours and Bone Soups trained at learning rates $5\times10^{-6}$, $10^{-5}$ and $5\times10^{-5}$.
        The learning rate moves both methods, advancing or retracting the front and changing its length, but for each learning rate, \ours traces the better front.}
    \label{fig:lr_comparison}
\end{figure}

% \subsection{Reliability of Single-Run Pareto Front Comparisons}

% \begin{figure}[h]
%     \centering
%     \includegraphics[width=0.6\textwidth]{figure/bubble_pareto.pdf}
%     \caption{
%         Per-seed Pareto front of MORLHF on HH-RLHF (harmlessness--helpfulness), evaluated with $\sigma{=}1$.
%         We train 4 additional seeds and plot each seed's $(r^{(1)}, r^{(2)})$ outcome per fixed preference vector $\w$ as a bubble.
%         The low spread across seeds indicates that a single Pareto front is a reliable basis for comparing methods.
%     }
%     \label{fig:bubble_main}
% \end{figure}

\clearpage

% As discussed in Appendix~\ref{apdx:theory}, scaling the KL regularization by $1/M$ is motivated by the fact that the worst-case preference $\w^*$ may place all weight on a single objective.
% % {Scaling $\beta$ by $1/M$ equalizes, to first order, the reward gain of the regularized optimum at the barycenter of the simplex with that of a per-objective baseline when the objectives are uncorrelated (Appendix~\ref{apdx:kl}).}
% {Scaling $\beta$ by $1/M$ gives the part of a single conditioned policy that is shared across preferences the same KL strength per objective as a per-objective baseline; this is exact when the policy does not condition on $\w$ and becomes unnecessary as its conditioning becomes perfect (Appendix~\ref{apdx:kl}).}
% % so the effective KL regularization should be comparable to that of RL optimized baselines.
% As shown in Figure \ref{fig:kl_abl}, this scaling leads to substantially improved Pareto front quality, especially in the Humor--Helpful task where the objectives are more overlapping.
% \MJ{True? running on beavers.}
% Still, the original unscaled KL regularization ($\beta_{KL}=0.2$) already achieves competitive performance, as shown in the main results.

\subsection{Generalization to Unseen Objective Pairs}
\label{apdx:zero_overhead}

Since preferences enter \ours through the prompt rather than through the weights, a trained policy can be pointed at objectives it was never trained on, with no retraining and no additional parameters.
We test this by evaluating a policy trained on one objective pair against the other pair, and compare it against two references, \ie the base SFT model steered by the same prompts, which involves no trained policy, and the policy trained in-domain on that pair.

% Figure~\ref{fig:zero_overhead} shows all four transfer directions.
% In every one of them the transferred policy traces most of a front, whereas prompting the base model alone leaves its responses almost unchanged across the preference grid.
% What training adds is therefore not a mapping tied to the objectives it was trained on, but the ability to read a stated preference off the prompt and act on it, which the model then applies to objectives it has not seen.

% How close transfer comes to in-domain training depends on the direction: transferring from harmless--helpful to humor--helpful gives a front comparable in extent to the in-domain one, displaced but similarly steerable, whereas the reverse direction covers a much narrower range.
% We also do not claim transfer across \emph{tasks}: applying a policy trained on dialogue to TL;DR summarization produces no steering at all.

\begin{figure}[h]
    \centering
    \subfigure[BeaverTails]{
        \includegraphics[width=0.4\textwidth]{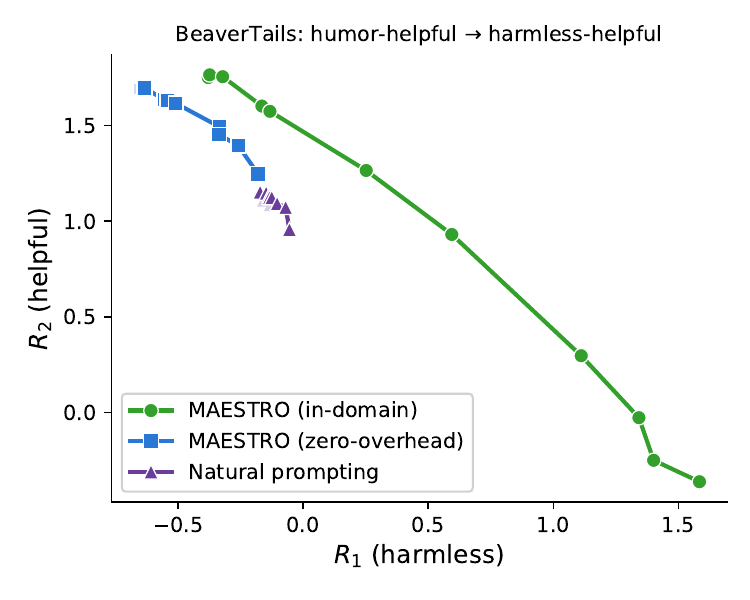}
        \includegraphics[width=0.4\textwidth]{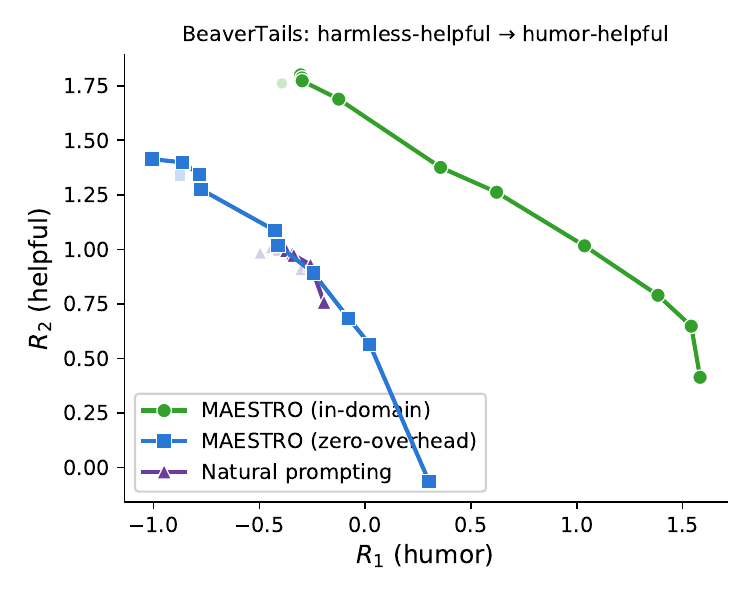}
    }
    \subfigure[HH-RLHF]{
        \includegraphics[width=0.4\textwidth]{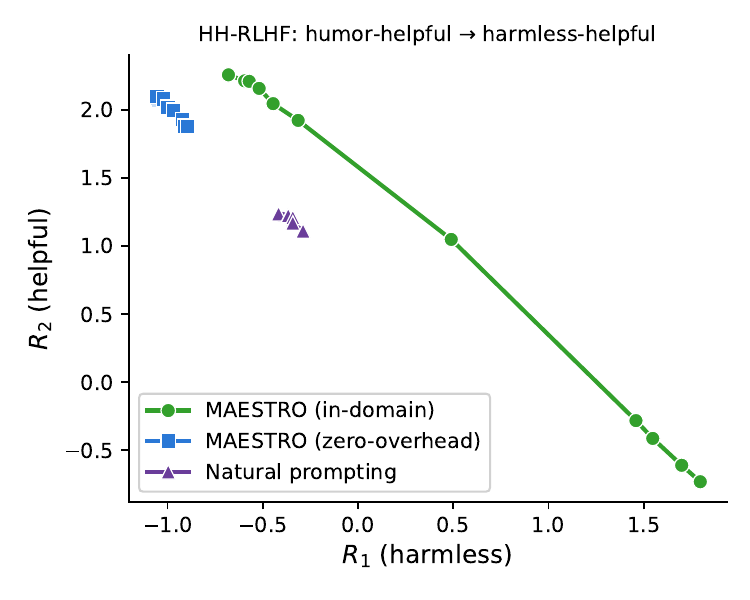}
        \includegraphics[width=0.4\textwidth]{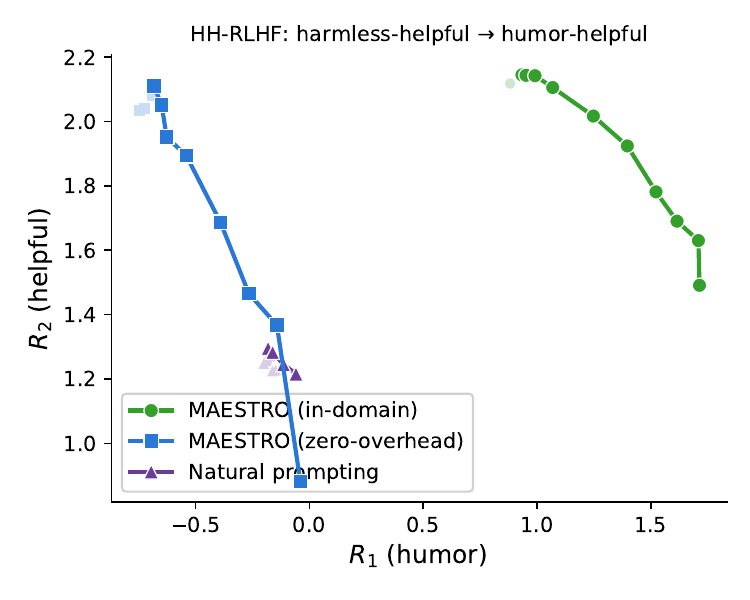}
    }
    \caption{
        Cross-pair transfer by prefix change alone: a policy trained on Harmless--Helpful is evaluated on Humor--Helpful and vice versa (\texttt{zero-overhead} in the plots), on (a) BeaverTails and (b) HH-RLHF. Each panel compares the transferred policy with the base SFT model steered by the same prompts and with the policy trained in-domain on that pair.
    }
    \label{fig:zero_overhead}
    \vspace{-1em}
\end{figure}

\clearpage
\subsection{Where the Objectives Barely Conflict}
\label{apdx:fail}

\begin{table}[h]
    \centering
    \caption{Rank agreement between the two reward models of each objective pair, measured on the test-set generations of the SFT model, which is optimized for neither objective. Agreement is the fraction of response pairs the two reward models order the same way, averaged over the preference grid; $50\%$ is chance and lower values mean stronger disagreement. Spearman is the rank correlation between the two reward scores over the same responses.}
    \label{tab:rm_agreement}
    \resizebox{0.6\textwidth}{!}{
    \begin{tabular}{llcc}
        \toprule
        Task & Objective pair & Agreement & Spearman \\
        \midrule
        % HH-RLHF      & harmless--helpful & $26.0\%$ & $-0.652$ \\
        % BeaverTails  & harmless--helpful & $26.1\%$ & $-0.657$ \\
        % BeaverTails  & humor--helpful    & $35.6\%$ & $-0.431$ \\
        % HH-RLHF      & humor--helpful    & $38.3\%$ & $-0.351$ \\
        HH-RLHF      & Harmless--Helpful & $26.0\%$ & $-0.652$ \\
        BeaverTails  & Harmless--Helpful & $26.1\%$ & $-0.657$ \\
        BeaverTails  & Humor--Helpful    & $35.6\%$ & $-0.431$ \\
        HH-RLHF      & Humor--Helpful    & $38.3\%$ & $-0.351$ \\
        \midrule
        % TL;DR        & summary--faithful & $43.6\%$ & $-0.191$ \\
        % TL;DR        & deberta--faithful~\citep{oasst_deberta_rm} & $43.8\%$ & $-0.185$ \\
        TL;DR        & Summary--Faithful & $43.6\%$ & $-0.191$ \\
        TL;DR        & DeBERTa--Faithful & $43.8\%$ & $-0.185$ \\
        \bottomrule
    \end{tabular}
    }
\end{table}

To measure how separable an objective pair is independently of any method, we score the SFT model's generations, and verify how often the two reward models order the same pair of responses in the same way (Table~\ref{tab:rm_agreement}).
The two domains separate cleanly: on dialogue the reward models systematically disagree, so improving one objective usually costs the other, whereas on TL;DR the agreement sits close to the $50\%$ chance level, and closer for the DeBERTa--Faithfulness pair.
% { To measure how separable an objective pair is independently of any method, we score the SFT model's generations and measure how often the two reward models order the same response pair in the same way (Table~\ref{tab:rm_agreement}).}
% { The two domains behave differently: on dialogue, the reward models systematically disagree, so improving one objective usually costs the other; on TL;DR, agreement is closer to the $50\%$ chance level, with the DeBERTa--Faithfulness pair closest to chance.}
% Replacing the first reward model entirely (GPT-2 summarization vs.\ DeBERTa) barely moves the number, so this is a property of the task rather than of one particular scorer.

This bounds what \ours can contribute, since the adversary can only shift training toward preferences on which the policy is measurably worse: when the objectives barely conflict, the worst case is nearly independent of $\w$ and the front stays short whatever the sampling scheme.
Figure~\ref{fig:pref2} shows the extreme case, the \texttt{deberta}--faithful pair, on which \ours collapses onto a single point and even MORLHF, which trains one policy per preference, spreads out only slightly.

\begin{figure}[h]
    \centering
    \includegraphics[width=0.48\textwidth]{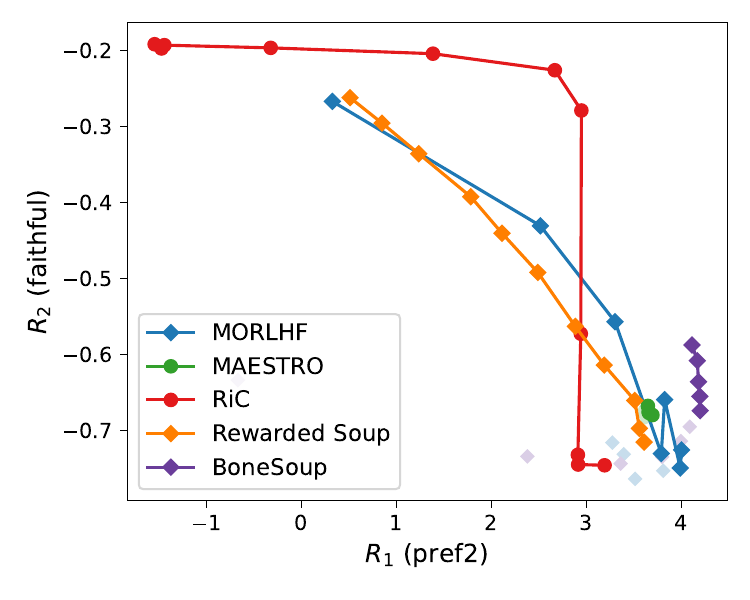}
    % \caption{Pareto front on the summarization task with the summary reward model replaced by a general-purpose one, \texttt{OpenAssistant/reward-model-deberta-v3-large-v2}~\citep{oasst_deberta_rm} (pref2), against faithfulness. \ours collapses onto a single point, its responses being nearly identical across the preference grid, and MORLHF, which trains one policy per preference, spreads out only slightly. Table~\ref{tab:rm_agreement} shows that the two reward models of this pair order responses almost independently, leaving little for either method to trace.}
    \caption{Pareto front on the summarization task with the summary reward model replaced by \texttt{OpenAssistant/reward-model-deberta-v3-large-v2}~\citep{oasst_deberta_rm} (pref2), paired with the Faithfulness reward. \ours collapses onto a single point, with nearly identical responses across the preference grid, and MORLHF, which trains one policy per preference, spreads out only slightly. Table~\ref{tab:rm_agreement} shows that these two reward models order responses almost independently, leaving little for either method to trace.}
    \label{fig:pref2}
\end{figure}

% \clearpage

\clearpage
\section{Theoretical Support}
\label{apdx:theory}
% \MJ{remove}

\subsection{Pareto Optimality}
\label{sec:dis:pareto}

\para{Pareto Optimality Definition.}
%\MJ{$\pi$}
% % Let $\Pi$ denote the space of all representable policies, and let $R_m(\pi)$ be the expected reward of policy $\pi$ for the $m$-th objective, where $m \in [M]$.
% A solution $\theta \in \Theta$ is said to \emph{Pareto-dominate}
% % \SP{here, our pareto optimality needs to be defined over a fixed set of policies -- if we do not fix the set, the problem can be devising a better deep net architecture, which is fine but not our focus.} 
% another solution $\theta' \in \Theta$ if it performs at least as well on all objectives and strictly better on at least one:
% \begin{equation}
%     \forall m \in [M], J^{(m)}(\pi_{\theta}) \ge J^{(m)}(\pi_{\theta'}) 
%     \quad\text{and}\quad
%     \exists k \in [M], J^{(k)}(\pi_{\theta}) > J^{(k)}(\pi_{\theta'}).
%     \label{eq:pareto_domination}
% \end{equation}
% %with $J^{(k)}(\pi) > J^{(k)}(\pi')$ for at least one $k \in [M]$.
% A solution $\theta \in \Theta$ is considered \emph{Pareto-optimal} in $\Theta$ if there does not exist any solution $\theta' \in \Theta$ that Pareto-dominates it.
% The set of all Pareto-optimal solutions $\Theta^\text{PO} \subseteq \Theta$ constitutes the \emph{Pareto front}.
% Let $\Pi$ denote the space of all representable policies, and let $R_m(\pi)$ be the expected reward of policy $\pi$ for the $m$-th objective, where $m \in [M]$.
A policy $\pi \in \Pi$ is said to \emph{Pareto-dominate}
another policy $\pi' \in \Pi$ if it performs at least as well on all objectives and strictly better on at least one:
\begin{equation}
    \forall m \in [M], J^{(m)}(\pi) \ge J^{(m)}(\pi') 
    \quad\text{and}\quad
    \exists k \in [M], J^{(k)}(\pi) > J^{(k)}(\pi').
    \label{eq:pareto_domination}
\end{equation}
%with $J^{(k)}(\pi) > J^{(k)}(\pi')$ for at least one $k \in [M]$.
A policy $\pi \in \Pi$ is considered \emph{Pareto-optimal} in $\Pi$ if there does not exist any policy $\pi' \in \Pi$ that Pareto-dominates it.
% The set of all Pareto-optimal policies $\Pi^\text{PO} \subseteq \Pi$ constitutes the \emph{Pareto front}.
The set of all Pareto-optimal policies $\Pi^\text{PO} \subseteq \Pi$ is the \emph{Pareto set}, and its image $\{\J(\pi):\pi\in\Pi^\text{PO}\}$ in objective space is the \emph{Pareto front}.

Pareto-optimality of the realizing policy is formally established in Proposition~\ref{prop:pareto_recovery} below under the simultaneous realizability condition.
We empirically observe that the Pareto front found within the single shared policy space $\Pi \subset \Pi_\text{large}$ is comparable to that of MORLHF operating over the larger per-preference space $\Pi_\text{large}$ (Section~\ref{sec:exp}), suggesting that prompt-based conditioning does not substantially reduce Pareto front expressiveness.

\subsection{Minimax Optimization}
\label{sec:dis:minimax}
% \SP{subsection name for this content?}

{Throughout, $\Ws_{>0} \coloneqq \{\w \in \Ws : w^{(m)} > 0 \text{ for all } m\}$ denotes the preferences that weight every objective strictly positively, \ie the relative interior of the simplex.}

\begin{definition}[Supported Pareto-optimality]
\label{def:supported}
% Let $\Pi_\Theta^{\w} \coloneqq \{\pi_\theta(\cdot \mid \cdot, \w) : \theta \in \Theta\}$ for the policies the parameterization induces at a fixed preference $\w$, so that $V^*(\w) = \sup_{\pi \in \Pi_\Theta^{\w}} \w^\top\J(\pi)$.
% A policy $\pi \in \Pi_\Theta^{\w}$ is \emph{supported} Pareto-optimal if it maximizes $\w^\top\J(\pi')$ over $\pi'\in\Pi_\Theta^{\w}$ for some $\w\in\mathrm{int}(\Ws)$.
Let $\Pi_\Theta^{\w} \coloneqq \{\pi_\theta(\cdot \mid \cdot, \w) : \theta \in \Theta\}$ for the policies the parameterization induces at a fixed preference $\w$, so that $V^*(\w) = \sup_{\pi \in \Pi_\Theta^{\w}} \w^\top\J(\pi)$.
{For $\w\in\Ws_{>0}$,} a policy $\pi \in \Pi_\Theta^{\w}$ is \emph{supported} Pareto-optimal if it maximizes $\w^\top\J(\pi')$ over $\pi'\in\Pi_\Theta^{\w}$.
% Every supported Pareto-optimal policy is Pareto-optimal, but the converse fails: a Pareto-optimal point lying in a non-convex region of the front maximizes no strictly positive linear scalarization, and is therefore unreachable by linear scalarization for any choice of $\w$.
\end{definition}

\begin{proposition}[{Approximate }minimax optimality under {$\epsilon$-}realizability]
\label{prop:pareto_recovery}
Define the per-preference optimal value $V^*(\w) \coloneqq \sup_\theta J_\w(\pi_\theta(\cdot \mid \cdot, \w))$.
Suppose {for some $\epsilon \ge 0$ }there exists $\theta^*$ such that
\begin{equation}
    J_\w(\pi_{\theta^*}(\cdot \mid \cdot, \w)) {\ge} V^*(\w) {- \epsilon}, \qquad \forall \w \in \Ws.
    \label{eq:realizable}
\end{equation}
% Then $\theta^*$ is an $\epsilon$-optimal solution of the minimax objective~\eqref{eq:objective} for any ambiguity set $\Ps \subseteq \Delta(\Ws)$, with minima assumed to be attained.
Then $\theta^*$ is an $\epsilon$-optimal solution of~\eqref{eq:objective} for any nonempty ambiguity set $\Ps\subseteq\Delta(\Ws)$:
% Original equation used \min_{p\in\Ps} on both sides for an arbitrary, possibly noncompact $\Ps$.
\begin{equation}
    \inf_{p {\in \Ps}}\,\Exp_{\w \sim p}\bigl[J_\w(\pi_{\theta^*})\bigr]
    \;\ge\;
    \sup_\theta\,\inf_{p {\in \Ps}}\,\Exp_{\w \sim p}\bigl[J_\w(\pi_\theta)\bigr] - \epsilon .
    \label{eq:eps_minimax}
\end{equation}
% Moreover, for every $\w \in \Ws$ with $w_m > 0$ for all $m$, the induced policy $\pi_{\theta^*}(\cdot \mid \cdot, \w)$ is $\epsilon$-approximately supported Pareto-optimal in the sense of Definition~\ref{def:supported}.
{ Moreover, for every $\w\in\Ws$ the induced policy $\pi_{\theta^*}(\cdot\mid\cdot,\w)$ attains $\w^\top\J$ within $\epsilon$ of its supremum over $\Pi_\Theta^{\w}$, which for $\w\in\Ws_{>0}$ is $\epsilon$-approximate supported Pareto-optimality.
In particular, when $\epsilon = 0$ the induced policy is weakly Pareto-optimal in $\Pi_\Theta^{\w}$ for every $\w\in\Ws$, and supported Pareto-optimal for every $\w\in\Ws_{>0}$.}
\end{proposition}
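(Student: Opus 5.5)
The plan is a pointwise domination argument, pushed through the expectation and then the infimum. The key is that the realizability hypothesis~\eqref{eq:realizable} is uniform in $\w$, so it survives averaging under any $p$. For an arbitrary $\theta$ and every $\w\in\Ws$, the definition of $V^*$ as a supremum gives
\begin{equation*}
J_\w(\pi_\theta(\cdot\mid\cdot,\w)) \le V^*(\w) \le J_\w(\pi_{\theta^*}(\cdot\mid\cdot,\w)) + \epsilon .
\end{equation*}
I would take the expectation under any $p\in\Ps$. Expectation is monotone and adds the constant $\epsilon$, since $p$ is a probability measure. This yields $\Exp_{p}[J_\w(\pi_\theta)] \le \Exp_{p}[J_\w(\pi_{\theta^*})] + \epsilon$ for every $p\in\Ps$.

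Next I take the infimum over $p\in\Ps$ on both sides. Because the inequality holds for each $p$, I get $\inf_p \Exp_p[J_\w(\pi_\theta)] \le \inf_p \Exp_p[J_\w(\pi_{\theta^*})] + \epsilon$. Nonemptiness of $\Ps$ keeps both infima from being $+\infty$. Since $\theta$ was arbitrary, taking $\sup_\theta$ on the left gives~\eqref{eq:eps_minimax}. No attainment of minima or compactness of $\Ps$ is needed, which is why the statement uses $\inf$ and $\sup$.

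The main obstacle is measurability and integrability of $\w\mapsto J_\w(\pi_\theta(\cdot\mid\cdot,\w))$ and of $V^*$, so that the expectations are well defined. I would handle this by assuming bounded rewards, as is implicit throughout. Then $J_\w$ is bounded and each expectation exists. I would also note that the argument never integrates $V^*$ itself: the chain can be applied pointwise to $J_\w(\pi_\theta)$ and $J_\w(\pi_{\theta^*})+\epsilon$ directly, so measurability of $V^*$ is not required.

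For the Pareto part, fix $\w$. By Definition~\ref{def:supported}, $V^*(\w)=\sup_{\pi\in\Pi_\Theta^{\w}}\w^\top\J(\pi)$, so~\eqref{eq:realizable} says exactly that $\pi_{\theta^*}(\cdot\mid\cdot,\w)$ is within $\epsilon$ of that supremum. For $\w\in\Ws_{>0}$ this is, by definition, $\epsilon$-approximate supported Pareto-optimality.

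For $\epsilon=0$ I argue by contradiction. Suppose some $\pi'\in\Pi_\Theta^{\w}$ strictly improves every objective over $\pi^*\coloneqq\pi_{\theta^*}(\cdot\mid\cdot,\w)$. Since $\w\ge 0$ and $\mathbf{1}^\top\w=1$, some $w^{(m)}>0$, so $\w^\top\J(\pi')>\w^\top\J(\pi^*)=V^*(\w)$. This contradicts the definition of the supremum, so $\pi^*$ is weakly Pareto-optimal in $\Pi_\Theta^{\w}$.

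For $\w\in\Ws_{>0}$, suppose instead that $\pi'$ Pareto-dominates $\pi^*$ in the sense of~\eqref{eq:pareto_domination}. Because all weights are strictly positive, the strict gain in one coordinate gives $\w^\top\J(\pi')>\w^\top\J(\pi^*)$, the same contradiction. Hence $\pi^*$ is Pareto-optimal, and being a maximizer of the scalarization it is supported Pareto-optimal.
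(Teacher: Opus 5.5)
Your proposal is correct and follows essentially the same route as the paper: the pointwise chain $J_\w(\pi_\theta)\le V^*(\w)\le J_\w(\pi_{\theta^*})+\epsilon$ is pushed through expectations under each $p\in\Ps$, then infima over $\Ps$ and the supremum over $\theta$, and the Pareto claims are read off from the $\epsilon$-maximizer property. Your only departures are refinements: you drop the paper's intermediate term $\inf_{p\in\Ps}\Exp_{\w\sim p}[V^*(\w)]$, which spares you measurability of $V^*$, and you write out the short contradiction arguments for weak and full Pareto optimality that the paper states in one line.
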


\begin{proof}
For any $\theta$ and any $\w$, $J_\w(\pi_\theta(\cdot|\cdot,\w)) \le V^*(\w) {\;\le\; J_\w(\pi_{\theta^*}(\cdot|\cdot,\w)) + \epsilon}$ by definition of $V^*$ {and~\eqref{eq:realizable}}.
% Taking expectation under any $p \in \Ps$ and then minimizing over $p$ gave the following display with \min on an arbitrary $\Ps$.
Taking expectations under any $p\in\Ps$ and then taking infima over $p$ gives
\begin{equation}
    \inf_{p \in \Ps}\,\Exp_{\w \sim p}\bigl[J_\w(\pi_\theta)\bigr]
    \;\le\;
    \inf_{p \in \Ps}\,\Exp_{\w \sim p}\bigl[V^*(\w)\bigr]
    \;\le\;
    \inf_{p \in \Ps}\,\Exp_{\w \sim p}\bigl[J_\w(\pi_{\theta^*})\bigr] + \epsilon.
\end{equation}
Since this holds for every $\theta$, taking the supremum over $\theta$ on the left gives~{\eqref{eq:eps_minimax}}.

For {the second claim}, fix any $\w \in {\Ws}$.
{By~\eqref{eq:realizable},}
\begin{equation}
    J_\w(\pi_{\theta^*}(\cdot|\cdot,\w)) \;{\ge}\; V^*(\w) {\;-\;\epsilon}
    \;=\; \sup_\theta J_\w(\pi_\theta(\cdot|\cdot,\w)) {\;-\;\epsilon},
\end{equation}
so $\pi_{\theta^*}(\cdot \mid \cdot, \w)$ is an $\epsilon$-maximizer of the weighted sum $\w^\top\J$ over $\Pi_\Theta^{\w}$, {which for $\w\in\Ws_{>0}$ is the stated $\epsilon$-approximate supported Pareto-optimality.
At $\epsilon = 0$ the maximizer is exact, hence weakly Pareto-optimal in $\Pi_\Theta^{\w}$ since $\w\succeq\mathbf{0}$ and $\w\neq\mathbf{0}$, and Pareto-optimal when $\w\in\Ws_{>0}$.}
\end{proof}

\begin{remark*}
Proposition~\ref{prop:pareto_recovery} shows that \emph{if} a single preference-conditioned model can {come within $\epsilon$ of} the per-preference optimum for all $\w$, then it is {within $\epsilon$ of} the minimax-optimal solution.
The converse does not hold in general: a minimax-optimal policy need not be pointwise Pareto-optimal at every $\w$.
% The assumption is measurable: $\epsilon$ is the largest gap between \ours and a per-preference oracle over the preference grid, and the fronts of Section~\ref{sec:exp} match MORLHF ($\epsilon \approx 0$).
The assumption is measurable: the largest gap between \ours and a per-preference oracle over the preference grid {lower-bounds the uniform $\epsilon$ of~\eqref{eq:realizable}}, and the fronts of Section~\ref{sec:exp} match MORLHF.
The same condition with $\epsilon = 0$ is the first assumption of Theorem~4.1 in Panacea~\citep{zhong2024panacea}.

In addition, the conclusion covers only the \emph{supported} subset of the front (Definition~\ref{def:supported}), and this is a property of linear scalarization rather than of $\epsilon$.
% The two coincide when the policy class is closed under mixing, since the achievable return set is then convex (Lemma~C.3 of~\citep{zhong2024panacea}), which is an idealization for any single parameterization.
The two coincide when the policy class is closed under mixing {and the supporting weights can be taken strictly positive}, since the achievable return set is then convex (Lemma~C.3 of~\citep{zhong2024panacea}), which is an idealization for any single parameterization.
\end{remark*}

\begin{lemma}[Expectation can ignore a hard preference region]
\label{lem:hard_region}
Let $\mathcal W = A \cup B$ be a partition of the preference space with
$p(B)=\varepsilon \in (0,1)$.
Assume there exist two policies $\pi^{\mathrm{avg}}$ and $\pi^{\mathrm{rob}}$
such that for some constants $u,v,\Delta > 0$ where $v \le u$,
\begin{equation}
    J_\w(\pi^{\mathrm{avg}}) =
    \begin{cases}
        u+\Delta, & \w \in A,\\
        v-\Delta, & \w \in B,
    \end{cases}
\end{equation}
and
\begin{equation}
    J_\w(\pi^{\mathrm{rob}}) =
    \begin{cases}
        u, & \w \in A,\\
        v, & \w \in B.
    \end{cases}
\end{equation}
Then:
\begin{equation}
    \mathbb E_{\w\sim p}[J_\w(\pi^{\mathrm{avg}})]
    -
    \mathbb E_{\w\sim p}[J_\w(\pi^{\mathrm{rob}})]
    =
    (1-2\varepsilon)\Delta,
\end{equation}
while
\begin{equation}
    \min_{\w} J_\w(\pi^{\mathrm{avg}})
    -
    \min_{\w} J_\w(\pi^{\mathrm{rob}})
    =
    -\Delta.
\end{equation}
In particular, whenever $\varepsilon < 1/2$, the expectation objective prefers
$\pi^{\mathrm{avg}}$, whereas the {pointwise} minimax objective prefers
$\pi^{\mathrm{rob}}$.
\end{lemma}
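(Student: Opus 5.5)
The plan is a direct computation from the two piecewise-constant return profiles, splitting every expectation over the partition $\mathcal W = A\cup B$ with $p(A)=1-\varepsilon$ and $p(B)=\varepsilon$. Since $J_\w(\pi^{\mathrm{avg}})$ and $J_\w(\pi^{\mathrm{rob}})$ are constant on each cell, each expectation is a two-term weighted sum. For $\pi^{\mathrm{avg}}$ it is $(1-\varepsilon)(u+\Delta)+\varepsilon(v-\Delta)$, and for $\pi^{\mathrm{rob}}$ it is $(1-\varepsilon)u+\varepsilon v$. Subtracting, the $u$ and $v$ terms cancel and leave $(1-\varepsilon)\Delta-\varepsilon\Delta=(1-2\varepsilon)\Delta$. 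Measurability is automatic because the integrands are simple functions on the measurable sets $A,B$; I will take the partition to be measurable, as is implicit in $p(B)$ being defined.

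For the pointwise minima, I need both cells nonempty, which follows from $p(B)=\varepsilon\in(0,1)$: both $p(A)$ and $p(B)$ are positive. The minimum of $J_\w(\pi^{\mathrm{avg}})$ is then $\min\{u+\Delta,\,v-\Delta\}$. Since $v\le u$ and $\Delta>0$, we have $v-\Delta<v\le u<u+\Delta$, so the minimum is $v-\Delta$, attained on $B$. Similarly, the minimum of $J_\w(\pi^{\mathrm{rob}})$ is $\min\{u,v\}=v$, using $v\le u$. The difference is $-\Delta$. This ordering step is the only place the hypothesis $v\le u$ enters, and it is the one point needing care. Without it, the minimum of $\pi^{\mathrm{rob}}$ could instead be $u$, and the gap would differ.

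Finally, I read off the preferences. When $\varepsilon<1/2$, the factor $1-2\varepsilon$ is positive, so $(1-2\varepsilon)\Delta>0$ and the expectation objective strictly prefers $\pi^{\mathrm{avg}}$. Meanwhile $-\Delta<0$, so the pointwise min objective $\min_\w J_\w$ strictly prefers $\pi^{\mathrm{rob}}$. There is no real obstacle here beyond confirming that both cells are nonempty and that the minima are attained, which holds because only two values are taken.
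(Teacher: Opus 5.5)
Your proposal is correct and matches the paper's proof step for step: you compute the same two-term expectations over $A$ and $B$ to get $(1-2\varepsilon)\Delta$, and the same minima $v-\Delta$ and $v$ to get $-\Delta$. Your added checks, that both cells are nonempty because $\varepsilon\in(0,1)$ and that $v\le u$ is what fixes the two minima, are details the paper leaves implicit; they make the argument more careful without changing it.
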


\begin{proof}
By construction,
\begin{equation}
    \mathbb E_{\w\sim p}[J_\w(\pi^{\mathrm{avg}})]
    =
    (1-\varepsilon)(u+\Delta) + \varepsilon (v-\Delta),
\end{equation}
and
\begin{equation}
    \mathbb E_{\w\sim p}[J_\w(\pi^{\mathrm{rob}})]
    =
    (1-\varepsilon)u + \varepsilon v.
\end{equation}
Subtracting gives
\begin{equation}
    \mathbb E_{\w\sim p}[J_\w(\pi^{\mathrm{avg}})]
    -
    \mathbb E_{\w\sim p}[J_\w(\pi^{\mathrm{rob}})]
    =
    (1-2\varepsilon)\Delta.
\end{equation}
Hence if $\varepsilon<1/2$, the expectation objective prefers
$\pi^{\mathrm{avg}}$.

On the other hand,
\begin{equation}
    \min_{\w} J_\w(\pi^{\mathrm{avg}})=v-\Delta,
    \qquad
    \min_{\w} J_\w(\pi^{\mathrm{rob}})=v,
\end{equation}
so
\begin{equation}
    \min_{\w} J_\w(\pi^{\mathrm{avg}})
    -
    \min_{\w} J_\w(\pi^{\mathrm{rob}})
    =
    -\Delta.
\end{equation}
Thus the {pointwise} minimax objective strictly prefers $\pi^{\mathrm{rob}}$.
\end{proof}
Lemma~\ref{lem:hard_region} formalizes the failure mode of average-preference
optimization: if a hard preference region has sufficiently small measure, then
an expectation objective can prefer a policy that improves the majority region
while sacrificing that hard region. In contrast, % the minimax objective cannot
{the pointwise minimax objective cannot}
ignore such regions, since its value is determined by the worst-performing
preference.
{Our objective~\eqref{eq:objective} inherits this through its Dirichlet family, whose unconstrained infimum is exactly this pointwise minimum and whose clamped version approaches it as $\alpha_{\max}$ grows (Appendix~\ref{apdx:layer2}).}
This is empirically supported by Figure \ref{fig:motivation_sampling}.

\subsection{KL Regularization in MAESTRO}
\label{apdx:kl}
% \MJ{reference}

\ours, like every PPO-based baseline, is trained on the KL-regularized scalarized reward $\w^\top\r(\x,\y) - \beta\log\frac{\pi_\theta(\y\mid\x,\w)}{\pi_{\mathrm{ref}}(\y\mid\x,\w)}$.
% The coefficient starts at $\beta = 0.2$ on all tasks and is adapted during training by the controller of \citep{ziegler2019fine}; at every step, however, it takes the same value for every preference in the batch.
Although this coefficient $\beta$ matches that of the baselines that train one policy per preference (MORLHF) or one expert per objective (the soups), the effective KL strength of a single conditioned model can differ from theirs, since one model has to serve every preference and the part of its behavior that is shared across them sees each objective only with the weight that the preference distribution gives it, whereas each baseline devotes a separate model, and at equal epochs its entire budget of updates, to one preference or objective.

\para{KL regularization as a source of $\epsilon$.}
For fixed $\x$ and $\w$, the maximizer of $\Exp_{\y\sim\pi}[\w^\top\r(\x,\y)] - \beta\,\mathrm{KL}(\pi\,\|\,\pi_{\mathrm{ref}})$ is the tilted distribution
\begin{equation}
    \pi^*_{\w,\beta}(\y\mid\x) = \frac{\pi_{\mathrm{ref}}(\y\mid\x)\,e^{\w^\top\r(\x,\y)/\beta}}{\Exp_{\y'\sim\pi_{\mathrm{ref}}}\bigl[e^{\w^\top\r(\x,\y')/\beta}\bigr]},
    \label{eq:tilted}
\end{equation}
where $\pi_{\mathrm{ref}}(\cdot\mid\x)$ denotes the reference on the input the policy sees, which is $\pi_{\mathrm{ref}}(\cdot\mid\x,\w)$ for \ours.
% \MJ{The Gibbs-form optimizer is over unrestricted response distributions, whereas $V^*(\w)$ is over the parameterized policy class. The bound below remains an algebraic comparison if $\epsilon_{\mathrm{approx}}$ includes this mismatch, but it cannot then be interpreted purely as the cost of sharing one model. If that narrower interpretation is needed, define both optima over the same class or add separate class and optimization terms.}
% Training targets this tilted distribution, whereas $V^*(\w)$ from Appendix \ref{sec:dis:minimax} is defined on the reward alone, so the KL penalty keeps even a perfect solution below $V^*(\w)$ by
{The tilted distribution is the unrestricted regularized optimum, while $V^*(\w)$ from Appendix~\ref{sec:dis:minimax} is the unregularized optimum over the parameterized class. We record their raw-reward discrepancy as}
\begin{equation*}
    \epsilon_\beta(\w) \coloneqq \bigl(V^*(\w) - J_\w(\pi^*_{\w,\beta})\bigr)_+ .
\end{equation*}
% Hence, if the conditioned policy comes within $\epsilon_{\mathrm{approx}}$ of the regularized optimum at every $\w\in\Ws$, in the sense that $J_\w(\pi_\theta) \ge J_\w(\pi^*_{\w,\beta}) - \epsilon_{\mathrm{approx}}$, then~\eqref{eq:realizable} holds with
Hence, if the conditioned policy comes within $\epsilon_{\mathrm{approx}}$ of {the raw reward of this unrestricted tilt} at every $\w\in\Ws$, in the sense that $J_\w(\pi_\theta) \ge J_\w(\pi^*_{\w,\beta}) - \epsilon_{\mathrm{approx}}$, then~\eqref{eq:realizable} holds with
\begin{equation}
    \epsilon \;=\; \epsilon_{\mathrm{approx}} \;+\; \sup_{\w\in\Ws}\epsilon_\beta(\w).
    \label{eq:eps_split}
\end{equation}
% The first term is the cost of sharing one model across preferences, which is what realizability is about.
The first term is the cost of sharing one model across preferences, {together with the policy-class and optimization error that the comparison to an unrestricted tilt absorbs}.
The second decreases as $\beta$ decreases and is shared by methods that agree on $\beta$, the reference distribution and $\r$.
% Since \ours regularizes toward $\pi_{\mathrm{ref}}(\cdot\mid\x,\w)$ and the per-objective baselines toward $\pi_{\mathrm{ref}}(\cdot\mid\x)$, KL comparisons across methods are made against the common reference $\pi_{\mathrm{ref}}(\cdot\mid\x)$.
% Each method is regularized toward the reference model on its own input, as in standard RLHF; for \ours that input includes the preference prefix, so its reference is $\pi_{\mathrm{ref}}(\cdot\mid\x,\w)$.

\begin{proposition}[KL scaling for the preference-shared component]
\label{prop:kl_scaling}
Consider (i) a per-objective baseline trained at the vertex $\mathbf{e}_m$ with coefficient $\beta_b$, whose optimum is $\pi^*_{\mathbf{e}_m,\beta_b} \propto \pi_{\mathrm{ref}}\,e^{r^{(m)}/\beta_b}$, and (ii) a single policy trained with $\w\sim p$ and coefficient $\beta_{\mathrm{ours}}$, and write $\boldsymbol{\mu} \coloneqq \Exp_{p}[\w]$.
If the policy does not depend on $\w$, its optimum is $\pi^*_{\boldsymbol{\mu},\beta_{\mathrm{ours}}} \propto \pi_{\mathrm{ref}}\,e^{\boldsymbol{\mu}^\top\r/\beta_{\mathrm{ours}}}$, and the coefficient multiplying $r^{(m)}$ in its log-ratio to the reference equals the baseline's, for $\mu_m > 0$, if and only if
\begin{equation}
    \beta_{\mathrm{ours}} = \beta_b\,\mu_m.
\end{equation}
In particular, if $p$ is symmetric over $\Delta^{M-1}$, then $\mu_m = 1/M$ and $\beta_{\mathrm{ours}} = \beta_b/M$ for every $m$.
\end{proposition}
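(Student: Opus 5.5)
The plan is to reduce the statement to a comparison of exponents in two Gibbs distributions, using the tilted-distribution characterization~\eqref{eq:tilted} as given.

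\textbf{Step 1: the shared policy's optimum.} If $\pi$ does not depend on $\w$, the objective $\Exp_{\w\sim p}\bigl[\Exp_{\y\sim\pi}[\w^\top\r] - \beta_{\mathrm{ours}}\mathrm{KL}(\pi\,\|\,\pi_{\mathrm{ref}})\bigr]$ is linear in $\w$ in the reward term, while the KL term is $\w$-free. By linearity of expectation it therefore equals $\Exp_{\y\sim\pi}[\boldsymbol{\mu}^\top\r] - \beta_{\mathrm{ours}}\mathrm{KL}(\pi\,\|\,\pi_{\mathrm{ref}})$. This is the single-preference problem at the preference $\boldsymbol{\mu}\in\Ws$. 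Applying~\eqref{eq:tilted} with $\w=\boldsymbol{\mu}$ and $\beta=\beta_{\mathrm{ours}}$, pointwise in $\x$, gives $\pi^*_{\boldsymbol{\mu},\beta_{\mathrm{ours}}}\propto\pi_{\mathrm{ref}}\,e^{\boldsymbol{\mu}^\top\r/\beta_{\mathrm{ours}}}$.

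Two caveats need stating here. First, the reference must be the common $\pi_{\mathrm{ref}}(\cdot\mid\x)$, which holds because a $\w$-independent policy sees no $\w$-dependent input. Second, the optimization is over unrestricted response distributions, as in~\eqref{eq:tilted}.

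\textbf{Step 2: compare log-ratios.} Taking logs, we get
\[
\log\frac{\pi^*_{\boldsymbol{\mu},\beta_{\mathrm{ours}}}}{\pi_{\mathrm{ref}}}=\sum_{k}\frac{\mu_k}{\beta_{\mathrm{ours}}}\,r^{(k)}-\log Z_{\mathrm{ours}}(\x),
\qquad
\log\frac{\pi^*_{\e_m,\beta_b}}{\pi_{\mathrm{ref}}}=\frac{1}{\beta_b}\,r^{(m)}-\log Z_b(\x).
\]
The coefficient of $r^{(m)}$ is $\mu_m/\beta_{\mathrm{ours}}$ for the shared policy and $1/\beta_b$ for the baseline. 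For $\mu_m>0$ and positive $\beta$'s, equality $\mu_m/\beta_{\mathrm{ours}}=1/\beta_b$ holds if and only if $\beta_{\mathrm{ours}}=\beta_b\mu_m$, which gives both directions at once.

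\textbf{Step 3: the symmetric case.} If $p$ is invariant under coordinate permutations, all $\mu_m$ are equal. Since $\mathbf{1}^\top\w=1$ almost surely, $\sum_m\mu_m=1$, so $\mu_m=1/M$ and $\beta_{\mathrm{ours}}=\beta_b/M$ for every $m$.

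\textbf{Main obstacle.} The only delicate point is making ``the coefficient multiplying $r^{(m)}$'' well defined. The log-ratio is determined only up to the $\x$-dependent normalizer, and if the reward components are affinely dependent as functions of $\y$, the decomposition into per-objective coefficients is not unique. I would handle this by stating the comparison formally: coefficients are read off the explicit expression $\boldsymbol{\mu}^\top\r/\beta_{\mathrm{ours}}$, with the normalizers discarded as $\y$-independent constants. Alternatively, one can assume $\{1,r^{(1)},\dots,r^{(M)}\}$ are linearly independent functions of $\y$ for each $\x$, which makes the coefficients unique and the ``only if'' direction rigorous.
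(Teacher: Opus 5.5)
Your proof is correct and follows the paper's own argument: collapse the $\w$-average to the single reward $\boldsymbol{\mu}^\top\r$ by linearity, invoke the tilted form~\eqref{eq:tilted} at $\w=\boldsymbol{\mu}$, and match the coefficients $\mu_m/\beta_{\mathrm{ours}}$ and $1/\beta_b$; your Step~3 and your identifiability remark are sound additions that the paper leaves implicit. One correction to your first caveat: a $\w$-independent policy does not force a common reference, because in \ours the reference is evaluated on the prefixed input as $\pi_{\mathrm{ref}}(\cdot\mid\x,\w)$ whatever the policy does, so the KL term can still depend on $\w$. Common reference is therefore an assumption you are making, not a consequence. The paper covers the prefixed case by noting that $\Exp_{\w\sim p}\bigl[\mathrm{KL}(\pi\,\|\,\pi_{\mathrm{ref}}(\cdot\mid\x,\w))\bigr]=\mathrm{KL}(\pi\,\|\,q)+\text{const}$, with $q$ the normalized geometric mean of the prefixed references over $p$. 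The optimum is then the same tilt with $q$ in place of $\pi_{\mathrm{ref}}$, and the coefficient on $r^{(m)}$ is unchanged.
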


\begin{proof}
Since the policy does not depend on $\w$, linearity gives $\Exp_{\w\sim p}\Exp_{\y\sim\pi}[\w^\top\r] = \Exp_{\y\sim\pi}[\boldsymbol{\mu}^\top\r]$, and the KL term does not depend on $\w$.
The objective is therefore the regularized objective with reward $\boldsymbol{\mu}^\top\r$, whose maximizer is~\eqref{eq:tilted} with $\w$ replaced by $\boldsymbol{\mu}$; its log-ratio to $\pi_{\mathrm{ref}}$ has coefficient $\mu_m/\beta_{\mathrm{ours}}$ on $r^{(m)}$, against $1/\beta_b$ for the baseline.
With the prefixed reference of \ours the KL term does depend on $\w$, but $\Exp_{\w\sim p}[\mathrm{KL}(\pi\,\|\,\pi_{\mathrm{ref}}(\cdot\mid\x,\w))] = \mathrm{KL}(\pi\,\|\,q) + \text{const}$ with $q$ the normalized geometric mean of $\pi_{\mathrm{ref}}(\cdot\mid\x,\w)$ over $p$, so the optimum is the same expression with $\pi_{\mathrm{ref}}$ replaced by $q$ and the coefficient on $r^{(m)}$ unchanged.
\end{proof}

\para{Scope.}
% The proposition is exact only for a policy that ignores the preference.
% At the level of gradients it describes the part of any conditioned policy's update that is shared across preferences: since $\nabla_\theta J_\w = \sum_m w_m \nabla_\theta J^{(m)}$, objective $m$ contributes $\mu_m\nabla_\theta J^{(m)}$ there, while the KL gradients of all preferences pull that part toward the reference without attenuation.
% The part that varies with $\w$ instead receives reward and KL in the same proportion, and under perfect conditioning the objective decouples across preferences, so $\beta_b$ is already the matching coefficient.
% The scaling is therefore a correction for incomplete conditioning, and a compromise rather than a bound: the matching coefficient is $\beta_b\,\mu_m/w_m$ against a policy trained at $\w$, so it differs across objectives, and the adversary moves $\boldsymbol{\mu}$ during training.
The proposition is exact only for a policy that ignores the preference.
{For a conditioned policy, writing $g_m(\w)\coloneqq\nabla_\theta J^{(m)}(\theta;\w)$ gives $\Exp_p[w_m g_m(\w)]=\mu_m\Exp_p[g_m(\w)]+\operatorname{Cov}_p(w_m,g_m(\w))$, so the factor $\mu_m$ requires the covariance term to vanish.}
The scaling is therefore a correction for incomplete conditioning, and a compromise rather than a bound: the matching coefficient is $\beta_b\,\mu_m/w_m$ against a policy trained at $\w$, {which needs $w_m>0$ and} differs across objectives, and the adversary moves $\boldsymbol{\mu}$ during training.
Empirically the gains in Figure~\ref{fig:kl_abl} are consistent with the conditioning being incomplete within our training budget, though those runs also raise the target KL, so part of the gain may come from the larger budget.

%% ============================================================================
\subsection{The Ambiguity Set and How Much It Smooths}
\label{apdx:layer2}

Two choices separate Algorithm~\ref{alg:maestro} from the objective~\eqref{eq:objective}: the ambiguity set the inner player ranges over, and the statistic its descent direction is computed from.
We state here what each of them costs, and what we do not claim.

Taking $\Ps = \Delta(\Ws)$ would be degenerate: the inner objective is linear in $p$, so its minimum is attained at a point mass, and the resulting envelope credits improvement only at the minimizing preference and {is nonsmooth where that preference switches}.
% Restricting $\Ps$ to Dirichlet distributions of bounded concentration removes both problems, at the price of a smoothed rather than pointwise worst case, since such a distribution cannot place all its mass at one preference.
{ Restricting $\Ps$ to Dirichlet distributions of bounded concentration prevents point-mass inner distributions and smooths their payoffs, at the price of replacing the pointwise worst case by a distributional one. The resulting outer envelope can nevertheless remain nonsmooth when the minimizing Dirichlet parameter changes.}
The next proposition makes the amount of smoothing explicit and shows that $\alpha_{\max}$ interpolates between the two.

\begin{proposition}[The Dirichlet family recovers the pointwise worst case as concentration grows]
\label{prop:dirichlet_tight}
Fix $\theta$ and suppose $\w \mapsto J_\w(\pi_\theta)$ is continuous on $\Ws$. Then
\begin{equation}
    \inf_{\alphaB \in \realnum_{>0}^M} \; \Exp_{\w\sim\mathrm{Dir}(\alphaB)}\bigl[J_\w(\pi_\theta)\bigr]
    \;=\; \min_{\w \in \Ws} J_\w(\pi_\theta).
\end{equation}
% If in addition $\w \mapsto J_\w(\pi_\theta)$ is $L$-Lipschitz with respect to $\|\cdot\|_1$, then restricting to $\As = [\alpha_{\min},\alpha_{\max}]^M$ costs at most
{ If, in addition, $\w \mapsto J_\w(\pi_\theta)$ is $L$-Lipschitz with respect to $\|\cdot\|_1$ and $\alpha_{\max}\ge M\alpha_{\min}$, then restricting to $\As = [\alpha_{\min},\alpha_{\max}]^M$ costs at most}
\begin{equation}
    \min_{\alphaB\in\As}\,\Exp_{\w\sim\mathrm{Dir}(\alphaB)}\bigl[J_\w(\pi_\theta)\bigr] - \min_{\w\in\Ws} J_\w(\pi_\theta)
    \;\le\; L\left( \sqrt{\frac{M}{\alpha_{\max}+1}} \;+\; \frac{2M\alpha_{\min}}{\alpha_{\max}} \right).
    \label{eq:dirichlet_gap}
\end{equation}
\end{proposition}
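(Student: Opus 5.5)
The plan is a sandwich argument: a trivial lower bound, plus an explicit Dirichlet whose mass concentrates at a minimizer of $J_\w(\pi_\theta)$. Since $\Ws$ is compact and $\w\mapsto J_\w(\pi_\theta)$ is continuous, a minimizer $\w^*\in\Ws$ exists. For every $\alphaB$ we have $\Exp_{\w\sim\mathrm{Dir}(\alphaB)}[J_\w(\pi_\theta)]\ge J_{\w^*}(\pi_\theta)$, because the expectation of a function is at least its minimum. This gives ``$\ge$'' in the first claim and also shows that the left side of~\eqref{eq:dirichlet_gap} is nonnegative. All the work goes into the matching upper bound, which I prove in quantitative form first and then reuse for the unconstrained identity.

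The construction for the bound is a parameter whose mean is close to $\w^*$ and whose total concentration is as large as allowed. I set
\[
\alpha_i \coloneqq \alpha_{\min} + (\alpha_{\max}-M\alpha_{\min})\,w^{*(i)}, \qquad i=1,\dots,M.
\]
The hypothesis $\alpha_{\max}\ge M\alpha_{\min}$ gives $\alpha_i\ge\alpha_{\min}$. Since $w^{*(i)}\le 1$, we also get $\alpha_i\le\alpha_{\max}-(M-1)\alpha_{\min}\le\alpha_{\max}$, so $\alphaB\in\As$. The total is exactly $S=\sum_i\alpha_i=\alpha_{\max}$.

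I then split the error with the triangle inequality, using the mean $\boldsymbol{m}=\alphaB/S$:
\[
\Exp\|\w-\w^*\|_1\le \Exp\|\w-\boldsymbol{m}\|_1+\|\boldsymbol{m}-\w^*\|_1 .
\]
The bias term is deterministic. We have $\boldsymbol{m}-\w^*=\alpha_{\min}(\mathbf{1}-M\w^*)/\alpha_{\max}$, and $\sum_i|1-Mw^{*(i)}|\le M+M\sum_i w^{*(i)}=2M$. Hence $\|\boldsymbol{m}-\w^*\|_1\le 2M\alpha_{\min}/\alpha_{\max}$.

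For the fluctuation term I use Jensen and then Cauchy--Schwarz:
\[
\Exp\|\w-\boldsymbol{m}\|_1\le\sum_i\sqrt{\mathrm{Var}(w^{(i)})}\le\sqrt{M\textstyle\sum_i\mathrm{Var}(w^{(i)})}.
\]
The Dirichlet variance formula $\mathrm{Var}(w^{(i)})=m_i(1-m_i)/(S+1)$ gives $\sum_i\mathrm{Var}(w^{(i)})\le 1/(S+1)=1/(\alpha_{\max}+1)$. Applying $L$-Lipschitzness, $\Exp[J_\w]-J_{\w^*}\le L\,\Exp\|\w-\w^*\|_1$, yields~\eqref{eq:dirichlet_gap} for this particular $\alphaB$, and therefore for the minimum over $\As$.

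That minimum is attained, which I would justify briefly. The Dirichlet density is continuous in $\alphaB$, so $\alphaB\mapsto\Exp_{\mathrm{Dir}(\alphaB)}[J_\w]$ is continuous on the compact box $\As$ by dominated convergence, using boundedness of $J$ on $\Ws$. The caveat is domination of the densities near the vertices when some $\alpha_i<1$. I would handle this by writing the expectation via the stick-breaking or Gamma representation, in which the law varies weakly continuously in $\alphaB$ and $J$ is bounded and continuous.

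The main obstacle is the first claim, because it assumes only continuity and no Lipschitz constant. I would handle it by approximation. By the Stone--Weierstrass theorem, or simply by uniform continuity on the compact set $\Ws$, for any $\delta>0$ there is a Lipschitz function $\tilde J$ with $\sup_\Ws|J-\tilde J|\le\delta$. Applying the bound just proved to $\tilde J$ with $\alpha_{\min}$ fixed and $\alpha_{\max}\to\infty$ makes the right side of~\eqref{eq:dirichlet_gap} vanish. This gives $\inf_{\alphaB}\Exp[J]\le\min J+2\delta$, and letting $\delta\to0$ finishes the proof.

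A more direct route avoids the approximation. Uniform continuity gives a modulus $\omega$, and we have
\[
\Exp J_\w\le J_{\w^*}+\omega(r)+2\|J\|_\infty\Pr(\|\w-\w^*\|_1>r).
\]
The probability is controlled by Markov's inequality applied to the same $\Exp\|\w-\w^*\|_1$ bound. Either route works, and I would present the Markov version since it reuses the computation already done.
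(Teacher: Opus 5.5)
Your proof is correct and is essentially the paper's. Your parameter $\alpha_i=\alpha_{\min}+(\alpha_{\max}-M\alpha_{\min})w^{*(i)}$ is exactly the paper's choice $\alpha_{\max}\bigl((1-M\delta)\w^\star+\delta\mathbf{1}\bigr)$ with $\delta=\alpha_{\min}/\alpha_{\max}$, and the bias/fluctuation split via Jensen, Cauchy--Schwarz and the Dirichlet variance is the same computation. The remaining differences are minor: the paper gets the first identity from weak convergence $\mathrm{Dir}(\kappa\muB)\Rightarrow\delta_{\muB}$ plus a separate interior sequence for boundary minimizers, whereas your Markov/modulus-of-continuity argument reuses the quantitative bound and handles boundary minimizers automatically, and you also justify that the minimum over $\As$ is attained, which the paper takes for granted.
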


\begin{proof}
Write $\alphaB = \kappa\muB$ with $\muB = \alphaB/\|\alphaB\|_1 \in \Delta^{M-1}$ and $\kappa = \|\alphaB\|_1$.
Under $\mathrm{Dir}(\kappa\muB)$ we have $\Exp[w^{(i)}] = \mu_i$ and $\mathrm{Var}[w^{(i)}] = \mu_i(1-\mu_i)/(\kappa+1)$, so by Jensen and Cauchy--Schwarz
\begin{equation}
    \Exp\bigl[\|\w - \muB\|_1\bigr] \;\le\; \sum_{i=1}^M \sqrt{\tfrac{\mu_i(1-\mu_i)}{\kappa+1}} \;\le\; \sqrt{\frac{M}{\kappa+1}} .
    \label{eq:conc}
\end{equation}
Hence $\mathrm{Dir}(\kappa\muB) \Rightarrow \delta_{\muB}$ as $\kappa \to \infty$, and by bounded continuity $\Exp_{\w\sim\mathrm{Dir}(\kappa\muB)}[J_\w] \to J_{\muB}$.
{ If $\muB$ lies on the boundary, choose an interior sequence $\muB_n\to\muB$ and let the corresponding concentrations grow; continuity gives the same limiting value.}
Taking the infimum over $\muB$ bounds the left-hand side by $\inf_{\muB} J_{\muB} = \min_{\w} J_\w$; the reverse inequality holds since any average of $\{J_\w\}$ is at least $\min_\w J_\w$.
For the second claim, let $\w^\star \in \argmin_{\w} J_\w$ and $\delta \coloneqq \alpha_{\min}/\alpha_{\max}$, which satisfies $M\delta \le 1$ in the regime we use.
Project $\w^\star$ onto $\Delta_\delta \coloneqq \{\muB : \mu_i \ge \delta\}$ via $\muB \coloneqq (1-M\delta)\w^\star + \delta\mathbf{1}$, so $\|\muB - \w^\star\|_1 \le 2M\delta$ and $J_{\muB} \le J_{\w^\star} + 2LM\delta$.
The choice $\alphaB = \alpha_{\max}\muB$ lies in $\As$ since $\alpha_{\max}\mu_i \ge \alpha_{\min}$, and has $\kappa = \alpha_{\max}$; combining with~\eqref{eq:conc} and $L$-Lipschitzness gives~\eqref{eq:dirichlet_gap}.
\end{proof}
% \MJ{This proposition assumes continuity in the numerical weight vector. Independently rounded percentage prompts create a discontinuous, many-to-one map from $\w$ to text. State whether the theoretical $J_\w$ uses exact weights, or add a quantization error/continuity qualification connecting the implementation to the proposition.}

The two clamps therefore play different roles rather than trading off against each other.
The bound is dominated by its first term, so $\alpha_{\max}$ sets how robust the objective is, at rate $O(\alpha_{\max}^{-1/2})$; with $(\alpha_{\min},\alpha_{\max}) = (10^{-3},10^{2})$ and $M=2$ it reads $L\sqrt{2/101} \approx 0.14L$.
The clamp $\alpha_{\min}>0$ instead makes the adversary's estimator well-posed: the Dirichlet score $s_i = \log w^{(i)} - \psi(\alpha_i) + \psi(S)$ has $\Exp[s_i] = 0$ and $\Exp[s_i^2] = \psi'(\alpha_i) - \psi'(S) \le \psi'(\alpha_{\min})$, and $\psi'(\alpha) \sim \alpha^{-2}$ as $\alpha \downarrow 0$, so without it the estimator has unbounded second moment.

It is sometimes supposed that a worst-case objective will concentrate on a vertex of the simplex, \ie on a single objective.
% A vertex, however, evaluates to the best attainable value of a single objective, which is typically large, so there is no reason for the minimum to sit there.
A vertex, however, evaluates {the trained policy's return on a single objective, which need not be small}, so there is no reason for the minimum to sit there.
This is what we observe: the scalarized return at the two vertices is far above its value near the center of the simplex (Figure~\ref{fig:pareto_beaver}), so the adversary concentrates on genuinely conflicting interior trade-offs rather than collapsing onto one objective.

\subsection{What the Adversary Is Scored On}
\label{apdx:layer3}

The adversary's descent direction is computed from the surrogate \eqref{eq:soft_rank} rather than from $\w^\top\r$ directly, while the policy is still trained on $\w^\top\r$.
{The two ingredients play separate roles: UEG is a positivity-preserving update rule for the Dirichlet parameters, whereas~\eqref{eq:soft_rank} is a batch-relative, per-objective feedback signal chosen to reduce sensitivity to reward scales, and not the exact payoff of~\eqref{eq:minimax}.
% Because the comparison samples are treated as fixed when credit is assigned to each sampled preference, the update is a stochastic gradient heuristic rather than an unbiased gradient of~\eqref{eq:minimax}: the minimax formulation is what motivates it, and the sampler it adapts is evaluated empirically in Section~\ref{sec:exp}.}
{ Because the comparison samples are treated as fixed when credit is assigned to each sampled preference, the update is a stochastic semi-gradient heuristic rather than an unbiased gradient of~\eqref{eq:minimax}: the minimax formulation motivates it, and the adaptive sampler is evaluated empirically in Section~\ref{sec:exp}.}}
{The baseline plays no role in this: for independent returns the same-batch mean of Algorithm~\ref{alg:maestro} only rescales the estimate by $1-1/B_{\mathrm{adv}}$, which the step size absorbs, and a leave-one-out baseline would remove that rescaling but not the coupling above.}
The statistic interpolates between two regimes.
When $\tau\,|r^{(m)}_j - r^{(m)}_{j'}| \ll 1$ within the batch, $\sigma(\tau d) \approx \tfrac12 + \tfrac{\tau}{4}d$ and $\sum_{j'\neq j}(r^{(m)}_j - r^{(m)}_{j'}) = B_{\mathrm{adv}}(r^{(m)}_j - \bar r^{(m)})$ give $\tilde r^{(m)}_j \approx \tfrac12 + \tfrac{\tau}{4}\tfrac{B_{\mathrm{adv}}}{B_{\mathrm{adv}}-1}(r^{(m)}_j - \bar r^{(m)})$, with $\bar r^{(m)}$ the batch mean, and hence
\begin{equation}
    {R_j} \;\approx\; \tfrac12 + \tfrac{\tau}{4}\tfrac{B_{\mathrm{adv}}}{B_{\mathrm{adv}}-1}\bigl(\w_j^\top\r_j - \w_j^\top\bar{\r}\bigr):
    \label{eq:softrank_linear}
\end{equation}
% the slope is common to all objectives and each objective's level is removed, so the adversary targets the worst case of the centered return $J_\w - \w^\top\bar{\r}$.
the slope is common to all objectives and each objective's level is removed, so {the feedback in this regime reduces, up to an affine transformation that the baseline and the step size absorb, to the batch-centered scalarized reward $\w_j^\top(\r_j - \bar{\r})$.
The centering is by the batch mean and therefore moves with the sampled preferences, so this characterizes the signal rather than a fixed objective the adversary minimizes.}
When the spread is large compared with $1/\tau$, the statistic approaches the rank itself, which is invariant to any strictly increasing transformation of each objective, the property on which BOtied~\citep{park2024botied} builds its multi-objective criterion.

% Removed: the expected-gradient argument below does not apply to the batch-rank setting, and the claims
% that the gradient points along $\e_m$ and that a monotone transform preserves its direction are not correct.
% Why the rank is taken per objective is visible in the expected adversary gradient, whose $i$-th component is $\bar r_i/S - \alphaB^\top\bar{\r}/S^2$ with $S = \sum_i \alpha_i$: it points along $\e_m$ whenever one $\bar r_m$ dominates in magnitude, and a transform applied to $\w^\top\r$ after scalarization rescales $\gh$ without changing that direction.
% Outside the realizability of Proposition~\ref{prop:pareto_recovery}, where the surrogate leaves the solution unchanged, we therefore claim only that the adversary targets the worst case on this per-objective scale, and not that it coincides with the worst case of~\eqref{eq:objective}.
% {Under the simultaneous realizability of Proposition~\ref{prop:pareto_recovery} the target policy is optimal regardless of the sampling distribution, so this choice of feedback does not change what the policy converges to; outside it, we claim only that the adversary targets the worst case on this per-objective scale, and not that this coincides with the worst case of~\eqref{eq:objective}.}
{Under exact simultaneous realizability (Proposition~\ref{prop:pareto_recovery} with $\epsilon = 0$), a pointwise-optimal policy remains a population optimum under any fixed sampling distribution.
Outside this regime, we interpret~\eqref{eq:soft_rank} only as a heuristic that shifts sampling toward preferences receiving lower batch-relative feedback.}

\end{document}